\PassOptionsToPackage{pagebackref=true,hyperindex}{hyperref}

\documentclass[mathfont=newtx,accepted]{uai2026} 

\usepackage[british]{babel}

\usepackage[round]{natbib} 
\usepackage{mathtools} 
\usepackage{booktabs} 
\usepackage{tikz} 

\usepackage{microtype}          
\usepackage{pifont}             
\usepackage[normalem]{ulem}     
\usepackage{xspace}             
\usepackage{enumitem}           

\usepackage{xcolor}

\definecolor{teal}{rgb}{0.0, 0.5, 0.5}
\definecolor{amethyst}{rgb}{0.6, 0.4, 0.8}
\definecolor{thulianpink}{rgb}{0.87, 0.44, 0.63}
\definecolor{tiffanyblue}{rgb}{0.04, 0.73, 0.71}
\definecolor{pear}{rgb}{0.82, 0.89, 0.19}
\definecolor{applegreen}{rgb}{0.55, 0.71, 0.0}
\definecolor{burgundy}{rgb}{0.5, 0.0, 0.13}
\definecolor{persianindigo}{rgb}{0.2, 0.07, 0.48}

\hypersetup{
  colorlinks=true,
  linkcolor=thulianpink,
  filecolor=pear,
  urlcolor=tiffanyblue,
  citecolor=amethyst,
}

\renewcommand*{\backref}[1]{}
\renewcommand*{\backrefalt}[4]{%
  \ifcase #1%
  \or Cited on page~#2.%
  \else Cited on pages~#2.%
  \fi%
}

\usepackage{amsmath,amssymb,amsfonts,amsthm}
\usepackage{mathtools}
\usepackage{nicefrac}           
\usepackage{thmtools,thm-restate}

\newcommand{\Z}{\mathbf{Z}}
\newcommand{\z}{\mathbf{z}}
\newcommand{\x}{\mathbf{x}}
\newcommand{\hatz}{\hat{\mathbf{z}}}
\newcommand{\R}{\mathbb{R}}
\newcommand{\M}{\mathcal{M}}

\DeclareMathOperator{\Cov}{Cov}
\DeclareMathOperator{\Corr}{Corr}

\newcommand{\Var}{\sigma^2}           
\DeclareMathOperator{\sgn}{sgn}
\newcommand{\mcc}{\ensuremath{\mathrm{MCC}}}
\newcommand{\rsq}{\ensuremath{R^2}}
\newcommand{\dci}{\ensuremath{\mathrm{DCI}}}

\usepackage{amsmath,amsfonts,bm}

\def\1{\bm{1}}

\def\rvb{{\mathbf{b}}}

\def\rvx{{\mathbf{x}}}

\def\rvz{{\mathbf{z}}}

\def\rmA{{\mathbf{A}}}

\def\rmD{{\mathbf{D}}}

\def\rmI{{\mathbf{I}}}

\def\rmP{{\mathbf{P}}}

\def\rmU{{\mathbf{U}}}
\def\rmV{{\mathbf{V}}}

\DeclareMathAlphabet{\mathsfit}{\encodingdefault}{\sfdefault}{m}{sl}
\SetMathAlphabet{\mathsfit}{bold}{\encodingdefault}{\sfdefault}{bx}{n}

\def\sR{{\mathbb{R}}}

\newcommand{\cN}{\mathcal{N}} 

\usepackage{graphicx}
\usepackage{svg}
\usepackage{wrapfig}
\usepackage{float}
\usepackage{caption}
\usepackage{subcaption}

\usepackage{booktabs}
\usepackage{multirow}
\usepackage{tabularx}
\usepackage{colortbl}
\usepackage{makecell}
\usepackage{tablefootnote}
\usepackage{arydshln}
\usepackage{longtable}
\usepackage{ragged2e}

\newcolumntype{H}{>{\centering\arraybackslash}p{0.34cm}}
\newcolumntype{B}{|>{\centering\arraybackslash}p{0.34cm}}
\newcolumntype{G}[1]{>{\columncolor{applegreen!5}\raggedright\arraybackslash\textsc}p{#1}}

\usepackage{algorithm}
\usepackage{algorithmic}

\let\classAND\AND
\let\AND\relax

\let\AND\classAND
\AtBeginEnvironment{algorithmic}{\let\AND\algoAND}

\usepackage{natbib}
\usepackage[capitalize]{cleveref}
\crefname{section}{\S}{\S}
\crefname{subsection}{\S}{\S}
\crefname{subsubsection}{\S}{\S}
\crefname{figure}{Fig.}{Figs.}
\crefname{prop}{Prop.}{Props.}
\crefname{proposition}{Prop.}{Props.}
\crefname{appendix}{Appx.}{Appxs.}
\crefname{algorithm}{Alg.}{Algs.}
\crefname{theorem}{Thm.}{Thms.}
\crefname{conjecture}{Conj.}{Conjs.}
\crefname{researchquestion}{Q.}{Qs.}
\crefname{definition}{Defn.}{Defns.}
\crefname{corollary}{Cor.}{Cors.}
\crefname{lem}{Lem.}{Lems.}
\crefname{table}{Tab.}{Tabs.}
\crefname{assum}{Assum.}{Assums.}
\crefname{example}{Ex.}{Exs.}
\crefname{property}{Property}{Properties} 

\usepackage{fontawesome5}

\usepackage{tikz}
\usetikzlibrary{arrows.meta,positioning,fit,calc}

\usepackage[most]{tcolorbox}
\tcbuselibrary{skins,breakable}
\newtcolorbox{conceptbox}[1]{%
  enhanced jigsaw,
  breakable,
  colback=applegreen!5,
  colframe=applegreen!50!black,
  coltitle=black,
  colbacktitle=applegreen!14,
  title=\textsc{#1},
  boxrule=0.5pt,
  arc=2pt,
  left=4pt, right=4pt, top=2pt, bottom=2pt,
  titlerule=0pt
}

\usepackage{epigraph}

\usepackage{todonotes}
\usepackage{marginnote}

\newcommand{\cmark}{\ding{51}}
\newcommand{\xmark}{\ding{55}}

\newcommand{\authorcomment}[3]{%
  \noindent
  {\color{#1}%
    \textbf{\fbox{#2}}\;
    \textit{#3}%
  }%
}
\newcommand{\sj}[1]{%
  \authorcomment{amethyst}{Shruti}{#1}%
}

\newcommand{\patrik}[1]{%
  \authorcomment{cyan}{Patrik}{#1}%
}

\newcommand{\pb}[1]{%
  \authorcomment{orange}{Phil}{#1}%
}

\colorlet{crl}{persianindigo}
\colorlet{beyond}{thulianpink}

\colorlet{oo}{teal}
\colorlet{mo}{amethyst}
\colorlet{om}{burgundy}

\makeatletter
\newcommand{\dgp@crlsym}[1]{\ifcase#1\or\perp\or\rho\fi}
\newcommand{\dgp@beyondsym}[1]{\ifcase#1\or\or\or f\or F\fi}
\newcommand{\dcrl}[1]{\textcolor{crl}{$\mathbf{D}_{\dgp@crlsym{#1}}$}}
\newcommand{\dbeyond}[1]{\textcolor{beyond}{$\mathbf{D}_{\dgp@beyondsym{#1}}$}}
\makeatother

\newcommand{\ecrl}[2]{\textcolor{crl}{\textbf{E}}\textcolor{#2}{\textbf{#1}}}
\newcommand{\ebeyond}[2]{\textcolor{beyond}{\textbf{E}}\textcolor{#2}{\textbf{#1}}}

\newcommand{\rand}[1]{\textcolor{gray}{\textbf{E#1}}}

\newcommand{\parless}[1]{\noindent\textbf{#1}}

\newcommand{\cG}{\mathcal{G}}

\newcommand{\circuit}[1]{\\[1pt]%
  {\small\textcolor{gray!70!black}{%
  \hspace*{1em}$\triangleright$\enspace #1}}}

\newtheorem{property}{Property}
\newtheorem{definition}{Definition}
\newtheorem{proposition}{Proposition}

\title{Who Guards the Guardians? \\ The Challenges of Evaluating Identifiability of Learned Representations}

\author[1]{Shruti Joshi}
\author[1]{Th\'eo Saulus}
\author[2]{Wieland Brendel}
\author[1]{Philippe Brouillard}
\author[1]{Dhanya Sridhar}
\author[2]{Patrik Reizinger}
\affil[1]{%
    Mila - Qu\'ebec AI Institute \& Universit\'e de Montr\'eal
}
\affil[2]{%
    Max-Planck-Institute for Intelligent Systems,  ELLIS Institute T\"ubingen, University of T\"ubingen
}
  
\begin{document}
\addtocontents{toc}{\protect\setcounter{tocdepth}{-1}}
\maketitle

\begin{abstract}

Identifiability in representation learning is commonly evaluated using standard metrics (e.g., \mcc, \rsq, \dci) on synthetic benchmarks with known ground-truth factors. These metrics are assumed to reflect recovery up to the equivalence class guaranteed by identifiability theory. We show that this assumption holds only under specific structural conditions: each metric implicitly encodes assumptions about both the data-generating process (DGP) and the encoder. When these assumptions are violated, metrics become misspecified and can produce systematic false positives and false negatives. Such failures occur both within classical identifiability regimes and in post-hoc settings where identifiability is most needed. We introduce a taxonomy separating DGP assumptions from encoder geometry, use it to characterize the validity domains of existing metrics, and release an evaluation suite for reproducible stress testing and comparison.

\end{abstract}


\begin{figure*}
    \includegraphics[width=\linewidth]{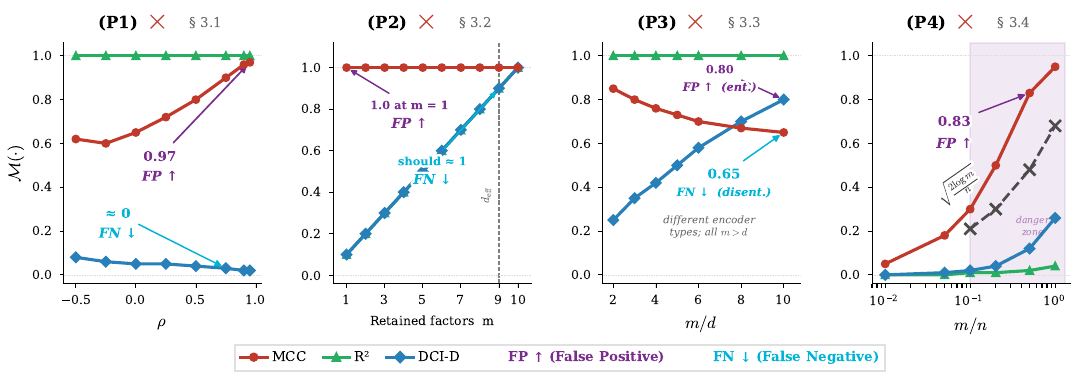}
    \caption{\textbf{Every identifiability metric fails under at least one common evaluation setting.}
We test four desiderata (\cref{prop:rho-invariance,prop:deff-sensitivity,prop:oc-invariance,prop:false-positive}) using controlled synthetic encoders that isolate metric behaviour from optimisation artefacts.
\textbf{(P1)}~Latent correlation: \mcc{} conflates correlation with identifiability (FP\,$\uparrow$ to $0.97$); \dci{}-D penalises it (FN\,$\downarrow$).
\textbf{(P2)}~Factor dropping: \dci{}-D reports perfect disentanglement even when $9$ of $10$ factors are lost.
\textbf{(P3)}~Overcompleteness: \mcc{} inflates for entangled encoders; \dci{}-D deflates for disentangled ones.
\textbf{(P4)}~Null encoder: all metrics inflate as $m/n$ grows, with \mcc{} scaling in the order of $\sqrt{2\log m / n}$.
Only \rsq{} is robust across (P1), (P3), and (P4), but shares the (P2) limitation.
No single metric is trustworthy across all settings.}
    \label{fig:four-panel-summary}
\end{figure*}

\section{Introduction}
Learning representations that are interpretable, modular, and controllable is a long-standing goal across machine learning.
Identifiability formalises this objective: a representation achieves these properties when it recovers the ground-truth generative factors uniquely, up to a specified equivalence class \citep{comon1994independent, hyvarinen1999nonlinear}.
Strong identifiability guarantees now exist for nonlinear representation learners under auxiliary information \citep{hyvarinen2019nonlinear, khemakhem2020variational}, temporal structure \citep{hyvarinen2016unsupervised}, mechanism
sparsity \citep{lachapelle2022disentanglement}, or for restricted classes of models \citep{khemakhem2020icebeemidentifiableconditionalenergybased, marconato2024all}.
Causal representation learning (CRL)~\citep{scholkopf2021toward} builds on these foundations by additionally requiring that the identified factors admit a causal semantics---typically as variables in a structural causal model with predictable responses under interventions and distribution shifts \citep{arjovsky2020invariantriskminimization, peters2016causal}. 
These results have wide-reaching implications and are increasingly adopted in fields such as mechanistic interpretability \citep{elhage2022toy}, where identifiability of learned features is now recognised as a prerequisite for reliable interpretation \citep{song2025position, joshi2025identifiablesteeringsparseautoencoding}, and in the analysis of pretrained representations more broadly \citep{roeder2021linear}.

In practice, these theoretical guarantees are validated empirically. Given ground-truth factors $\z \sim p(\z) \in \R^d$  and learned representation codes $\hat{\rvz} \in \mathbb{R}^m$, a metric $\mathcal{M}(\rvz, \hat{\rvz}) \to [0, 1]$ returns a scalar interpreted as the degree of identifiability.
The standard protocol is to compute $\mathcal{M}$ on a synthetic benchmark with known $\rvz$, and interpret a high score as evidence that the encoder has recovered the true factors up to a specified equivalence class, e.g., permutation and rescaling.

However, this puts all faith into the metrics---\emph{``Who guards the guardians?''}\footnote{``Quis custodiet ipsos custodes?''---Juvenal, \emph{Satires}~VI.} Each metric encodes structural assumptions about the latent factor distribution $p(\rvz)$, the relationship between ground-truth and learned representation dimensionalities ($d$ and $m$), the sample size $n$, and the equivalence class targeted. Yet these assumptions are typically left implicit: papers routinely report a single metric score---\mcc{} \citep{khemakhem2020icebeemidentifiableconditionalenergybased}, \rsq{}, or \dci{}-D \citep{eastwood2018framework}---as evidence of identifiability, without verifying if the evaluation setting is consistent with the metric's validity domain. Prior work has observed that metrics can disagree on method rankings and are sensitive to factors such as nonlinearity strength and hyperparameter choice \citep{sepliarskaia2019not, carbonneau2022measuring}, and that specific metrics produce false positives when latent factors are statistically related \citep{yao2025third}. However, these remain empirical observations tied to particular settings; no prior work characterises \emph{when} and \emph{why} failures arise, nor whether they reflect systematic misspecification predictable from each metric's design. A theorem may guarantee recovery despite correlated factors or only up to an affine transform, whereas, e.g., using \mcc{} targets axis-aligned recovery of independent factors---a strictly stronger assumption whose violation produces systematically wrong scores due to a structural mismatch between what the metric measures and what the experiment intends to measure.
This leads to the question:
\begin{quote}
\emph{Can the structural conditions under which a metric faithfully
measures identifiability be characterised, and can these conditions
be used to predict when false positives and false negatives will arise?}
\end{quote}
We show that the answer is yes: each metric's failure modes follow predictably from its encoded assumptions.

\parless{Structural misspecification.} When a metric's encoded assumptions do not match the latent factor structure ($p(\rvz)$) or the properties of the encoder producing $\hat{\rvz}$, we say the metric is \emph{misspecified} for that evaluation setting. Unlike finite-sample noise, misspecification is a population-level property that would persist even when the number of samples $n \to \infty$, producing \emph{false positives} (high scores despite lack of identifiability) or \emph{false negatives} (low scores despite identifiability up to the desired equivalence class). 
To predict when and how misspecification arises, we organise assumptions along two orthogonal axes: (i)~\emph{latent factor structure}---whether ground-truth factors are independent, correlated, or linked by functional constraints that reduce the effective dimensionality below $d$; and (ii)~\emph{encoder properties}---the equivalence class, the dimensionality ratio $m/d$, and if factor information is distributed across coordinates.

\parless{Main contributions} We introduce a two-axis taxonomy (\cref{sec:taxonomy-main}) separating assumptions about latent factor structure from encoder properties, with formal desiderata for identifiability metrics (\cref{prop:rho-invariance,prop:deff-sensitivity,prop:oc-invariance,prop:false-positive}). Through controlled synthetic experiments that isolate metric behaviour from optimisation artefacts, we show that no existing metric satisfies all desiderata and characterise precisely how each fails. We derive closed-form analyses showing that (i)~\mcc{} approaches $1$ when latent factors are highly correlated, even when the encoder remains entangled (\cref{subsec:correlation}), and (ii) the expected \mcc{} under an encoder producing random representations independent of the ground truth is governed by the representation-to-sample ratio ($m/n$) (\cref{subsec:false-positive}). \dci{}-D is similarly inflated for entangled encoders when $m > d$ (\cref{subsec:overcomplete}). We also find that a fundamental limitation of all metrics is that they cannot distinguish lossless compression from lossy omission of latent factors when there exist multi-factor dependencies among them (\cref{subsec:undercomplete}). Detailed discussion of related work appears in \cref{sec:related-work}.

\section{A taxonomy for metric (mis)specification}
\label{sec:taxonomy-main}
Identifiable representation learning posits a two-step data generating process.

\parless{Formal setup.} Ground truth factors $\rvz$ are sampled first, and an  observation $\rvx:=g(\rvz)$ is then generated via an unknown map $g:\sR^{d}\!\to\!\sR^{n}$ \citep{hyvarinen1999nonlinear}.
A learned encoder $f : \sR^n \rightarrow \sR^m$ produces $\hat{\rvz} := f(\rvx)$. It \emph{identifies} the generative factors up to a restricted equivalence class, typically axis-aligned transformations such as permutation and componentwise rescaling, under which the representation $\hat{\rvz}$ is identified (also called disentangled) \citep{schmidhuber1992learning, dicarlo2007untangling, bengio2013representation, higgins2018towards}. We adopt the standard notion of identifiability \citep{hyvarinen1999nonlinear}.
\begin{definition}[Identifiability up to $\cG$]
\label{def:identifiability} For $\cG$, a class of transformations acting on $\R^d$ and some $h \in \cG$, where $h : \R^d \to \R^d$, the encoder~$f$ \emph{identifies} the latent factors up to~$\cG$ if $f \circ g = h$.
\end{definition}


Three standard equivalence classes are: (i) \emph{Permutation and rescaling} ($\cG_{\mathrm{perm}}$): $h(\z) = \rmP \rmD \z$ where $\rmP$ is a permutation matrix and $\rmD$ a diagonal scaling matrix, (ii) \emph{Affine} ($\cG_{\mathrm{aff}}$):
  $h(\z) = \rmA\z + \rvb$ with $\rmA$ invertible, and (iii) \emph{Elementwise nonlinear} ($\cG_{\mathrm{nl}}$):
  $h(\z) = (h_1(z_{\pi(1)}), \ldots, h_d(z_{\pi(d)}))$ where each $h_j$ is a smooth invertible function.
All three assume $m = d$.
When $m \neq d$ \citep{eastwood2023dciesextendeddisentanglementframework, chen2025causalverse}, \cref{def:identifiability} does not apply directly, which we extend to partial and overcomplete recovery in \cref{sec:encoder-taxonomy}.

Each metric $\M$ implicitly targets one of these three equivalence classes, and using a metric outside its target class produces systematically wrong scores. Based on a systematic review of the causal representation learning and nonlinear ICA literature (\cref{apx:review}), we study the three most commonly used metrics: \mcc{} (in two variants: \mcc{}-P based on Pearson correlation,
and \mcc{}-S based on Spearman rank correlation), \rsq, and \dci{}-D (the disentanglement component). \mcc{} computes an optimal one-to-one matching of codes to factors via pairwise correlations, hence targeting elementwise identifiability: (i) in $\cG_{\mathrm{perm}}$ (both \mcc{}-P and \mcc{}-S), (ii) and in $\cG_{\mathrm{nl}}$ (\mcc{}-S). \rsq is often used by training a linear probe from $\hat{\rvz}$ to $\rvz$ and measures explained variance, hence used for evaluating linear identifiability under $\cG_{\mathrm{aff}}$---it cannot distinguish between $\cG_{\mathrm{aff}}$ and $\cG_{\mathrm{perm}}$ \dci{}-D trains a probe (linear or nonlinear, e.g., gradient boosted trees (GBT) \citep{natekin2013gradient}) to predict each ground-truth factor from the learned codes, then measures how concentrated the resulting feature importances are: a score of~$1$ means each code is important for predicting at most one factor. Unlike \mcc{}, \dci{}-D does not require
one-to-one code--factor alignment and can handle $m \neq d$, but it remains sensitive to how the probe distributes importance across coefficients---correlated or entangled codes spread importance across multiple factors, deflating the score even when all information is preserved (\cref{sec:results}).

To predict metric failures, we consider two orthogonal axes: the \emph{latent factor structure}---whether factors are independent, correlated, or linked by deterministic constraints---and the \emph{encoder geometry}---the equivalence class, dimension ratio $m/d$, and how factor information is distributed across codes.
We define each axis in turn. We use a simple physical system as a running example throughout this section to illustrate how the DGP types and encoder geometries defined below arise naturally in practice.


\begin{conceptbox}{Running Example: A Circuit with a resistor.}
\label{box:circuit}
\textbf{Setting.}\enspace Current flows through a resistor, causing it to dissipate heat and exchange energy with its environment. Sensors (ammeter, voltmeter, thermometer, thermal camera) record the circuit state, producing observations~$\x$.

\medskip
\textbf{Factors.}\enspace Four physical quantities influence measurements: $T$ (ambient temperature), $R$ (resistance),  $I$ (current), and $V$ (voltage).
All four are independently measurable, but not independently variable.
Two physical laws constrain them: $R = R_0\bigl(1 + \alpha\,(T - T_0)\bigr)$ and $V = IR$,
where $R_0$ is the resistance at reference temperature~$T_0$ and $\alpha$ is the material's temperature coefficient of resistance.
So, the factor set $(T, R, I, V)$ has four entries but two degrees of freedom.

\medskip
\textbf{Why this matters.}\enspace An unsupervised learner has no access to Ohm's law.
If it discovers a feature tracking~$V$, that feature is correct even though $V$ is, in principle, determined by $I$ and~$R$. 
Recovering all four factors reflects the DGP at a particular level of description; a different granularity, say, retaining only $(T, I)$, is equally valid.
Which level is appropriate depends on the downstream task and cannot be determined from the representation alone.
\end{conceptbox}

\subsection{Factor dependencies reduce effective dimensionality}
\label{sec:dgp-taxonomy}

Standard disentanglement benchmarks sample each latent factor independently (\dcrl{1}) \citep{dsprites17, 3dshapes18, gondal2019transfer}. However, identifiability theorems do not always assume this \citep{lachapelle2022disentanglement, hyvarinen2019nonlinear, morioka2023causal, khemakhem2020variational, khemakhem2020icebeemidentifiableconditionalenergybased, ahuja2022weakly} and permit statistical dependence (\dcrl{2}), e.g., through confounding or noisy causal mechanisms ($z_2 = f(z_1) + \varepsilon$, $\varepsilon \not\equiv 0$). In both cases, every factor retains a unique degree of freedom, so $d_{\mathrm{eff}} = d$. We argue that a third, orthogonal generalisation is equally important: factors may be linked by \emph{deterministic functional constraints} that reduce the effective dimensionality of the factor set below $d$. Such constraints arise naturally from definitional redundancies (e.g., encoding position on both linear and logarithmic scales) and physical laws (see the running example box). This is generic in unsupervised settings where the target factors are not known a priori.



\parless{Setup.} Two DGP types standard in the literature that define the regime where identifiability theory operates.

\begin{itemize}[leftmargin=*,noitemsep]
\item \textbf{\dcrl{1} --- Independent factors.}\enspace
  Factors vary independently; each contributes a unique degree of
  freedom.
  This is the implicit assumption behind most metrics.
 \circuit{} $T$ and $I$ are set by independent exogenous sources.
\item \textbf{\dcrl{2} --- Correlated factors.}\enspace
  Factors are statistically dependent but each retains a unique degree
  of freedom; no factor is a deterministic function of the others.
 \circuit{} A thermostat induces a correlation between $T$ and $I$.

\end{itemize}


\parless{Extended setup for unknown abstraction level.} The standard settings above assume that each factor contributes independent information. In practice, however, factors may be linked by deterministic relationships that reduce the effective dimensionality below~$d$. This can happen both when we know the ground truth latent factor set, and when they are not known a priori. In such a case, as in the circuit example, a learner with no knowledge of Ohm's law might reasonably include both resistance $R$ and temperature~$T$ as separate factors, unaware that $R = R_0(1 + \alpha\, T)$. In either case, some factors carry no independent information, and metrics that treat every factor as a free degree of freedom will be misspecified.

\begin{definition}[Effective dimensionality]
\label{def:effective-dim}
For latent factors $\z \in \R^d$ subject to $k$ independent smooth constraints $c_1(\z) = 0, \ldots, c_k(\z) = 0$, the \emph{effective dimensionality} is $d_{\mathrm{eff}} = d - k$, i.e., the number of factors that can vary freely.
Under \dcrl{1} and \dcrl{2}, $d_{\mathrm{eff}} = d$.
Under \dbeyond{3}/\dbeyond{4}, $d_{\mathrm{eff}} < d$.
\end{definition}
\begin{itemize}[leftmargin=*,noitemsep]

\item \textbf{\dbeyond{3} --- Single-factor constraint.}\enspace
  One factor is a deterministic function of exactly one other, $d_{\mathrm{eff}} = 1$.
  \circuit{} $R = R_0(1 + \alpha\, T)$.

\item \textbf{\dbeyond{4} --- Multi-factor constraint.}\enspace
  A deterministic relationship involves multiple factors, reducing $d_{\mathrm{eff}}$ further.
  \circuit{} $V = IR$: voltage is determined jointly by current and resistance, so $(T, R, I, V)$ has $d_{\mathrm{eff}} = 2$.

\end{itemize}
\dcrl{1} and \dcrl{2} address statistical relationships among independently varying
factors ($d_{\mathrm{eff}} = d$). \dbeyond{3} and \dbeyond{4} address settings where deterministic constraints reduce $d_{\mathrm{eff}} < d$, also addressing what happens when the abstraction level is unknown. Although single- and multi-factor dependencies might not seem qualitatively different, metrics behave differently (c.f. \cref{subsec:undercomplete}). Formal description in \cref{apx:dgp-taxonomy}.

\subsection{Encoder Structure and Dimension Mismatch}
\label{sec:encoder-taxonomy}
Identifiability theory typically assumes $m = d$: the encoder's output dimension matches the number of latent factors.
In practice---particularly when
disentangling representations from pretrained models where $d$ is unknown---the common regime is $m > d$, often $m \gg d$.
We organise encoders along two axes: the \emph{equivalence class} up to which factors are identified, and the \emph{dimension ratio} $m/d$.

\parless{Matched dimension.} The equivalence classes from \cref{def:identifiability} define three
encoder types at $m = d$. 

\begin{itemize}[leftmargin=*,noitemsep]

\item \textbf{\ecrl{1}{oo} --- Elementwise linear.}\enspace
  Recovery up to $\cG_{\mathrm{perm}}$
  (\cref{def:identifiability}~(i)).
  This is the strongest form of identifiability that can be guaranteed.
  Ideally, every metric should score~$1$ here; any that does not has an intrinsic calibration defect.
  \circuit{$(2T,\; -R,\; 0.5I,\; 3V)$.}

  \item \textbf{\ecrl{2}{oo} --- Elementwise nonlinear.}\enspace
  Recovery up to $\cG_{\mathrm{nl}}$
  (\cref{def:identifiability}~(iii)).
  Each code is a smooth
  invertible function of exactly one factor.
  The parameter~$\alpha$
  (\cref{tab:experiment-parameters}) controls the degree of
  nonlinearity; $\alpha = 0$ reduces to \ecrl{1}{oo}.
  \circuit{$(\tanh T,\; R^3,\; \sqrt[3]{I},\; \sinh V)$.}

\item \textbf{\ecrl{3}{om} --- Linearly entangled.}\enspace
  Recovery up to $\cG_{\mathrm{aff}}$
  (\cref{def:identifiability}~(ii)).
  All factor information is
  preserved, but distributed across coordinates via rotation or
  shearing.
  The degree of entanglement is controlled by the condition number $\kappa$ of~$\rmA$
  (\cref{tab:experiment-parameters}): $\kappa = 1$ reduces to \ecrl{1}{oo}.
  \circuit{$A\,(T,R,I,V)^\top$: every code mixes all factors.}
\end{itemize}

\parless{Dimension mismatch breaks coordinate-wise evaluation.}
In practice, $m$ may differ from~$d$, motivating a more general notion of identifiability.

\begin{definition}[Identifiability under dimension mismatch]
\label{def:identifiability-general}
Let $S \subseteq \{1,\ldots,d\}$ and let $\cG$ follow from
\cref{def:identifiability}.  The encoder~$f$ identifies
the factors~$S$ up to~$\cG$ if there exist
$T \subseteq \{1,\ldots,m\}$ with $|T| = |S|$ and $h \in \cG$ such that,
\vspace{-.5em}
\[
  \pi_T \circ f \circ g \;=\; \pi_S \circ h,
\vspace{-.2em}
\]
where $\pi_T \colon \R^m \to \R^{|T|}$ and
$\pi_S \colon \R^d \to \R^{|S|}$ select the coordinates
indexed by~$T$ and~$S$ resp. Since $h$ may permute coordinates, $S$ identifies which factors are recovered but not which codes
carry them.  When $m = d$ and $T = S = \{1,\ldots,d\}$, both projections become identity, and this reduces to \cref{def:identifiability}.
\end{definition}

\noindent
In words: among the $m$ learned codes in~$\hat{\rvz}$, there exist $|S|$ of them (indexed by~$T$) that together recover the factors in~$S$ up to the allowed transformation class~$\cG$; the remaining $m - |S|$ codes are ignored. The choice of~$h$ in \cref{def:identifiability-general}
inherits from \cref{def:identifiability}: for identifiability up to $\cG_{\mathrm{perm}}$, the composition $\pi_T \circ f \circ g$ recovers each factor up to permutation and rescaling; for $\cG_{\mathrm{nl}}$, up to a smooth monotonic nonlinear function; and under $\cG_{\mathrm{aff}}$, $\pi_T \circ f \circ g$ may return an invertible linear
mix of the factors in~$S$ rather than individual factors.
Next, we can define dimensionality-mismatched encoders.
\begin{itemize}[leftmargin=*,noitemsep, topsep=2pt]
    \item \emph{\ebeyond{4}{oo}} ---\textbf{Undercomplete}.
    The encoder outputs fewer dimensions than there are
    ground-truth factors ($m < d$), so $|S| < d$: some factors are unrecoverable regardless of $\pi_T$. While this is lossy in the standard sense, it may be a valid lossless compression of information in the case of redundant ground truth latent factors. 
    E.g., under \dbeyond{3} or
    \dbeyond{4}, the ground-truth factors contain deterministic redundancies, so an encoder that recovers all  $d_{\mathrm{eff}}$ independently varying factors already captures the full information of $\rvz$
    (\cref{def:effective-dim}).  \cref{def:identifiability-general} reports only \emph{which} factors appear in~$S$; judging whether $|S| \geq d_{\mathrm{eff}}$ constitutes lossless recovery requires additionally knowing the constraint structure of the DGP.
    No current metric makes this distinction: all treat $|S| < d$ uniformly, whether the omitted factors are redundant or independently informative.
    \circuit{$(T,\; I)$: $|S| = 2 = d_{\mathrm{eff}}$.}

    \item \emph{\ebeyond{8}{mo}} --- \textbf{Distributed}.\enspace
        A type of \emph{overcomplete} ($m > d$) code.
        Ground-truth factors are recoverable only through a
        \textcolor{amethyst}{many-to-one} map, multiple codes
        jointly encode a single factor, and $r$ must aggregate across them.
        Coordinate-wise metrics implicitly assume
        each factor is encoded by a single code.
        \circuit{$(a_1, a_2, T, I, V)$ where
        $V = \sqrt{a_1^2 + a_2^2}$ is fully
        determined by  $(a_1, a_2)$, yet neither alone
        predicts~$V$.}
    
\end{itemize}

Additional overcomplete geometries---linear duplication
(\ebeyond{5}{mo}), nonlinear duplication (\ebeyond{6}{mo}), and linear superposition (\ebeyond{7}{om})---are constructed and evaluated in \cref{sec:results}.   

\noindent
\textbf{\rand{9} --- Control baseline.}\enspace
$\hatz \sim \mathrm{Uniform}([0,1]^m)$, independent of~$\x$.
Every metric should return~${\approx}\,0$.
\noindent
With these definitions in hand, we can formally characterise metric failures.
Formal constructions in \cref{apx:encoder-taxonomy}.


\section{Metrics as Measurement Instruments}\label{sec:results}
We study the structural sensitivity of identifiability metrics through controlled synthetic experiments. In each experiment, we sample ground-truth factors $\rvz \in \sR^d$ according to a DGP type (\dcrl{1}--\dbeyond{4}) and construct representations $\hat{\rvz} = T(\rvz)$ via a transformation matching the encoder type (\ecrl{1}{oo}--\ebeyond{10}{oo}). \textit{The representation encoder is not learned}. This design isolates metric misspecification from optimisation artefacts: every failure we observe is a property of the metric, not of training. Unless otherwise noted, we report results for $n = 1000$ samples, $d = 5$ ground-truth factors, and average over 5 seeds; confidence bands show $95\%$ intervals. For metrics requiring a trained predictor (\dci, \rsq),
data is split into ($80/20$) training  and test sets. Full experimental details and parameter definitions are in \cref{apx:expts}\footnote{We will release a unified implementation of all metrics with improved robustness, and our metric evaluation suite upon acceptance.}.

\begin{table}[t]
\centering
\caption{\textbf{$m/d$ (overcompleteness ratio), $d/n$ (sample ratio), and $m/n$ (representation-to-sample ratio). }}
\label{tab:experiment-parameters}
\vspace{-.5em}
\footnotesize
\setlength{\tabcolsep}{2pt}
\renewcommand{\arraystretch}{1.0}
\begin{tabular}{@{}cp{5.2cm}c@{}}
\toprule
\textbf{Sym.} & \textbf{Meaning} & \textbf{Range} \\
\midrule
\multicolumn{3}{@{}l}{\textsc{Scaling parameters}}\\[2pt]
$n$ & \# i.i.d.\ paired samples & $50$--$10\mathrm{k}$\\
$m$ & Dim.\ of $\hat{\rvz}\!\in\!\mathbb{R}^m$ & $1$--$200$\\
$d$ & \# ground-truth factors $\rvz\!\in\!\mathbb{R}^d$ & $2$--$20$\\
\midrule
\multicolumn{3}{@{}l}{\textsc{Complexity parameters}}\\[2pt]
$\rho$ & Pairwise correlation (\dcrl{2}); off-diagonal entries of $\Sigma$ & $(-1,1)$\\[3pt]
$\alpha$ & Nonlinearity strength (\ecrl{2}{oo}); $\alpha{=}0$: linear, $\alpha{=}1$: fully nonlinear\newline {\scriptsize$\hat{z}_j\!=\!(1{-}\alpha)s_j z_{\pi(j)}{+}\alpha\,h_j(z_{\pi(j)})$} & $[0,1]$\\[3pt]
$\kappa$ & Condition \# of mixing matrix (\ecrl{2}{om}, \ebeyond{7}{om}); $\kappa{=}1$: orthogonal, $\kappa{=}50$: ill-cond.\newline {\scriptsize$A\!=\!U\mathrm{diag}(\mathrm{linspace}(1,\kappa^{-1}\!,d))V^\top$} & $1$--$50$\\
\bottomrule
\vspace{-1em}
\end{tabular}
\end{table}

\parless{Metrics evaluated.} We evaluate the metrics introduced in \cref{sec:taxonomy-main}, grouped into four families: \emph{correlation-based} (\mcc{}-P/S, \mcc{}-RDC  \citep{lopez2013randomized}), regression-based (\dci{}-D, \rsq{}), and (mutual information) MI-based (MIG \citep{chen2018isolating}, InfoMEC \citep{hsu2023disentanglement}), and conditional independence testing based (T-MEX \citep{yao2025third}). In main text, we focus on the commonly used metrics spanning the first two families: \mcc{}, \dci{}-D, and \rsq{}.

\begin{figure*}[t]
    \centering
    \includegraphics[width=\linewidth]{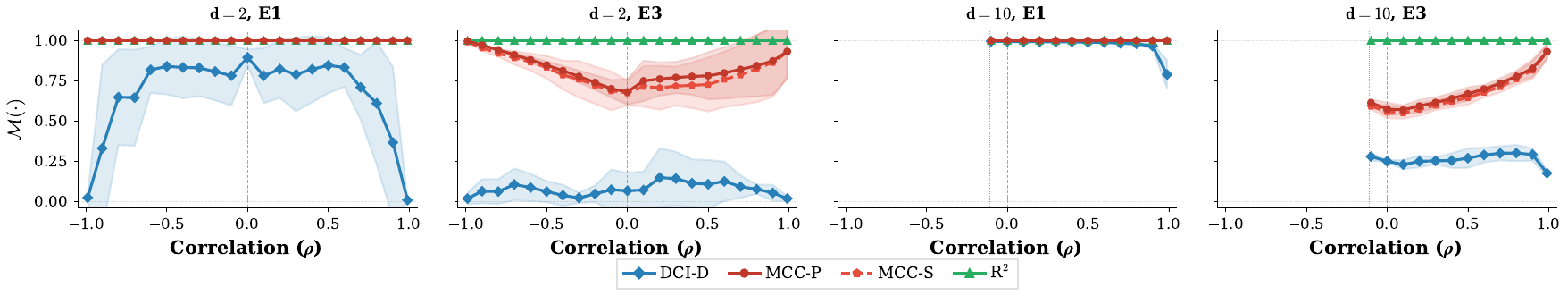}
    \vspace{-.5em}
    \caption{
    \textbf{\mcc{} conflates correlation with
    identifiability.} Under \ecrl{3}{om}, \mcc{} increases with $\rho$ and approaches the score of the perfectly disentangled encoder \ecrl{1}{oo} at high correlation, despite the encoder remaining entangled. \dci{} better separates \ecrl{1}{oo} from \ecrl{3}{om} but collapses to near-zero scores making it hard to distinguish from a non-identifiable encoder. The bias sharpens with increasing~$d$. See \cref{fig:apx-exp03-apx} for all metrics.}
    \label{fig:sign_asymmetry}
\end{figure*}

\parless{Sanity checks.} We first ask: \emph{do metric scores remain stable when the encoder perfectly recovers each factor (\ecrl{1}{oo}), but the DGP varies from independent to correlated to functionally redundant?} Any metric faithful to the equivalence class should return $\approx 1$ across \dcrl{1}--\dbeyond{4} under \ecrl{1}{oo}, since the encoder--factor relationship is identical in all cases. We find that \mcc{}-P, \mcc{}-S, and \rsq{} have outputs $\approx 1$, while \dci{}-D exhibits a systematic dip under \dbeyond{3}, particularly at small $d$ (\cref{fig:apx-exp01-main}; the dip diminishes as $d$ grows from 5 to 20 but does not vanish). The dip arises because the redundant factor creates collinearity in the regression probe, inflating the importance mass assigned to the dependent factor and reducing the disentanglement score. This persists as $n$ is increased (\cref{fig:apx-exp10-main}).
A second test is to assess sensitivity to \emph{encoder} nonlinearity rather than DGP structure; \cref{fig:apx-exp02-main} shows that flat curve with \mcc{}-S and \dci{}-D, as expected.

\subsection{Correlated and entangled latent factors lead to both false positives and false negatives}\label{subsec:correlation}

We first study how latent-factor correlation (\dcrl{2}) interacts with metric scores under encoders \ecrl{1}{oo} (perfectly disentangled) and \ecrl{3}{om} (linearly entangled).
The encoder is held fixed; only the pairwise correlation $\rho \in (-1, 1)$ among ground-truth factors varies.
Any change in the metric score is therefore a pure artifact of the latent covariance structure.

\begin{property}[Invariance to latent correlation]
\label{prop:rho-invariance}
For $\z \in \sR^d$  with pairwise correlations
$\Corr(Z_i, Z_j) = \rho_{ij}$, fix encoder $f$.
A metric $\M$ is \emph{invariant to the latent correlation structure}
if, for every encoder~$f$, $\M\bigl(\hat{\rvz},\, \rvz\bigr)$ does not depend on $\bigl(\rho_{ij}\bigr)_{i \neq j}$ and only depends on $f$.
\end{property}

Violation of \cref{prop:rho-invariance} means $\M$ conflates representation quality with the covariance structure of the DGP.

\parless{Setup: \dcrl{2} + \ecrl{1}{oo}/\ecrl{3}{om}.} Consider $d$ ground-truth factors with $\rvz \sim \mathcal{N}(\mathbf{0}, \Sigma), \Sigma_{ii} = 1, \Sigma_{ij} = \rho \text{ for } i \neq j$.
Note that the equicorrelation matrices are positive semidefinite only for $\rho \ge -1/(d{-}1)$; at $d = 10$ this gives $\rho \gtrsim -0.11$, so strongly negative correlations are infeasible at moderate~$d$.
\ecrl{1}{oo} is realised as $\hat z_j = s_j z_j$ with $s_j > 0$.
For \ecrl{3}{om}, the encoder is a full-rank linear map
$\hat{\rvz} = \rmA \rvz + \rvb$ with $\rmA = \rmU \, \mathrm{diag}(\mathrm{linspace}(1, \kappa^{-1}, d))
\, \rmV^\top$, where $\rmU, \rmV$ are random orthogonal matrices and $\kappa \ge 1$ controls the condition number (degree of entanglement).

\parless{Theoretical analysis.}
We derive a closed-form expression for \mcc{}-P under \dcrl{2}~+~\ecrl{3}{om} (\cref{apx:theory-correlation-e3}), yielding:

\begin{proposition}[\mcc{} produces false positives under correlation]
\label{prop:mcc-correlation}
Under \dcrl{2}~+~\ecrl{3}{om}, \mcc{}-P depends explicitly on~$\rho$, violating \cref{prop:rho-invariance}. Moreover, at both extremes $\rho \to +1$ and $\rho \to -1$, $\mcc{}(\rho) \to 1$, despite an entangled encoder.
\end{proposition}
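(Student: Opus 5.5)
The plan is to compute the population \mcc{}-P in closed form and then take limits. Write $\Sigma = (1-\rho)\rmI + \rho\,\1\1^\top$, let $\mathbf{a}_j^\top$ be the $j$-th row of $\rmA$, and let $s_j = \mathbf{a}_j^\top \1$ be its row sum. Since $\hat{\rvz} = \rmA\rvz + \rvb$, the offset $\rvb$ drops out of every correlation. We have $\Cov(\hat{\rvz},\rvz) = \rmA\Sigma$, so $\Cov(\hat z_j, z_k) = (1-\rho)A_{jk} + \rho s_j$. Likewise $\mathrm{Var}(\hat z_j) = \mathbf{a}_j^\top\Sigma\mathbf{a}_j = (1-\rho)\|\mathbf{a}_j\|^2 + \rho s_j^2$ and $\mathrm{Var}(z_k)=1$. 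Hence
\[
C_{jk}(\rho) \;=\; \frac{(1-\rho)A_{jk} + \rho s_j}{\sqrt{(1-\rho)\|\mathbf{a}_j\|^2 + \rho s_j^2}},
\qquad
\mcc(\rho) = \max_{\pi}\frac{1}{d}\sum_{j=1}^d \bigl|C_{j\pi(j)}(\rho)\bigr|.
\]
This is the closed form the statement refers to. Explicit dependence on $\rho$ is then immediate. At $\rho=0$ we get $C_{jk}=A_{jk}/\|\mathbf{a}_j\|$, and for a genuinely entangled $\rmA = \rmU\,\mathrm{diag}(\cdot)\,\rmV^\top$ (random $\rmU,\rmV$) no row of $\rmA$ is a multiple of a basis vector almost surely. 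So every normalised entry has modulus strictly below $1$, and $\mcc(0)<1$. Showing that the limit is $1$ therefore exhibits variation in $\rho$, which violates \cref{prop:rho-invariance}.

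For $\rho\to+1$, the numerator of $C_{jk}$ tends to $s_j$ and the denominator to $|s_j|$. So $|C_{jk}(\rho)|\to 1$ for every pair $(j,k)$, provided $s_j\neq 0$. Every permutation then attains average $1$, and hence $\mcc\to 1$. I would verify $s_j\neq 0$ as a genericity condition: $s_j = \mathbf{a}_j^\top\1$ vanishes only on a measure-zero set of orthogonal $\rmU,\rmV$. I would state the result almost surely over the random construction, noting that if some $s_j=0$ the limit for that row is instead governed by the next-order terms. Continuity of the max over finitely many permutations passes the entrywise limits to \mcc{}.

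For $\rho\to-1$, I first have to address feasibility. As noted in the setup, the equicorrelation matrix is PSD only for $\rho\ge -1/(d-1)$, so $\rho\to-1$ is literally meaningful only for $d=2$. I would prove it there. As $\rho\to-1$ we have $z_2\to -z_1$, so $\hat z_j \to (A_{j1}-A_{j2})z_1$ and $C_{jk}\to \pm\,\sgn(A_{j1}-A_{j2})$. This has modulus $1$ whenever $A_{j1}\neq A_{j2}$, which again holds generically, and so $\mcc\to1$. For general $d$ I would remark on the boundary $\rho\to -1/(d-1)$. There $\Sigma$ degenerates onto $\1^\perp$, which has rank $d-1$, so the same collapse argument does not force $\mcc\to 1$. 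I therefore expect to phrase the $\rho\to-1$ half as the $d=2$ case (or, more generally, any setting where $\Sigma$ becomes rank one).

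The main obstacle is not the algebra but pinning down these genericity and feasibility caveats precisely. These are the nonvanishing row sums $s_j$ for $\rho\to1$, the nonvanishing differences $A_{j1}-A_{j2}$ for $\rho\to-1$, and the admissible range of $\rho$. The goal is for the statement "$\mcc(\rho)\to1$ at both extremes" to hold almost surely over the random orthogonal factors rather than for every $\rmA$.
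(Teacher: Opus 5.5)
Your proposal is correct, but it reaches the statement by a different route from the paper. The paper never analyses the equicorrelated, randomly mixed model of the main text. Instead it uses three factors in which only $Z_2,Z_3$ are correlated, $Z_1$ is independent and recovered exactly up to scale, and the entanglement is the symmetric mixing $\hat Z_2=Z_2+\varepsilon Z_3$, $\hat Z_3=\varepsilon Z_2+Z_3$ with $0<\varepsilon<1$. It computes the four correlations explicitly and shows via $(1-\varepsilon)(1-\rho)\ge 0$ and $(1+\varepsilon)(1+\rho)>0$ that the diagonal pairing always wins. This gives $\mathrm{MCC}(\rho)=\tfrac13\bigl(1+2(1+\varepsilon\rho)/\sqrt{1+\varepsilon^2+2\varepsilon\rho}\bigr)$, from which it reads off both limits. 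It also obtains finer information your approach does not produce: the derivative, an interior minimum at $\rho=-\varepsilon$, and monotone inflation on $(0,1)$.

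Your general formula for $C_{jk}(\rho)$ buys generality instead. It gives $\rho$-dependence and the $\rho\to+1$ limit for the actual main-text construction, almost surely in $\rmU,\rmV$. It also yields the diagnosis that under equicorrelation every entry tends to modulus one, so the limit reflects all factors collapsing onto one direction rather than anything about the encoder.

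Your feasibility remark is correct too. The paper sidesteps it only by correlating a single pair, which is how its $\rho\to-1$ limit exists in $d=3$ even though $\Sigma$ does not become rank one. Embedding your $d=2$ computation as a block next to independent, perfectly recovered factors reproduces the paper's setting, with generic rather than symmetric $2\times2$ mixing.

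Finally, your genericity conditions are not an artefact of your method. Take the invertible mixing with rows $(1,1)$ and $(1,-1)$ at $d=2$: row one has $A_{11}=A_{12}$ and row two has $s_2=0$, so it violates both conditions. It gives $\mathrm{MCC}(\rho)=\tfrac12\bigl(\sqrt{(1+\rho)/2}+\sqrt{(1-\rho)/2}\bigr)$, which tends to $\tfrac12$ at both ends. So the statement can hold only almost surely over the random construction, or for specific mixings such as the paper's $\varepsilon<1$. The paper's own $\varepsilon=1$ case, with $\mathrm{MCC}\to1/3$ as $\rho\to-1$, is of this kind, though singular.
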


\cref{prop:mcc-correlation} predicts not merely a sensitivity issue w.r.t. $\rho$, but a failure where the metric saturates at $1$ even for an entangled encoder identified only up to $\cG_{\mathrm{aff}}$. Whereas \mcc{} is designed to distinguish such encoders from ones identified up to $\cG_{\mathrm{perm}}$, making an entangled representation indistinguishable from a disentangled one. Under correlated factors and non-axis-aligned encoders, \mcc{} systematically overestimates identifiability, the gap between \ecrl{1}{oo} and \ecrl{3}{om} narrows as $\rho$ increases (\cref{fig:sign_asymmetry}). We observe that the gap and the bias sharpens with growing~$d$. \cref{fig:exp04-rho-kappa} studies the interaction between $\rho$ and $\kappa$ at $d = 10$, confirming variation of each metric's values with $\rho$ rather than $\kappa$.

\begin{tcolorbox}[
  colback=gray!34,
  colframe=gray!34, 
  boxrule=0pt,
  arc=0pt,
  left=2pt,right=2pt,top=2pt,bottom=2pt
]
\textbf{Takeaway.} \mcc{} cannot reliably compare representations learned from correlated data, scoring near $1$ at high~$\rho$ (false positive) for an entangled encoder. \dci{}-D is overly sensitive to~$\kappa$, scoring near zero for any non-trivial entanglement(false negative; \cref{fig:exp04-rho-kappa}).
\end{tcolorbox}

\subsection{Metrics cannot detect multi-factor redundancy}
\label{subsec:undercomplete}

We now study what happens when the encoder outputs fewer dimensions than the number of ground-truth factors ($m < d$). We construct \ebeyond{4}{oo} by selecting $m$ factors and applying elementwise rescaling, so the retained factors are \emph{perfectly} identified. The central question is: \emph{can metrics distinguish an encoder that drops a redundant factor (lossless) from one that drops an informative factor (lossy)?}

\begin{property}[Faithfulness to effective dimensionality]
\label{prop:deff-sensitivity}
Let $\z \in \sR^d$ have effective dimensionality $d_{\mathrm{eff}} \le d$ (\cref{def:effective-dim}).
$\M$ is \emph{faithful to the effective dimensionality} if $\M = 1$ whenever the encoder recovers all $d_{\mathrm{eff}}$ independently varying factors (even if $m < d$), and $\M < 1$ whenever the encoder fails to recover at least one independently varying factor.
\end{property}

\parless{Setup: \dcrl{1}/\dbeyond{3} + \ebeyond{4}{oo}.}
Under \dcrl{1}, all $d$ factors are independent, so every omission is lossy. Under \dbeyond{3}, one factor is a deterministic function of another ($z_2 = z_1^3$, so $d_{\mathrm{eff}} = d - 1$); dropping $z_2$ is
lossless. Under \dbeyond{4}, one factor depends on two others ($z_k = g(z_i, z_j)$, $d_{\mathrm{eff}} = 9$); dropping $z_k$ is again lossless. In all cases: $\hat z_j = s_j z_j$ for $j \in S$, $|S| = m$.

\begin{figure*}[t]
    \centering
    \includegraphics[width=0.85\linewidth]{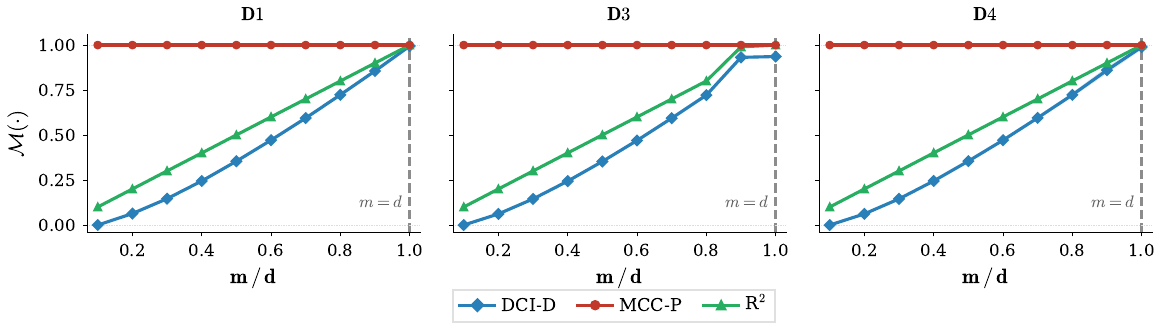}
    \caption{\textbf{Regression-based metrics detect single-factor redundancy.} \emph{Left} (\dcrl{1}): every dropped factor is informative; \rsq{} follows $m/d$ and \dci{}-D declines steadily. \mcc{}-P/S report $1.0$ even at $m{=}1$ (false positive).
    \emph{Right} (\dbeyond{3}): at $m{=}9  d_{\mathrm{eff}}$, the dropped factor is redundant ($z_2 = z_1^3$); \rsq{} and \dci{}-D plateau near~$1.0$, correctly recognising ossless compression, then decline as informative factors are removed. \mcc{}-P/S remain at $1.0$ throughout both panels and cannot distinguish the two.}
\label{fig:dropping_d1_d3}
\end{figure*}

\cref{fig:dropping_d1_d3} reveals a split between metric families. \mcc{}-P/S perform optimal one-to-one matching and score only matched pairs, yielding $1.0$ for any $m \ge 1$ regardless of whether omitted factors are redundant or informative. \rsq{} and \dci{}-D train a probe to predict \emph{all} $d$ factors from the representation. Under \dcrl{1} (left), unrepresented factors are unpredictable and \rsq $\approx m/d$. Under \dbeyond{3} (middle), the redundant factor $z_2 = z_1^3$ is predictable from the retained~$z_1$, so \rsq{} and \dci{}-D stay near~$1.0$ at $m = d_{\mathrm{eff}}$, thus correctly satisfying \cref{prop:deff-sensitivity}. Under \dcrl{2} (correlated factors), \rsq $> m/d$ because the probe partially predicts dropped factors from correlated retained ones; we defer this to \cref{fig:apx-exp06-all}.


Under \dbeyond{4}, the redundant factor $z_k = g(z_i, z_j)$ depends jointly on two other factors. Although $d_{\mathrm{eff}} = 9$ (same as \dbeyond{3}), the nonlinear probe fails to detect the relationship
\cref{fig:dropping_d1_d3} shows a false negative: a lossless encoder is penalised as though it were lossy.


\begin{tcolorbox}[
  colback=gray!34,
  colframe=gray!34, 
  boxrule=0pt,
  arc=0pt,
  left=2pt,right=2pt,top=2pt,bottom=2pt
]
\textbf{Takeaway.} Regression-based metrics (\rsq{}, \dci{}-D) detect single-factor redundancy (\dbeyond{3}), but no current metric detects multi-factor redundancy (\dbeyond{4}).
\end{tcolorbox}

\subsection{Metrics cannot compare overparametrised encoders}\label{subsec:overcomplete}
When $m > d$, the encoder outputs more codes than there are factors. We  first formalise the desired property we want a metric to exhibit.

\begin{property}[Invariance to overcompleteness.]
\label{prop:oc-invariance}
Let $f$ be an encoder with $m = d$ that identifies factors up to equivalence class $\cG$ (\cref{def:identifiability-general}), and let $f'$ be an
overcomplete encoder ($m > d$) that identifies the same factors up to the same~$\cG$. A metric $\M$ is invariant to the overcomplete dimension if $|\M(f') - \M(f)| \leq \epsilon(n)$, where
$\epsilon(n) \to 0$ as $n \to \infty$.
\end{property}

Violation of \cref{prop:oc-invariance} implies that the metric either spuriously rewards extra codes that add no
per-factor information, or that it penalises an encoder that
has not lost any factors but merely represents them using multiple codes. In either case, this would represent a metric conflating
dimensionality with identifiability.

\parless{Setup.}
We compare four overcomplete geometries
(\ebeyond{5}{mo}--\ebeyond{8}{mo}) against the
matched-dimension entangled baseline \ecrl{3}{om}, under
\dcrl{1}.
We first fix $m/d = 2$ ($d{=}20$, $n{=}1600$) and then sweep
$m/d \in \{1, 1.5, 2, 3\}$ ($d{=}5$, $n{=}1000$) to test
whether the results are stable as overcompleteness increases. At moderate overcompleteness ($m/d{=}2$), all metrics correctly separate entangled from disentangled encoders
(\cref{fig:apx-exp09-bars}).
\cref{fig:overcomplete_sweep} tests whether this holds as $m/d$ increases.

\cref{fig:overcomplete_sweep} shows that increasing $m/d$ does not uniformly increase or decrease scores.  Instead, it
amplifies the mismatch between each metric's implicit equivalence class and the encoder's geometry. Two cases are particularly informative.

\parless{\mcc{} cannot be used for distributed codes (\ebeyond{8}{mo}).}
Each factor is encoded as $k$ codes
(e.g., $\sin z_j, \cos z_j$ for $k{=}2$); no single code suffices to recover the factor. \mcc{} pairs each factor with exactly one code, so the best match (say $\sin z_j$) has correlation strictly less than $1$ with $z_j$. As $k$ grows, per-code information thins and \mcc{}-P drops
from ${\sim}\,0.85$ at $m/d{=}2$ to ${\sim}\,0.65$ at $m/d{=}10$, even though the factors are fully recoverable from
their code subsets. This is a structural failure and it worsens monotonically with
$m/d$. \dci{}-D does not exhibit this failure as the nonlinear probe can fit all $m$ codes, selecting the $k$ codes in each disjoint subset. Since each selected code predicts only one factor, $D_i = 1$ and \dci{}-D stays near $1.0$ at all tested $m/d$.

\parless{Linear entanglement at high $m/d$ increase \dci{}-D (\ebeyond{7}{mo}).}
However, \dci{}-D increases substantially even for the linearly entangled encoder, from ${\sim}\,0.42$ at $m/d{=}1.5$ to ${\sim}\,0.80$ at $m/d{=}10$. This produces a false positive, that could mislead model comparison. 

Only \ebeyond{5}{mo} (elementwise linear duplication) satisfies
\cref{prop:oc-invariance} across all metrics and all tested
$m/d$ values.

\begin{figure*}[t]
    \centering
    \includegraphics[width=0.7\linewidth]{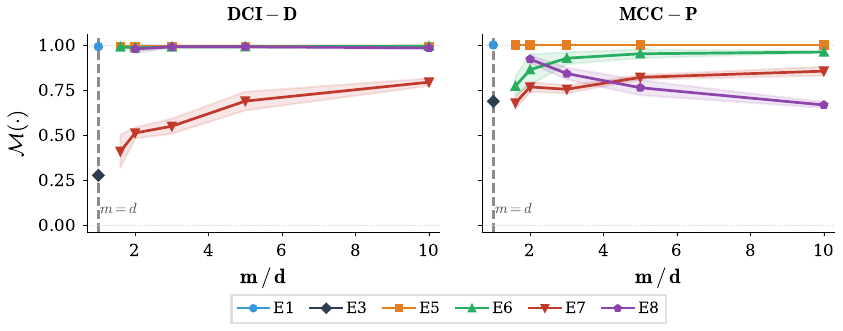}
    \caption{\textbf{Sweeping $m/d$ reveals encoder-specific
    violations of \cref{prop:oc-invariance}.}
    \ebeyond{5}{mo} (elementwise linear duplication) is the only
    encoder for which all metrics remain stable.
    \dci{}-D increases for entangled \ebeyond{7}{mo} as $m/d$ grows. \rsq{} collapses for nonlinear \ebeyond{6}{mo}.
    \mcc{}-P decreases for disjoint \ebeyond{8}{mo}.
    $d{=}5$, $n{=}1000$.}
    \label{fig:overcomplete_sweep}
\end{figure*}

\begin{tcolorbox}[
  colback=gray!34,
  colframe=gray!34,
  boxrule=0pt,
  arc=0pt,
  left=2pt,right=2pt,top=2pt,bottom=2pt
]
\textbf{Takeaway.} No metric satisfies \cref{prop:oc-invariance} across all encoder types; overcomplete representations require multi-metric evaluation or matched-dimension controls.
\end{tcolorbox}

\subsection{High representation-to-sample ratio increases risk of false positives}
\label{subsec:false-positive}

A metric should assign $\approx 0$ to a random encoder that carries no information about~$\rvz$.
Unlike the population-level misspecification studied in \cref{subsec:correlation,subsec:undercomplete,subsec:overcomplete}, the false-positive inflation in this section is a finite-sample phenomenon: the bias vanishes as $n \to \infty$.
We include it because the sample regimes encountered in practice---particularly in mechanistic interpretability, where $m/n$ routinely exceeds $1$---are far from this asymptotic limit, making the finite-sample floor operationally indistinguishable from structural misspecification.

\begin{property}[Insensitivity to uninformative encoders]
\label{prop:false-positive}
For any encoder $f$ independent of~$\rvz$, a metric $\M$ should satisfy $\M \approx 0$ regardless of the dimensionality ratio $m/d$ and sample size~$n$.
\end{property}

\parless{Setup.}
We construct a null encoder \rand{9} ($\hat{\rvz} \sim \mathrm{Uniform}([0,1]^m)$ and sweep  over both $m/d$ and $m/n$.

\begin{figure}[t]
    \centering
    \includegraphics[width=\linewidth]{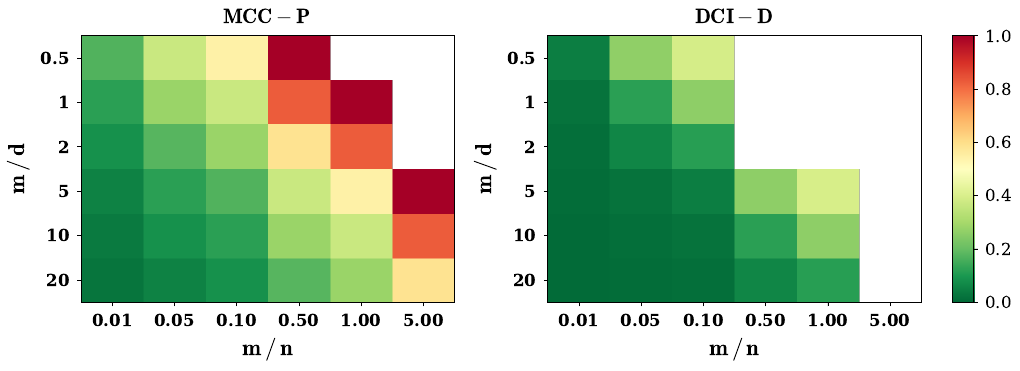}
    \caption{\textbf{Null-encoder scores reveal that $m/n$ (columns), not $m/d$ (rows), governs false-positive inflation.}
Each cell shows the metric score of a random encoder \rand{9} that carries no information about~$\rvz$; any score above~$0$ is a false positive.
See \cref{apx:mcc-false-positive} for the theoretical analysis and \cref{fig:apx-exp15-gauss} for the Gaussian null (nearly identical).}
    \label{fig:exp15-null}
\end{figure}

\mcc{} violates \cref{prop:false-positive} whenever $m/n \geq 0.1$. Reading along any row of \cref{fig:exp15-null} (fixed $m/d$, varying $m/n$), scores increase steadily; reading along any column (fixed $m/n$, varying $m/d$), scores are approximately constant.
The false-positive rate is therefore governed by~$m/n$, not~$m/d$.
At $m/n = 0.5$ and $m/d = 1$, \mcc{}-P reports $0.83$ for a representation that is pure noise.
\dci{}-D satisfies \cref{prop:false-positive} in the large-sample regime, but shows moderate
inflation at higher estimation ratios, particularly when $m/d$ is small. \rsq{} satisfies the property across the entire $(m/d,\, m/n)$ grid.

\parless{Theoretical analysis.}
We derive this behaviour in \cref{apx:mcc-false-positive}. Under the null, each entry of the $m \times d$ sample
correlation matrix has mean zero and standard deviation ${\approx}\, 1/\sqrt{n}$ by the Central Limit Theorem (CLT). Hungarian matching picks the best one-to-one assignment from $m$ candidates per column; the expected maximum of $m$ draws from $\mathcal{N}(0, 1/n)$ scales as
$\sqrt{2 \log m / n}$ \citep{cai2011limiting}, giving
$\mathbb{E}[\mcc\text{-P}] \gtrsim \sqrt{2\log m / n}$ (up to a constant).
This depends on $m$ and $n$ but not on $d$, explaining the column-varying, but constant across rows pattern in \cref{fig:exp15-null}.

\parless{Practical implications.}
The $m/n \gtrsim 0.1$ threshold is routinely exceeded: evaluating a pretrained LLM such as Llama-3.2-8B ($m = 4096$) with a few hundred samples gives $m/n \in [0.5, 10]$; even standard disentanglement benchmarks with $m = 64$ and $n = 500$ labelled samples yield $m/n > 0.1$.
\dci{}-D requires more samples and exhibits the same inflation.
\rsq{} is the most robust to false positives, but requires $n \gtrsim 500$ under nonlinear encoders (\cref{fig:apx-exp10-main}).

\begin{tcolorbox}[
  colback=gray!34,
  colframe=gray!34, 
  boxrule=0pt,
  arc=0pt,
  left=2pt,right=2pt,top=2pt,bottom=2pt
]
\textbf{Takeaway.} \mcc{} is unreliable whenever $m/n \gtrsim 0.1$. Always verify $m \ll n$ and report null-encoder baselines alongside metric scores.
\end{tcolorbox}


\section{Conclusion}

All existing identifiability metrics can be deceptive (\cref{fig:four-panel-summary}).
We provide a taxonomy (\cref{sec:taxonomy-main}) and theoretical and empirical analyses to characterise these failure modes, then propose four properties (\cref{prop:rho-invariance,prop:deff-sensitivity,prop:oc-invariance,prop:false-positive}) for future metric design.
We distil our findings into a practitioner checklist (\cref{apx:practitioner-checklist}) and a metric selection lookup table (\cref{tab:metric-property-summary}). Our results have direct consequences for any pipeline that uses identifiabilility metrics to make downstream predictions. 

\parless{Limitations.}
Our analysis uses synthetic encoders by design, to isolate metric misspecification from optimisation artefacts.
The taxonomy does not cover stochastic encoders or discrete factors, all of which arise in practice. Lastly, a systematic study of how metric failures manifest across different families of learned encoders (rather than constructed ones) would be a complementary direction.

\clearpage
\bibliography{refs}
\clearpage
\onecolumn

\tableofcontents
\clearpage

\addtocontents{toc}{\protect\setcounter{tocdepth}{2}}
\appendix

\section{Practitioner Checklist}
\label{apx:practitioner-checklist}

A metric score is interpretable only if two conditions hold: (1) the $(\text{DGP}, \text{encoder})$ pair lies in a structurally valid region for that metric, and (2) the sample size $n$ is large enough relative to the relevant dimension to ensure estimation stability.
Before reporting scores, verify the following conditions.

\parless{Before evaluation.}
\begin{enumerate}[leftmargin=*,itemsep=2pt]
  \item \textbf{Check the overparametrisation ratio $m/n$.}
    If $m/n > 0.1$, \mcc{} scores are unreliable: the expected score under a null encoder exceeds $\sqrt{2\log m / n}$ (\cref{subsec:false-positive}).
    Increase $n$ or reduce $m$ before interpreting results.
  \item \textbf{Report a null-encoder baseline.}
    Compute every metric on a random or constant encoder with the same $(m, n, d)$.
    Without this baseline, false positives are indistinguishable from genuine identifiability (\cref{subsec:false-positive}).
  \item \textbf{Know your DGP assumptions.}
    Determine whether latent factors are independent (\dcrl{1}) or correlated (\dcrl{2}), and whether the representation is matched ($m = d$), overcomplete ($m > d$), or undercomplete ($m < d$).
\end{enumerate}

\parless{Choosing a metric.}
\begin{enumerate}[leftmargin=*,itemsep=2pt,resume]
  \item \textbf{Matched dimension, independent factors} ($m = d$, \dcrl{1}): all three metrics (\mcc{}, \dci{}-D, \rsq{}) are reliable.
  \item \textbf{Correlated factors} (\dcrl{2}): prefer \rsq{}.
    \mcc{} conflates correlation with identifiability (\cref{prop:mcc-correlation}); \dci{}-D collapses under moderate entanglement (\cref{subsec:correlation}).
  \item \textbf{Overcomplete representations} ($m > d$): no single metric is reliable across all encoder geometries.
    Use multiple metrics and compare against matched-dimension controls (\cref{subsec:overcomplete}).
  \item Consult \cref{tab:metric-property-summary} for a full lookup table.
\end{enumerate}

\parless{Interpreting scores.}
\begin{enumerate}[leftmargin=*,itemsep=2pt,resume]
  \item \textbf{A high \mcc{} does not imply identifiability} when $m/n$ is large or factors are correlated.
  \item \textbf{A high \dci{}-D does not imply disentanglement} when the encoder is overcomplete and linearly entangled.
  \item \textbf{No pairwise metric detects multi-factor redundancy} (\dbeyond{4}); higher-order statistics are needed (\cref{subsec:undercomplete}).
\end{enumerate}

\section{Related Work}
\label{sec:related-work}
\parless{Identifiability theory}. Nonlinear ICA \citep{comon1994independent, hyvarinen1999nonlinear}establishes sufficient conditions under which latent factors can be recovered up to well-defined equivalence classes. Identifiability guarantees often leverage auxiliary variables \citep{hyvarinen2019nonlinear, khemakhem2020variational}, temporal structure \citep{hyvarinen2016unsupervised}, mechanism sparsity \citep{lachapelle2022disentanglement}, and restricted model classes \citep{khemakhem2020ice, marconato2024all}. Causal representation learning extends these results by additionally requiring that identified factors admit causal semantics with predictable behaviour under interventions \citep{scholkopf2021toward}. These works establish when identifiability holds in theory. We study whether the metrics used to verify these guarantees empirically are faithful to the equivalence classes the theorems provide. Our results indicate that even with correlations between latent factors, reliability on metrics drops \cref{subsec:correlation}.

\parless{Identifiability and disentanglement metrics.} A substantial body of work has proposed metrics for evaluating learned  representations against ground-truth factors, including \dci{} \citep{eastwood2018framework}, MIG \citep{chen2018isolating}, \mcc{} \citep{khemakhem2020ice}, InfoMEC \citep{hsu2023disentanglement}, and T-MEX \citep{yao2025third}. \citet{sepliarskaia2019not} showed that several metrics disagree on comparing methods and cautioned against relying on a single score. \citet{carbonneau2022measuring} surveyed metrics and noted the lack of a unified framework connecting metric assumptions to evaluation validity. Our work differs from both. Instead of comparing metric rankings aross methods, we identify the structural conditions on the DGP and encoder geometry under which each metric's score is interpretable, and show that the resulting failure modes are misspecification, not optimisation failures.

\parless{Overcomplete representations and mechanistic interpretability.} Recent work in mechanistic interpretability uses sparse autoencoders to extract interpretable features from pretrained models \citep{elhage2022toy}, and identifiability of these features is increasingly recognised as necessary for reliable interpretation \citep{song2025position, joshi2025identifiablesteeringsparseautoencoding, mueller2025isolation}. These settings are inherently overcomplete $m>>d$ and sample-constrained $m>>n$. We show that current metrics are not reliable under overcompleteness: \mcc{} does not work for overcomplete distributed codes, \dci{}-D may spuriousy reward a linearly entangled representation (\cref{subsec:overcomplete}), and the high $m/n$ ratios typical of these evaluations
may push the metrics into the regime where they can score high even with a random representation (\cref{subsec:false-positive}).

\parless{Relationship to prior evaluation studies.}
\citet{locatello2019challenging} demonstrated that unsupervised disentanglement learning requires inductive biases, studying how \emph{learning algorithms} behave under different model and data assumptions.
Our work is complementary: we study how \emph{evaluation metrics} behave under different structural regimes, holding the encoder fixed.
Their finding that unsupervised disentanglement is impossible without inductive biases is orthogonal to our finding that even supervised metrics are structurally misspecified under conditions the underlying identifiability theorems explicitly permit.
\citet{eastwood2023dciesextendeddisentanglementframework} extended \dci{} to handle dimension mismatch; our \cref{def:identifiability-general} generalises this to arbitrary equivalence classes and connects it to the full DGP taxonomy, revealing failure modes beyond what dimension-mismatch alone predicts.

\section{Metric Usage Review}
\label{apx:review}

We conducted a systematic review of evaluation metrics used in causal representation learning (CRL) and nonlinear independent component analysis (ICA). Using the Semantic Scholar API, we retrieved papers published between 2020 and 2025 at major ML conferences (NeurIPS, ICLR, ICML, AISTATS, UAI, AAAI, CLeaR, JMLR) based on the terms 'causal representation learning' and 'nonlinear ICA'. \textbf{Among the 62 papers identified, most relied on \mcc{} (25), followed by \rsq{} (9) and \dci{} (2).} None employed more recent metrics such as MIG or T-MEX. Finally, several papers did not use standard metrics at all, instead reporting performance in terms of objective optimization or relying on qualitative assessments.

\textbf{Also, nonlinear ICA papers use \mcc{} ($61\%$) more often than CRL ($29\%$).}

\section{Taxonomy}
\label{apx:taxonomy}


\subsection{Data Classes}
\label{apx:dgp-taxonomy}

Let $\rvz = (Z_1,\dots,Z_d)^\top \in \sR^d$ denote the ground-truth
latent factors with joint density~$p(\z)$.
We classify the factor distribution along two axes:
\emph{statistical dependence} (mutual information) and
\emph{functional dependence} (deterministic constraints).
Classes~\dcrl{1}--\dcrl{2} operate within the standard CRL setting
($d_{\mathrm{eff}} = d$); Classes~\dbeyond{3}--\dbeyond{4} extend it to
settings where functional constraints reduce the effective
dimensionality ($d_{\mathrm{eff}} < d$; cf.\
\cref{def:effective-dim}).

\medskip

\paragraph{\dcrl{1} --- Independent factors.}
The factors are mutually independent and non-redundant:
\[
  p(z_1,\dots,z_d)
  = \prod_{j=1}^{d} p(z_j),
  \qquad
  I(Z_i;\, Z_j) = 0
  \quad \forall\; i \neq j.
\]
No statistical, functional, or structural dependence exists among
factors.  In particular, $\operatorname{Corr}(Z_i, Z_j) = 0$ for all
$i \neq j$, and $d_{\mathrm{eff}} = d$.

\emph{Canonical example:}\;
$Z_1, Z_2 \stackrel{\mathrm{i.i.d.}}{\sim}
\operatorname{Uniform}(0,1)$.

\medskip

\paragraph{\dcrl{2} --- Correlated (statistically dependent) factors.}
The factors share information but each retains a unique degree of
freedom; no factor is a deterministic function of any subset of the
others:
\[
  \exists\; i \neq j:\;
  I(Z_i;\, Z_j) > 0,
  \qquad
  H(Z_j \mid Z_{\setminus j}) > 0
  \quad \forall\; j,
\]
where $Z_{\setminus j} := (Z_1,\dots,Z_{j-1},Z_{j+1},\dots,Z_d)$.
The first condition asserts statistical dependence; the second asserts
non-redundancy: no factor is determined by the rest.
Hence $d_{\mathrm{eff}} = d$.

\emph{Canonical example:}\;
$(Z_1, Z_2, Z_3)^\top \sim \mathcal{N}\!\bigl(0, \Sigma\bigr)$,\;
$\Sigma =
\begin{psmallmatrix}
  1 & 0 & 0 \\
  0 & 1 & \rho \\
  0 & \rho & 1
\end{psmallmatrix}$
with $\rho \in (0,1)$.

\emph{Remark.}\;  \dcrl{2} subsumes both causal dependence
($Z_j = f(Z_i) + \varepsilon$, $\varepsilon \not\equiv 0$) and
confounded dependence (shared latent common cause), as well as
nonlinear dependence invisible to linear measures.
For instance, $Z_1 \sim \cN(0,1)$,
$Z_2 = Z_1^2 + \varepsilon$, $\varepsilon \sim \cN(0, \sigma^2)$
satisfies $\operatorname{Corr}(Z_1, Z_2) = 0$ yet
$I(Z_1; Z_2) > 0$: the dependence is real but purely nonlinear.
As long as $H(\varepsilon) > 0$, the relationship is non-deterministic and falls under \dcrl{2}.
\textbf{In this paper, we'll focus on linear non-deterministic dependence only. }

\medskip

\paragraph{\dbeyond{3} --- Single-factor functional constraint.}
At least one factor is a deterministic function of exactly one other
factor, reducing the effective dimensionality:
\[
  \exists\; i \neq j,\; \exists\; f:\sR\to\sR:\quad
  Z_j = f(Z_i)
  \quad\text{a.s.},
  \qquad
  H(Z_j \mid Z_i) = 0.
\]
Two structurally distinct subcases arise:

\begin{description}
  \item[\dbeyond{3}A --- Invertible (information-preserving).]
  The map $f$ is injective, so $f^{-1}$ exists.  Then
  $H(Z_j) = H(Z_i)$ and $I(Z_i;\, Z_j) = H(Z_i)$.
  The intrinsic dimension of $(Z_i, Z_j)$ is~$1$, but no information
  is lost.
  \emph{Canonical example:}\; $Z_2 = Z_1^3$.

  \item[\dbeyond{3}B --- Non-invertible (collapsed).]
  The map $f$ is many-to-one, so $f^{-1}$ does not exist.
  Then $H(Z_j) < H(Z_i)$ and $I(Z_i;\, Z_j) = H(Z_j) < H(Z_i)$:
  information is destroyed.
  \emph{Canonical example:}\; $Z_2 = \operatorname{sign}(Z_1)$.
\end{description}

\noindent
In both subcases, $d_{\mathrm{eff}} \le d - 1$ (one constraint removes one
degree of freedom).
\circuit{}$R = R_0(1 + \alpha\, T)$: resistance is an invertible function of temperature (\dbeyond{3}A).

In this paper, \dbeyond{3}A will be of interest to us.

\paragraph{\dbeyond{4} --- Multi-factor functional constraint
(synergistic).}
At least one factor is a deterministic function of two or more other
factors, but not of any single one:
\[
  \exists\; k,\; \exists\; S \subset \{1,\dots,d\}\text{ with }
  |S| \ge 2:\quad
  Z_k = g(Z_S) \quad\text{a.s.},
\]
where $Z_S := (Z_j)_{j \in S}$, and no function of a strict subset of
$Z_S$ determines~$Z_k$.  Formally:
\[
  H(Z_k \mid Z_S) = 0,
  \qquad
  H(Z_k \mid Z_T) > 0
  \quad \forall\; T \subsetneq S.
\]
The constraint cannot be decomposed into single-variable contributions:
dependence is deterministic but \emph{synergistic}.  All pairwise
linear correlations may vanish ($\operatorname{Corr}(Z_j, Z_k) = 0$
for each $j \in S$) even though $(Z_S)$ jointly determines~$Z_k$.

\emph{Canonical example:}\;
$Z_1 = Z_2\, Z_3$, \; $Z_2 \perp Z_3$.
\circuit{$V = I R$: voltage is jointly determined by current and
resistance, so $(T, R, I, V)$ has $d_{\mathrm{eff}} = 2$.}

\bigskip
\noindent
\emph{Summary.}\;
Under \dcrl{1}--\dcrl{2}, $d_{\mathrm{eff}} = d$ (no functional
constraints).
Under \dbeyond{3}--\dbeyond{4}, $d_{\mathrm{eff}} < d$
(deterministic constraints reduce the number of free degrees of
freedom; cf.\ \cref{def:effective-dim}).
The horizontal axis of the validity-domain map (\cref{tab:metric-property-summary}) captures this
progression.

\subsection{Encoder Taxonomy}
\label{apx:encoder-taxonomy}

Let $f : \sR^n \to \sR^m$ denote the learned encoder, producing
$\hat{\rvz} := f(\rvx) = f(g(\rvz)) \in \sR^m$.
We classify encoders by (i)~the equivalence class~$\cG$ up to which
factors are identified (\cref{def:identifiability}), and (ii)~the
dimension ratio $m / d$.
Throughout, $S_d$ denotes the symmetric group on $\{1,\dots,d\}$.


\parless{Matched dimension ($m = d$).}

\paragraph{\ecrl{1}{oo} --- Elementwise linear (permutation \&
rescaling).}
The encoder identifies each factor up to $\cG_{\mathrm{perm}}$:
\[
  \exists\; \pi \in S_d,\;
  \exists\; a_j \neq 0:\quad
  \hat Z_j = a_j\, Z_{\pi(j)},
  \qquad j = 1,\dots,d.
\]
No cross-factor mixing or nonlinear reparameterisation is present
beyond scaling and permutation.
This is the strongest form of identifiability and every metric
should score~$1$.

\emph{Canonical example:}\;
$\hat Z_1 = 2 Z_3$,\;
$\hat Z_2 = -Z_1$,\;
$\hat Z_3 = 0.5\, Z_2$
\;(for $d = 3$).

\medskip

\paragraph{\ecrl{2}{oo} --- Elementwise nonlinear (invertible
componentwise).}
The encoder identifies each factor up to $\cG_{\mathrm{nl}}$:
\[
  \exists\; \pi \in S_d:\quad
  \hat Z_j = g_j(Z_{\pi(j)}),
  \qquad j = 1,\dots,d,
\]
where each $g_j : \sR \to \sR$ is a smooth, invertible scalar
function.
Information is preserved factor-wise, but linear correlation between
$\hat Z_j$ and $Z_{\pi(j)}$ may be misleading.
The parameter~$\alpha$ (\cref{tab:experiment-parameters})
controls the degree of nonlinearity; $\alpha = 0$ reduces to
\ecrl{1}{oo}.

\emph{Canonical example:}\; $\hat Z_j = Z_{\pi(j)}^3$.

\medskip

\paragraph{\ecrl{3}{om} --- Linearly entangled.}
The encoder identifies factors up to $\cG_{\mathrm{aff}}$:
\[
  \hat{\rvz} = \rmA\, \rvz,
  \qquad \rmA \in \sR^{d \times d},\;
  \det(\rmA) \neq 0,
\]
with $\rmA$ not a signed permutation matrix (i.e., at least one row
has two or more nonzero entries).
All factor information is preserved globally, but individual factors
are distributed across coordinates.
The condition number $\kappa(\rmA)$ controls the degree of
entanglement; $\kappa = 1$ reduces to \ecrl{1}{oo}.

\emph{Canonical example:}\;
$\hat Z_2 = a\, Z_2 + b\, Z_3$ with $ab \neq 0$.



\parless{Dimension mismatch ($m \neq d$).}

The standard definition of identifiability
(\cref{def:identifiability}) assumes $m = d$.  When $m \neq d$,
we use the generalised notion of \cref{def:identifiability-general}:
the encoder identifies a subset $S \subseteq \{1,\dots,d\}$ of
factors via a readout $r : \sR^m \to \sR^{|S|}$.

\medskip

\paragraph{\ebeyond{4}{oo} --- Undercomplete ($m < d$).}
The encoder outputs fewer dimensions than there are ground-truth
factors, so $|S| < d$: some factors are unrecoverable regardless of
the readout~$r$.  Each retained factor is encoded elementwise:
\[
  \hat Z_j = a_j\, Z_{i(j)},
  \qquad j = 1,\dots,m,
\]
with all $i(j)$ distinct and $a_j \neq 0$.

Under \dbeyond{3}--\dbeyond{4}, this need not be lossy in the
information-theoretic sense: if the encoder recovers all
$d_{\mathrm{eff}}$ independently varying factors, it captures the full
information of~$\rvz$ (\cref{def:effective-dim}).  No current metric
distinguishes omission of a redundant factor from omission of an
informative one.
\circuit{$(T,\; I)$: $|S| = 2 = d_{\mathrm{eff}}$.}

\parless{Overcomplete encoders ($m > d$).}

We now define four overcomplete encoder types ($m > d$), each corresponding to a distinct code--factor geometry.

\medskip

\paragraph{\ebeyond{5}{mo} --- Overcomplete elementwise linear.}
Each output coordinate is a scaled copy of exactly one ground-truth factor. Let $\sigma \colon \{1,\dots,m\} \to \{1,\dots,d\}$ be a surjective assignment (every factor is represented at least once; some are
duplicated), and let $a_j \neq 0$.  Then:
\[
  \hat Z_j = a_j\, Z_{\sigma(j)},
  \qquad j = 1,\dots,m, \qquad m > d.
\]
The surjectivity of~$\sigma$ ensures no factor is lost; factors
assigned to multiple indices appear as independently scaled copies.
The readout~$r$ must aggregate
(\textcolor{amethyst}{many-to-one}) across codes that share a source
factor.

\emph{Example ($d{=}2$, $m{=}4$):}\;
$\hat Z_1 = 1.3\, Z_1$,\;
$\hat Z_2 = -0.7\, Z_2$,\;
$\hat Z_3 = 0.9\, Z_1$,\;
$\hat Z_4 = -1.5\, Z_2$.

\medskip

\paragraph{\ebeyond{6}{mo} --- Overcomplete, multiple codes per factor.}
The first $d$ output coordinates are elementwise nonlinear
transforms of individual factors (one per factor); the remaining
$m - d$ coordinates are nonlinear functions that may depend on
multiple factors simultaneously:
\[
  \hat Z_j =
  \begin{cases}
    g_j\!\bigl(Z_{\pi(j)}\bigr), & j = 1,\dots,d, \\[4pt]
    \phi_j(Z_1,\dots,Z_d),       & j = d{+}1,\dots,m,
  \end{cases}
\]
where each $g_j : \sR \to \sR$ is an invertible scalar function,
$\pi \in S_d$ is a permutation, and each
$\phi_j : \sR^d \to \sR$ is a (possibly non-invertible) nonlinear
map.
The first $d$ coordinates preserve factor-wise information up to
$\cG_{\mathrm{nl}}$; the additional $m - d$ coordinates introduce
cross-factor codes that carry redundant or mixed information.
Recovery requires a \textcolor{amethyst}{many-to-one} readout that
can select or aggregate across both single-factor and multi-factor
codes.

\emph{Example ($d{=}2$, $m{=}3$):}\;
$\hat Z_1 = \tanh(Z_1)$,\;
$\hat Z_2 = Z_2^3$,\;
$\hat Z_3 = Z_1 \cdot Z_2$.

\medskip

\paragraph{\ebeyond{7}{om} --- Overcomplete, linearly entangled.}
The encoder is a dense linear map with $m > d$:
\[
  \hat{\rvz} = \rmA\, \rvz,
  \qquad
  \rmA \in \sR^{m \times d},\;
  \operatorname{rank}(\rmA) = d,
\]
where at least one row of $\rmA$ has two or more nonzero entries, so
each coordinate of $\hat{\rvz}$ mixes several factors
(\textcolor{burgundy}{one-to-many}).
The matrix $\rmA$ is constructed via its singular value decomposition
$\rmA = \rmU \, \mathrm{diag}(s_1,\dots,s_d) \, \rmV^\top$ with
$\rmU \in \sR^{m \times m}$, $\rmV \in \sR^{d \times d}$ orthogonal,
and $s_1 \ge \cdots \ge s_d > 0$.
The condition number $\kappa(\rmA) = s_1 / s_d$ controls the degree
of entanglement.
Since $\operatorname{rank}(\rmA) = d$, the factor information is
globally preserved; recovery requires $r$ to unmix the linear
superposition.

\emph{Example ($d{=}2$, $m{=}4$):}\;
every $\hat Z_j$ is a distinct linear combination of $Z_1$ and $Z_2$.

\medskip

\paragraph{\ebeyond{8}{mo} --- Overcomplete, nonlinear disjoint
subsets.}
Let $k \ge 2$ be an integer and set $m = k \cdot d$.
There exist pairwise-disjoint index sets
$S_1,\dots,S_d \subset \{1,\dots,m\}$ with $|S_i| = k$,
$\bigsqcup_{i=1}^{d} S_i = \{1,\dots,m\}$, such that each
ground-truth factor~$Z_i$ is encoded \emph{only} in the coordinates
indexed by~$S_i$ (no cross-factor mixing):
\[
  \forall\; j \in S_i:\quad
  \hat Z_j = h_j(Z_{\pi(i)}),
\]
where $\pi \in S_d$ is a permutation and each $h_j : \sR \to \sR$ is
a scalar nonlinear function (not necessarily invertible individually).
The factor is recoverable from its own subset via a decoder
$f_i : \sR^k \to \sR$:
\[
  Z_{\pi(i)} = f_i\!\bigl(\hat Z_j : j \in S_i\bigr).
\]
For $k = 2$, the canonical implementation uses
$\hat Z_{2i} = \sin(Z_{\pi(i)})$,
$\hat Z_{2i+1} = \cos(Z_{\pi(i)})$,
with perfect reconstruction via
$Z_{\pi(i)} = \operatorname{atan2}(\hat Z_{2i},\, \hat Z_{2i+1})$.
For $k > 2$, an interval-based encoding partitions the range of each
factor into $k$ bins; exactly one code per factor is active for each
sample.
Coordinate-wise metrics fail because the readout~$r$ must aggregate
(\textcolor{amethyst}{many-to-one}) across the $k$ codes in
each~$S_i$; no single $\hat Z_j$ suffices to recover~$Z_i$.

\emph{Example ($d{=}2$, $k{=}2$, $m{=}4$):}\;
$(\sin Z_1,\; \cos Z_1,\; \sin Z_2,\; \cos Z_2)$.

\medskip


\parless{Control baselines.}

\paragraph{\rand{9} --- Random (independent of data).}
$\hat{\rvz} \sim \operatorname{Uniform}([0,1]^m)$, independent
of~$\rvx$.  Every metric should return $\approx 0$; any nonzero
score is a false positive.

\bigskip
\noindent
\emph{Summary of code--factor geometry.}
\begin{itemize}[leftmargin=*, itemsep=2pt, topsep=4pt]
  \item \emph{\textcolor{teal}{One-to-one}}: $r$ selects one code per
    factor; each code represents exactly one factor.
    Applies to \ecrl{1}{oo}, \ecrl{2}{oo}.
  \item \emph{\textcolor{amethyst}{Many-to-one}}: multiple codes carry
    information about the same factor; $r$ must aggregate.
    Applies to \ebeyond{5}{mo}, \ebeyond{6}{mo}, \ebeyond{8}{mo}.
  \item \emph{\textcolor{burgundy}{One-to-many}}: each code entangles
    multiple factors; $r$ must unmix.
    Applies to \ecrl{3}{om}, \ebeyond{7}{om}.
\end{itemize}
\begin{conceptbox}{Examples of encoders.}
\label{box:circuit-encoders}
\begin{center}
\setlength{\tabcolsep}{3pt}
\small
\begin{tabular}{@{}ll@{}}
  \ecrl{1}{oo}  & $(2T, -R, 3V, 0.5I)$
                 \\
  \ecrl{3}{oo}  & $\rmA\,(T,R,I,V)^\top + \rvb$
                 \\
  \ebeyond{4}{oo} & $(T,\; I)$
                 \\
  \ebeyond{8}{mo} & $(a_1, a_2, T, I, V)$,\; $R\!=\!\sqrt{a_1^2+a_2^2}$
                 \\
  \rand{9}       & $\hat{\z} \sim \mathcal{N}(0, I_5)$
                 \\
\end{tabular}
\end{center}
\end{conceptbox}
\noindent
\ecrl{1}{oo} and \ecrl{2}{oo} preserve factor-wise information up to
invertible reparameterisations;
\ecrl{3}{om} preserves global information but mixes factors;
\ebeyond{4}{oo} loses information;
\ebeyond{5}{mo} duplicates information via elementwise linear copies;
\ebeyond{6}{mo} combines elementwise nonlinear codes with cross-factor codes;
\ebeyond{7}{om} mixes all factors linearly in an overcomplete space;
\ebeyond{8}{mo} distributes each factor across disjoint nonlinear codes.

\section{Metrics}
\label{apx:metrics}
See \cref{apx:derivations} for a more detailed description of each metric.
    
        \begin{table}[ht]
        \centering
        \setlength{\tabcolsep}{8pt}
        \renewcommand{\arraystretch}{1.25}
        \begin{tabular}{p{0.18\linewidth} p{0.72\linewidth}}
        \hline
        \textbf{Metric} & \textbf{Short description} \\
        \hline
        DCI & Measures \emph{Disentanglement} (D), \emph{Completeness} (C), and \emph{Informativeness} (I) by assessing how well latent dimensions predict ground-truth factors using supervised regressors \citep{eastwood2018framework}. \\
        MCC & Evaluates alignment between learned and ground-truth latent variables via an optimal one-to-one matching that maximizes pairwise correlations (P=pearson, S=spearman, RDC=Randomized Dependence Coefficient \citep{lopez2013randomized}). \\
        $R^2$ & Quantifies the proportion of variance in ground-truth factors explained by the learned representation through linear regression. \\
        T-MEX & Assesses disentanglement by measuring how selectively latent variables respond to interventions on ground-truth factors \citep{yao2025third}. \\
        MIG & Computes the gap between the top two mutual information scores between a factor and latent variables [CITE]. \\
        InfoMEC & Measures equivalence classes of representations by evaluating how much information about the ground-truth factors is preserved under invertible transformations \citep{hsu2023disentanglement} (M=modularity, E=explicitness, C=compactness). \\
        \hline
        \end{tabular}
        \caption{Common evaluation metrics for causal representation learning with ground-truth factors.}
        \label{tab:metrics_desc}
        \end{table}

\begin{table}[t]
\centering
\small
\setlength{\tabcolsep}{4pt}
\caption{\textbf{No single metric satisfies all four properties.}
Each cell shows whether a metric satisfies (\cmark), partially satisfies ($\sim$), or violates (\xmark) the corresponding property.
Superscripts reference the relevant subsection. See \cref{apx:expts} for MI-based metrics.}
\label{tab:metric-property-summary}

\begin{tabular}{lcccc}
\toprule
Metric
& P1
& P2
& P3
& P4 \\
\midrule
\mcc{}-P  & \xmark & \xmark & \xmark & \xmark \\
\mcc{}-S  & \xmark & \xmark & \xmark & \xmark \\
\rsq{}    & \cmark & $\sim$ & \xmark & \cmark \\
\dci{}-D  & $\sim$ & $\sim$ & \xmark & $\sim$ \\
MIG       & \xmark & \xmark & \xmark & $\sim$ \\
T-MEX     & $\sim$ & \xmark & \xmark & $\sim$ \\
\bottomrule
\end{tabular}

\vspace{2pt}
\raggedright
{\footnotesize
P1: $\rho$-invariance$^{\ref*{subsec:correlation}}$ \quad
P2: $d_{\mathrm{eff}}$-sensitivity$^{\ref*{subsec:undercomplete}}$ \\
P3: OC-invariance$^{\ref*{subsec:overcomplete}}$ \quad
P4: Uninformative-sensitivity$^{\ref*{subsec:false-positive}}$
}
\end{table}

\section{Expected Metrics' Behaviour: Theory and Derivations}
\label{apx:derivations}

\subsection{MCC: Correlated latent factors and linear entanglement}
\label{apx:theory-correlation-e3}

We consider three ground-truth latent variables
\[
\z = (Z_1, Z_2, Z_3)^\top,
\]
with the following second-order structure:
\begin{align}
\mathbb{E}[Z_i] &= 0 \quad \text{for all } i, \\
\Var(Z_1) &= 1, \\
\Var(Z_2) = \Var(Z_3) &= 1, \\
\Corr(Z_2, Z_3) &= \rho, \quad |\rho| < 1,
\end{align}
and $Z_1$ uncorrelated with $(Z_2, Z_3)$.

Consider a learned representation with linear mixing of the form:
\begin{align}
\hat Z_1 &= s Z_1, \\
\hat Z_2 &= a Z_2 + b Z_3, \\
\hat Z_3 &= c Z_2 + d Z_3,
\end{align}
with $s,a,b,c,d \neq 0$.
Our goal is to compute the Mean Correlation Coefficient (MCC) between
$\Z$ and $\hat \Z := (\hat Z_1, \hat Z_2, \hat Z_3)$ as a function of the latent correlation $\rho$.

We start by computing the covariances between the true latents and the learned coordinates.
For $Z_1$ we immediately have
\begin{align}
\Cov(Z_1, \hat Z_1) &= s \Var(Z_1) = s, \\
\Cov(Z_1, \hat Z_2) &= 0, \\
\Cov(Z_1, \hat Z_3) &= 0,
\end{align}
since $Z_1$ is uncorrelated with $Z_2$ and $Z_3$, and $\Var(Z_1)=1$.

For $Z_2$ and $\hat Z_2 = a Z_2 + b Z_3$,
\begin{align}
\Cov(Z_2, \hat Z_2)
&= \Cov\bigl(Z_2, a Z_2 + b Z_3\bigr) \\
&= a \Cov(Z_2, Z_2) + b \Cov(Z_2, Z_3) \\
&= a \Var(Z_2) + b \rho \sqrt{\Var(Z_2)\Var(Z_3)} \\
&= a + b \rho,
\end{align}
using $\Var(Z_2) = \Var(Z_3) = 1$ and $\Cov(Z_2,Z_3) = \rho$.

Similarly, the variance of $\hat Z_2$ is
\begin{align}
\Var(\hat Z_2)
&= \Var(aZ_2 + bZ_3) \\
&= a^2 \Var(Z_2) + b^2 \Var(Z_3) + 2ab \Cov(Z_2,Z_3) \\
&= a^2 + b^2 + 2ab\rho.
\end{align}

Therefore
\begin{equation}
\Corr(Z_2, \hat Z_2)
= \frac{\Cov(Z_2,\hat Z_2)}{\sqrt{\Var(Z_2)\Var(\hat Z_2)}}
= \frac{a + b\rho}{\sqrt{a^2 + b^2 + 2ab\rho}}.
\label{eq:r22}
\end{equation}

Analogously,
\begin{align}
\Cov(Z_3, \hat Z_2)
&= a \Cov(Z_3,Z_2) + b \Cov(Z_3,Z_3)
= a\rho + b, \\
\Corr(Z_3, \hat Z_2)
&= \frac{b + a\rho}{\sqrt{a^2 + b^2 + 2ab\rho}}.
\label{eq:r32}
\end{align}

Repeating the same computation for $\hat Z_3 = c Z_2 + d Z_3$ yields
\begin{align}
\Corr(Z_2, \hat Z_3)
&= \frac{c + d\rho}{\sqrt{c^2 + d^2 + 2cd\rho}},
\label{eq:r23} \\
\Corr(Z_3, \hat Z_3)
&= \frac{d + c\rho}{\sqrt{c^2 + d^2 + 2cd\rho}}.
\label{eq:r33}
\end{align}

Finally, for $Z_1$ we have
\begin{equation}
\Corr(Z_1, \hat Z_1) = \sgn(s), \qquad
\Corr(Z_1, \hat Z_2) = \Corr(Z_1, \hat Z_3) = 0.
\end{equation}

\subsubsection{Correlation matrix and MCC}

Collecting the correlations into a matrix $C(\rho)$, we obtain
\begin{flalign}
C(\rho) =
\begin{pmatrix}
\Corr(Z_1,\hat Z_1) & \Corr(Z_1,\hat Z_2) & \Corr(Z_1,\hat Z_3) \\
\Corr(Z_2,\hat Z_1) & \Corr(Z_2,\hat Z_2) & \Corr(Z_2,\hat Z_3) \\
\Corr(Z_3,\hat Z_1) & \Corr(Z_3,\hat Z_2) & \Corr(Z_3,\hat Z_3)
\end{pmatrix} \\
=
\begin{pmatrix}
\pm 1 & 0 & 0 \\
0 & r_{22}(\rho) & r_{23}(\rho) \\
0 & r_{32}(\rho) & r_{33}(\rho)
\end{pmatrix},
\end{flalign}
where $r_{22},r_{23},r_{32},r_{33}$ are given by
\eqref{eq:r22}--\eqref{eq:r33}.

The Mean Correlation Coefficient (MCC) between $\z$ and $\hat \z$ for the
$3\times 3$ case is defined as
\begin{equation}
\mathrm{MCC}(\rho)
= \frac{1}{3} \max_{\pi \in S_3}
\sum_{i=1}^3 \bigl|\Corr(Z_i, \hat Z_{\pi(i)})\bigr|,
\label{eq:mcc-def}
\end{equation}
where $S_3$ is the set of all permutations of $\{1,2,3\}$.

Since $\Corr(Z_1,\hat Z_1) = \pm 1$ and $\Corr(Z_1,\hat Z_2) = \Corr(Z_1,\hat Z_3) = 0$,
the optimal permutation always pairs $Z_1$ with $\hat Z_1$, contributing $1$ to the sum.
The remaining degrees of freedom are in how we pair $(Z_2,Z_3)$ with
$(\hat Z_2,\hat Z_3)$.

There are two relevant pairings:
\begin{enumerate}
\item ``Diagonal'' pairing:
      $Z_2 \leftrightarrow \hat Z_2$ and $Z_3 \leftrightarrow \hat Z_3$,
      giving a sum
      \[
      S_{\mathrm{diag}}(\rho)
      = |r_{22}(\rho)| + |r_{33}(\rho)|.
      \]
\item ``Swapped'' pairing:
      $Z_2 \leftrightarrow \hat Z_3$ and $Z_3 \leftrightarrow \hat Z_2$,
      giving
      \[
      S_{\mathrm{swap}}(\rho)
      = |r_{23}(\rho)| + |r_{32}(\rho)|.
      \]
\end{enumerate}

Thus
\begin{equation}
\mathrm{MCC}(\rho)
= \frac{1}{3} \Bigl(
1 + \max\{ S_{\mathrm{diag}}(\rho),\; S_{\mathrm{swap}}(\rho) \}
\Bigr).
\label{eq:mcc-rho-general}
\end{equation}
The dependence of MCC on the latent correlation $\rho$ is therefore entirely
through the functions $r_{22}(\rho),\ldots,r_{33}(\rho)$.

\paragraph{Effect of the sign of \texorpdfstring{$\rho$}{rho} in a symmetric example.}

To see how the sign of $\rho$ affects
$\mathrm{MCC}(\rho)$, consider a symmetric mixing:
\begin{align}
\hat Z_2 &= Z_2 + \varepsilon Z_3, \\
\hat Z_3 &= \varepsilon Z_2 + Z_3,
\end{align}
with $0 < \varepsilon < 1$. In this case
\[
a = d = 1, \qquad b = c = \varepsilon,
\]
and substituting into \eqref{eq:r22}--\eqref{eq:r33} yields
\begin{align}
r_{22}(\rho)
&= \frac{1 + \varepsilon \rho}{\sqrt{1 + \varepsilon^2 + 2\varepsilon\rho}},
\label{eq:r22-sym}\\
r_{33}(\rho)
&= \frac{1 + \varepsilon\rho}{\sqrt{1 + \varepsilon^2 + 2\varepsilon\rho}}
= r_{22}(\rho),
\label{eq:r33-sym}
\end{align}
and the off-diagonal entries are
\begin{align}
r_{32}(\rho)
&= \frac{\varepsilon + \rho}{\sqrt{1 + \varepsilon^2 + 2\varepsilon\rho}},
\label{eq:r32-sym}\\
r_{23}(\rho)
&= \frac{\varepsilon + \rho}{\sqrt{\varepsilon^2 + 1 + 2\varepsilon\rho}}
= r_{32}(\rho).
\label{eq:r23-sym}
\end{align}
To verify: from \eqref{eq:r33} with $d=1$, $c=\varepsilon$, the numerator is
$d + c\rho = 1 + \varepsilon\rho$, matching $r_{22}$.
From \eqref{eq:r32} with $a=1$, $b=\varepsilon$, the numerator is
$b + a\rho = \varepsilon + \rho$.
From \eqref{eq:r23} with $c=\varepsilon$, $d=1$, the numerator is
$c + d\rho = \varepsilon + \rho$, matching $r_{32}$.
All four denominators equal $\sqrt{1 + \varepsilon^2 + 2\varepsilon\rho}$.

The two pairings therefore give
\begin{align}
S_{\mathrm{diag}}(\rho)
&= |r_{22}(\rho)| + |r_{33}(\rho)|
= 2\,|r_{22}(\rho)|, \label{eq:sdiag-sym}\\
S_{\mathrm{swap}}(\rho)
&= |r_{23}(\rho)| + |r_{32}(\rho)|
= 2\,|r_{32}(\rho)|. \label{eq:sswap-sym}
\end{align}
These are \emph{not} equal in general.
We now show that the diagonal pairing always dominates.
Consider the difference of the numerators:
\begin{align}
(1 + \varepsilon\rho) - (\varepsilon + \rho)
&= (1 - \varepsilon)(1 - \rho) \ge 0,
\label{eq:num-diff-pos}\\
(1 + \varepsilon\rho) + (\varepsilon + \rho)
&= (1 + \varepsilon)(1 + \rho) > 0,
\label{eq:num-sum-pos}
\end{align}
for all $\rho \in (-1,1)$ and $\varepsilon \in (0,1)$.
Since both share the same (positive) denominator, we have
$|r_{22}(\rho)| \ge |r_{32}(\rho)|$ with equality only at the boundary
$\rho = 1$ or $\varepsilon = 1$.
Hence $S_{\mathrm{diag}}(\rho) \ge S_{\mathrm{swap}}(\rho)$, and
\begin{equation}
\mathrm{MCC}(\rho)
= \frac{1}{3}\Bigl(1 + 2\,|r_{22}(\rho)|\Bigr)
= \frac{1}{3}\left(1 +
  \frac{2\,|1 + \varepsilon\rho|}{\sqrt{1 + \varepsilon^2 + 2\varepsilon\rho}}\right).
\label{eq:mcc-sym}
\end{equation}
Since $1 + \varepsilon\rho \ge 1 - \varepsilon > 0$ for all $|\rho| < 1$,
the absolute value is redundant and we may write
\begin{equation}
\mathrm{MCC}(\rho)
= \frac{1}{3}\left(1 +
  \frac{2(1 + \varepsilon\rho)}{\sqrt{1 + \varepsilon^2 + 2\varepsilon\rho}}\right).
\label{eq:mcc-sym-clean}
\end{equation}

\paragraph{Derivative of $r_{22}(\rho)$.}
We now compute the derivative of $r_{22}(\rho)$ with respect to $\rho$.
Write $N(\rho) := 1 + \varepsilon\rho$ and
$D(\rho) := 1 + \varepsilon^2 + 2\varepsilon\rho$, so that
$r_{22} = N / \sqrt{D}$.  Then
\begin{align}
\frac{\partial r_{22}}{\partial \rho}
&= \frac{N'\,\sqrt{D} - N \cdot \frac{D'}{2\sqrt{D}}}{D}
= \frac{N'\,D - N \cdot \frac{D'}{2}}{D^{3/2}}.
\label{eq:dr22-setup}
\end{align}
We have $N' = \varepsilon$ and $D' = 2\varepsilon$, so the numerator is
\begin{align}
N'\,D - N \cdot \tfrac{D'}{2}
&= \varepsilon\,(1 + \varepsilon^2 + 2\varepsilon\rho)
 - (1 + \varepsilon\rho)\cdot\varepsilon \notag\\
&= \varepsilon\bigl[(1 + \varepsilon^2 + 2\varepsilon\rho)
                    - (1 + \varepsilon\rho)\bigr] \notag\\
&= \varepsilon\bigl(\varepsilon^2 + \varepsilon\rho\bigr)
 = \varepsilon^2(\varepsilon + \rho).
\label{eq:dr22-numerator}
\end{align}
Therefore
\begin{equation}
\frac{\partial r_{22}}{\partial \rho}
= \frac{\varepsilon^2(\varepsilon + \rho)}
       {\bigl(1 + \varepsilon^2 + 2\varepsilon\rho\bigr)^{3/2}}.
\label{eq:dr22-drho}
\end{equation}

The denominator in \eqref{eq:dr22-drho} is strictly positive for all
$\rho \in (-1,1)$ and $0<\varepsilon<1$, since
\[
1 + \varepsilon^2 + 2\varepsilon\rho
\ge 1 + \varepsilon^2 - 2\varepsilon
= (1 - \varepsilon)^2 > 0.
\]

\paragraph{Non-monotonicity of $r_{22}(\rho)$.}
The numerator in \eqref{eq:dr22-drho} changes sign at
$\rho^* = -\varepsilon$:
\begin{equation}
\frac{\partial r_{22}}{\partial \rho}
\begin{cases}
< 0 & \text{if } \rho < -\varepsilon, \\
= 0 & \text{if } \rho = -\varepsilon, \\
> 0 & \text{if } \rho > -\varepsilon.
\end{cases}
\label{eq:monotonicity-r22}
\end{equation}
Thus $r_{22}(\rho)$ is \emph{not} monotonically increasing on $(-1,1)$.
It attains its minimum at $\rho^* = -\varepsilon$, where
\begin{equation}
r_{22}(-\varepsilon)
= \frac{1 - \varepsilon^2}{\sqrt{1 + \varepsilon^2 - 2\varepsilon^2}}
= \frac{1 - \varepsilon^2}{\sqrt{1 - \varepsilon^2}}
= \sqrt{1 - \varepsilon^2}.
\label{eq:r22-min}
\end{equation}

\paragraph{Monotonicity of MCC.}
Since $1 + \varepsilon\rho > 0$ on $(-1,1)$, we have $|r_{22}| = r_{22}$, and
by \eqref{eq:mcc-sym-clean} the MCC inherits the same monotonicity
structure: decreasing on $(-1, -\varepsilon)$ and increasing on
$(-\varepsilon, 1)$, with minimum
\begin{equation}
\mathrm{MCC}(-\varepsilon)
= \frac{1}{3}\bigl(1 + 2\sqrt{1-\varepsilon^2}\,\bigr).
\label{eq:mcc-min}
\end{equation}
At the boundary values:
\begin{align}
\lim_{\rho \to 1}\, r_{22}(\rho)
&= \frac{1 + \varepsilon}{\sqrt{(1 + \varepsilon)^2}}
= 1,
\label{eq:r22-rho1}\\
\lim_{\rho \to -1^+}\, r_{22}(\rho)
&= \frac{1 - \varepsilon}{\sqrt{(1 - \varepsilon)^2}}
= 1.
\label{eq:r22-rho-1}
\end{align}
Hence $r_{22}(\rho) \to 1$ at \emph{both} extremes $\rho \to \pm 1$, and
$\mathrm{MCC}(\rho) \to 1$ in both limits.  The minimum of MCC is in the
interior, at $\rho = -\varepsilon$.

\begin{figure}
    \centering
    \includegraphics[width=\linewidth]{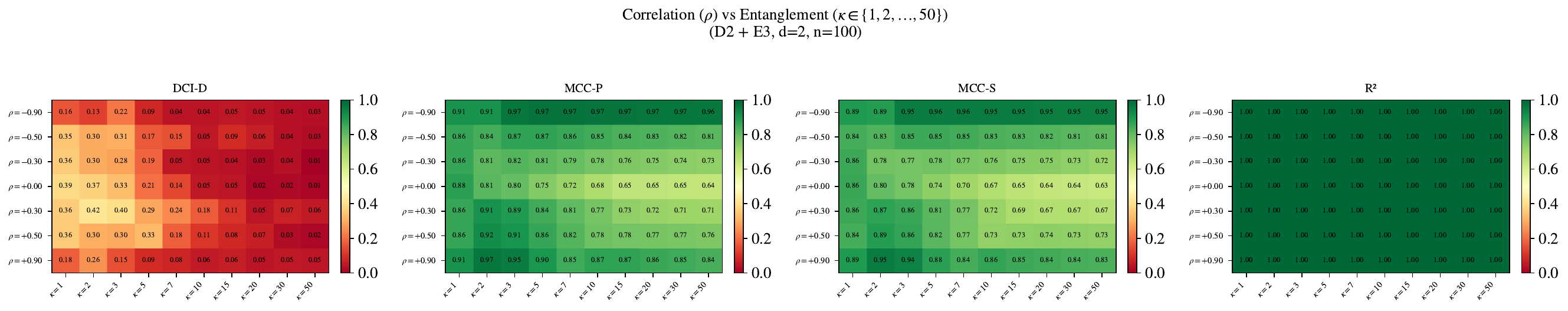}
    \caption{\mcc{} overestimates identifiability as $\rho$ increases.}
    \label{fig:apx-exp04-small}
\end{figure}

\paragraph{Implication.}
Even though $\hat Z_2$ and $\hat Z_3$ remain linearly entangled mixtures of
$Z_2$ and $Z_3$ for all $\varepsilon>0$, the MCC score varies with the
correlation $\rho$ between the ground-truth factors, despite the underlying
entanglement structure of the learned representation being unchanged.

For the practically relevant regime $\rho > 0$, positive correlations
inflate MCC monotonically: the MCC is strictly increasing on
$(0, 1)$ since $0 > -\varepsilon = \rho^*$.  For negative correlations,
the MCC first decreases (reaching its minimum at $\rho = -\varepsilon$)
and then increases again toward~$1$ as $\rho \to -1$.

The non-trivial dependence on $\rho$---including the fact that the minimum
is in the interior and that the MCC approaches~$1$ at \emph{both} boundary
values $\rho \to \pm 1$---demonstrates that MCC conflates representation
quality with the covariance structure of the ground-truth factors.

\bigskip

To illustrate this further, consider an even simpler (degenerate) example:

\begin{equation}
a = b = c = d = 1,
\end{equation}
so that
\begin{equation}
\hat Z_2 = Z_2 + Z_3, \qquad
\hat Z_3 = Z_2 + Z_3.
\end{equation}
In other words, $\hat Z_2$ and $\hat Z_3$ are identical, fully redundant, and both are symmetric mixtures of $Z_2$ and $Z_3$.

Using the same covariance structure as before, with
\[
\Var(Z_2) = \Var(Z_3) = 1,\qquad \Cov(Z_2,Z_3) = \rho,
\]
we compute
\begin{align*}
\Cov(Z_2,\hat Z_2)
&= \Cov\bigl(Z_2, Z_2 + Z_3\bigr)
= \Var(Z_2) + \Cov(Z_2,Z_3)
= 1 + \rho, \\
\Var(\hat Z_2)
&= \Var(Z_2 + Z_3)
= \Var(Z_2) + \Var(Z_3) + 2\Cov(Z_2,Z_3)
= 2 + 2\rho.
\end{align*}
Hence, for $\rho > -1$,
\begin{equation}
\Corr(Z_2,\hat Z_2)
= \frac{1+\rho}{\sqrt{2+2\rho}}
= \sqrt{\frac{1+\rho}{2}}
=: r(\rho).
\label{eq:r-degenerate}
\end{equation}
\emph{Verification via the general formula.}
Setting $\varepsilon = 1$ in \eqref{eq:r22-sym} gives
$r_{22} = (1+\rho)/\sqrt{2 + 2\rho}$, matching \eqref{eq:r-degenerate}.

By symmetry we also have
\begin{align*}
\Corr(Z_3,\hat Z_2)
= \Corr(Z_2,\hat Z_2)
= \Corr(Z_2,\hat Z_3)
= \Corr(Z_3,\hat Z_3)
= r(\rho),
\end{align*}
and $\Corr(Z_1,\hat Z_1)=\pm 1$, $\Corr(Z_1,\hat Z_2)=\Corr(Z_1,\hat Z_3)=0$ as before.

In this case, the two candidate pairings (diagonal and swapped) give the same sum, and the MCC simplifies to
\begin{equation}
\mathrm{MCC}(\rho)
= \frac{1}{3}\bigl(1 + 2\,r(\rho)\bigr)
= \frac{1}{3}\Bigl(1 + 2\sqrt{\tfrac{1+\rho}{2}}\Bigr),
\qquad -1 < \rho \le 1.
\label{eq:mcc-degenerate}
\end{equation}

Note that this degenerate case corresponds to $\varepsilon = 1$, where
the minimum of $r_{22}$ from \eqref{eq:monotonicity-r22} occurs at
$\rho^* = -1$ (the boundary), consistent with $r(\rho)$ being
monotonically increasing on $(-1,1)$.

This expression makes two important properties explicit.

\paragraph{(i) Asymmetry \texorpdfstring{$\mathrm{MCC}(\rho) \neq \mathrm{MCC}(-\rho)$}{MCC(rho) != MCC(-rho)}.}
From \eqref{eq:mcc-degenerate} we obtain
\begin{equation}
r(\rho) = \sqrt{\frac{1+\rho}{2}},
\quad -1 < \rho \le 1.
\label{eq:mcc-rho-def}
\end{equation}
Substituting $-\rho$ in place of $\rho$ yields
\begin{equation}
\mathrm{MCC}(-\rho)
= \frac{1}{3}\left(1 + 2\sqrt{\frac{1-\rho}{2}}\right).
\label{eq:mcc-neg-rho}
\end{equation}
Hence, for any $\rho \in (0,1)$,
\begin{equation*}
\mathrm{MCC}(\rho)
= \frac{1}{3}\left(1 + 2\sqrt{\frac{1+\rho}{2}}\right)
\neq
\frac{1}{3}\left(1 + 2\sqrt{\frac{1-\rho}{2}}\right)
= \mathrm{MCC}(-\rho).
\end{equation*}
Even though the entanglement structure for $(Z_2,Z_3)$ is symmetric under the sign flip $\rho \mapsto -\rho$, MCC values differ for positive and negative correlations.

\paragraph{(ii) Faithfulness issues at extreme correlations.}
The same formula reveals a qualitative difference between the limits $\rho \to 1$ and $\rho \to -1$:
\begin{align}
\lim_{\rho \to 1} r(\rho)
&= \sqrt{\frac{1+1}{2}} = 1, \\
\lim_{\rho \to -1^+} r(\rho)
&= \sqrt{\frac{1+(-1)}{2}} = 0.
\end{align}
When $\rho \to 1$, \cref{eq:mcc-degenerate} yields
$\mathrm{MCC}(\rho) \to (1+2)/3 = 1$.
In contrast, when $\rho \to -1$,
$\mathrm{MCC}(\rho) \to (1+0)/3 \approx 0.33$.

\paragraph{Contrast with the $\varepsilon < 1$ case.}
In the general symmetric mixing with $\varepsilon < 1$,
eqs.~\eqref{eq:r22-rho1}--\eqref{eq:r22-rho-1} show that
$r_{22}(\rho) \to 1$ at \emph{both} $\rho \to +1$ and $\rho \to -1$,
so $\mathrm{MCC}(\rho) \to 1$ in both limits.
The faithfulness collapse ($\mathrm{MCC} \to 0.33$) at $\rho \to -1$ is
specific to the degenerate case $\varepsilon = 1$, where
$\hat Z_2 = \hat Z_3 = Z_2 + Z_3 \approx 0$ when $Z_3 \approx -Z_2$.
For $\varepsilon < 1$, the representations $\hat Z_2 \neq \hat Z_3$ remain
distinct and their correlations with the true factors recover to~$1$ as
$\rho \to -1$.

\bigskip
\noindent
These examples show that MCC depends nontrivially on the covariance structure of the ground-truth factors, independently of the underlying entanglement of the learned representation.  In particular, for a fixed mixing matrix, MCC can be artificially inflated or deflated by the latent correlation $\rho$.

\subsection{Expected behaviour of DCI}
\label{apx:derivations-dci}

We derive properties of DCI that explain the metric's behaviour in
the main-text experiments.
We first recall the construction, then state four results organised
by failure mode.

\paragraph{Construction.}
A supervised probe is trained to predict each ground-truth factor
$z_j$ from the learned representation $\hat{\rvz}$, yielding a
nonnegative importance matrix
$\R \in \sR_{\ge 0}^{m \times d}$,
where $R_{i,j}$ quantifies the contribution of learned feature
$\hat z_i$ in predicting $z_j$.
Row $\R_{i,:}$ summarises which factors feature~$i$ encodes;
column $\R_{:,j}$ summarises which features encode factor~$j$.
The DCI scores are computed from $\R$ alone:

\emph{Disentanglement.}\;
Convert each row to a distribution
$p_{j \mid i} = R_{i,j} / \sum_{k} R_{i,k}$ and measure
concentration:
\[
  D_i = 1 - \frac{H(p_{\cdot \mid i})}{\log d},
  \qquad
  D_{\dci} = \sum_{i=1}^{m} w_i\, D_i,
  \qquad
  w_i = \frac{\textstyle\sum_{j} R_{i,j}}
             {\textstyle\sum_{i',j'} R_{i',j'}}.
\]

\emph{Completeness.}\;
Convert each column to a distribution
$\tilde p_{i \mid j} = R_{i,j} / \sum_{k} R_{k,j}$ and measure
concentration:
\[
  C_j = 1 - \frac{H(\tilde p_{\cdot \mid j})}{\log m},
  \qquad
  C_{\dci} = \sum_{j=1}^{d} v_j\, C_j,
  \qquad
  v_j = \frac{\textstyle\sum_{i} R_{i,j}}
             {\textstyle\sum_{i',j'} R_{i',j'}}.
\]

\emph{Informativeness.}\;
$I_{\dci} = \frac{1}{d} \sum_{j} (1 - L_j)$, where
$L_j$ is the normalised prediction loss (e.g., $1 - R^2_j$) of the
probe for factor~$j$.

\medskip\noindent
The weights $w_i$ and $v_j$ are proportional to total importance:
features or factors with negligible importance contribute negligibly
to the global scores.
This weighting is the source of the first failure mode.


\begin{proposition}[Dropped factors are invisible to \dci]
\label{prop:dci-missing}
Under \dcrl{1}~+~\ebeyond{4}{oo} with $|S| = m < d$ perfectly
identified factors, as $n \to \infty$:
$D_{\dci} \to 1$ and $C_{\dci} \to 1$.
\end{proposition}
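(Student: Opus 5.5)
The plan is to compute the population limit of the importance matrix $\R$ and read off both scores. Under \dcrl{1} + \ebeyond{4}{oo} we have $\hat z_j = a_j z_{i(j)}$ for $j=1,\dots,m$, with distinct indices $i(j)$ forming $S$. Because the factors are mutually independent, every factor $z_k$ with $k\notin S$ is independent of the entire representation $\hat\rvz$. Hence its Bayes-optimal predictor from $\hat\rvz$ is the constant $\mathbb{E}[z_k]$. For $k = i(j)\in S$, the Bayes-optimal predictor is $\hat z_j/a_j$, which is a function of the single code $\hat z_j$.

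The first step is to argue that, as $n\to\infty$, the probe's importances converge to a limit $\R^\ast$ supported only on the entries $(j, i(j))$. This limit has:
\begin{itemize}
\item $R^\ast_{j,i(j)}>0$ for each code $j$;
\item $R^\ast_{j,k}=0$ for every $k\neq i(j)$;
\item a column $R^\ast_{:,k}$ that is identically zero for $k\notin S$.
\end{itemize}
For a linear probe (lasso or OLS with coefficient magnitudes as importances) this follows from consistency of the regression coefficients: the population coefficient vector for $z_k$ is $e_j/a_j$ when $k=i(j)$, and $0$ when $k\notin S$. For GBT or random-forest importances we would invoke consistency of impurity-based importances. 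Irrelevant features, and in particular all features when the target is independent of the inputs, receive asymptotically vanishing importance. I expect this to be the main obstacle, because tree importances are not exactly zero at finite $n$. The strategy is to state the result under the assumption that the importance estimator is consistent, i.e.\ $\R\to\R^\ast$ in probability. We then use continuity of $D_{\dci}$ and $C_{\dci}$ as functions of $\R$ in a neighbourhood of $\R^\ast$. This continuity holds because every row of $\R^\ast$ and every column with positive mass are bounded away from zero, so the normalisations are well defined.

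The second step evaluates the scores at $\R^\ast$. Each row $i$ puts all of its mass on one factor, so $p_{\cdot\mid i}$ is a point mass, $H=0$, and $D_i=1$. The weights $w_i$ sum to one, so $D_{\dci}=1$. The dropped factors do not appear in any row at all, since the row entropy only sees mass that exists.

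For completeness, each column $j\in S$ has its mass on a single code, giving $C_j=1$. Each column $k\notin S$ has total importance $0$, so $v_k=0$ and it contributes nothing, whatever convention is used for $C_k$ on an empty column. Since $\sum_j v_j=1$ over the nonzero columns, $C_{\dci}=1$. This makes the point of the proposition explicit: the importance-proportional weights $w_i$ and $v_j$ erase the $d-m$ missing factors. Their loss is visible only through informativeness, $I_{\dci}\to m/d$.

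The third step closes the argument. Combining the convergence $\R\to\R^\ast$ with the continuity of $D_{\dci}$ and $C_{\dci}$ at $\R^\ast$ gives $D_{\dci}\to 1$ and $C_{\dci}\to 1$ in probability as $n\to\infty$. The continuity check has one delicate point: the vanishing columns enter $C_{\dci}$ through $v_kC_k$ with $C_k\in[0,1]$ bounded. So their contribution is at most $v_k\to 0$, even though $C_k$ itself may fluctuate.
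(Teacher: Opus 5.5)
Your proposal is correct and takes essentially the same route as the paper's proof. Each retained factor puts all its importance on its single matched code, giving $D_i = 1$ and $C_j = 1$, while dropped factors receive zero importance and are erased by the importance-proportional weights, so they never penalise $D$ or $C$. Where the paper only asserts $R_{i,j}\approx 0$ for dropped factors, you state consistency of the importance estimator as an explicit assumption and finish with a continuity argument (including the bound $v_k C_k \le v_k$), which makes the same idea precise. One caveat applies equally to the paper: this premise needs unnormalised importances, because per-factor-normalised ones (e.g.\ default gradient-boosting feature importances, which sum to one for each factor) cannot converge to a zero column.
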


\begin{proof}
For retained factors ($j \in S$), the encoder is elementwise, so the
probe importance concentrates on a single coordinate:
$p_{j \mid i}$ and $\tilde p_{i \mid j}$ are one-hot for the
matched pair, giving $D_i = 1$ and $C_j = 1$.

For discarded factors ($j \notin S$), no learned feature predicts
them: $R_{i,j} \approx 0$ for all~$i$.
Their weight $v_j \propto \sum_i R_{i,j} \approx 0$ vanishes from
$C_{\dci}$.
Likewise, the rows corresponding to codes that encode only retained
factors carry all the weight in $D_{\dci}$.

DCI does not verify that \emph{all} factors are represented; factors
that are never encoded produce zero importance, vanish from the
weighted averages, and do not penalise the score.
\end{proof}

\parless{Implication}: This explains the DCI-D false positive in \cref{fig:dropping_d1_d3} (left, \dcrl{1}): as factors are dropped, $D$ or $C$ do not penalise omission (only $I_{\dci}$ would drop).

\begin{proposition}[Functional dependence decreases $D$ under a perfect encoder]
\label{prop:dci-functional}
Under \dbeyond{3} with $z_2 = f(z_1)$ (deterministic) and a perfect elementwise encoder \ecrl{1}{oo} ($\hat z_j = a_j z_j$), a nonlinear probe (e.g., gradient boosted trees) yields $D_{\dci} < 1$.
\end{proposition}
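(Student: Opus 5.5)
The plan is to exhibit a single row of the importance matrix $\R$ whose distribution $p_{\cdot\mid i}$ is not one-hot, and then argue that this row carries positive weight $w_i$. Since each $D_i\le 1$ and the $w_i$ sum to one, one row with $D_i<1$ and $w_i>0$ already forces $D_{\dci}=\sum_i w_i D_i<1$. The natural candidate is the code $\hat z_1=a_1 z_1$.

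First I would describe the probe's task. Under \dbeyond{3} with $z_2=f(z_1)$, factor $z_2$ is a deterministic function of $\hat z_1$ via $z_2=f(\hat z_1/a_1)$. It is also a deterministic function of $\hat z_2=a_2 z_2$. A tree-based regressor predicting $z_2$ from $(\hat z_1,\dots,\hat z_d)$ has two perfectly informative candidate features. GBT importance is the accumulated impurity reduction over splits. Under the randomness of boosting (subsampling, finite-sample ties, greedy splitting with near-equal gains), splits on both $\hat z_1$ and $\hat z_2$ occur with positive probability.

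Cleanest is to argue at the population/tie level. For any threshold split on $\hat z_2$ there is an equivalent split on $\hat z_1$ achieving the same impurity reduction. If $f$ is monotone, it induces the same partition of the samples; in general, the best split on $\hat z_1$ is at least as good. Hence the split-selection rule cannot strictly prefer $\hat z_2$. By the same argument, for predicting $z_1$, the feature $\hat z_2$ is useful when $f$ is invertible (case \dbeyond{3}A). Consequently $R_{1,2}>0$ alongside $R_{1,1}>0$, so $p_{\cdot\mid 1}$ has at least two nonzero entries. By strict concavity of entropy, $H(p_{\cdot\mid 1})>0$, which gives $D_1<1$. Finally, $w_1>0$ because $\hat z_1$ carries nonzero importance for predicting $z_1$ itself.

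The main obstacle is the rigour of "$R_{1,2}>0$". Tree importances depend on tie-breaking and implementation, and a deterministic tie-breaker could in principle always choose $\hat z_2$. I would handle this by making the claim about the standard GBT setup with feature subsampling (or column subsampling with positive probability of excluding $\hat z_2$). In such a round the best split for $z_2$ must use $\hat z_1$, and it strictly reduces the loss because $z_2$ is nonconstant, so it contributes positive importance. This yields $\mathbb E[R_{1,2}]>0$. Without subsampling, I would instead appeal to the finite-sample noise argument: empirical gains of equivalent splits differ only through ties, and randomized tie-breaking gives the same conclusion. I would state this assumption explicitly as part of the proof.
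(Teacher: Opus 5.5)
Your skeleton is the paper's proof: one non-one-hot row of the importance matrix with positive weight $w_i$ forces $D_{\dci}=\sum_i w_iD_i<1$. The paper only asserts $R_{1,2}>0$ (and $R_{2,1}>0$ for invertible $f$) because $\hat z_1$ determines $z_2$, so your attempt to justify that step is more careful than the original.

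One step of your justification is wrong: the claim that ``in general, the best split on $\hat z_1$ is at least as good'' fails for non-monotone $f$.
\begin{itemize}
\item Take $z_1$ symmetric and $z_2=z_1^2$. For the target $z_2$ itself, $\hat z_2=a_2z_2$ is a rescaling of the target. Its best threshold split is therefore optimal among \emph{all} binary partitions of a node under squared loss, because optimal one-dimensional two-means clusters are intervals.
\item A threshold on $\hat z_1$ isolates only one tail of $|z_1|$, so it is strictly worse. Likewise, $\hat z_2$ is strictly worse than $\hat z_1$ for predicting $z_1$.
\item So the splitter \emph{does} strictly prefer $\hat z_2$ when fitting $z_2$, and your tie argument yields nothing. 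In the population, a GBT without column subsampling can end up with an essentially diagonal importance matrix.
\end{itemize}
Your tie route therefore covers only strictly monotone $f$. That includes $z_1^3$ and every continuous invertible $f$, i.e.\ the \dbeyond{3}A case the paper actually uses. For general \dbeyond{3}, your subsampling assumption does the real work, since without it nothing forces $R_{1,2}>0$. What it delivers is $\mathbb{E}[D_{\dci}]<1$, not an unconditional inequality.

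In the strictly monotone case, using both rows (as the paper does) removes the need for randomisation. It also fixes a point your row-1 argument glosses over.
\begin{itemize}
\item Every threshold on $\hat z_1$ induces exactly the same partition of the training samples as some threshold on $\hat z_2$. The two features are therefore tied at every split of \emph{both} probes, for target $z_1$ as well as $z_2$.
\item Hence $R_{1,1}>0$ is no more automatic than $R_{1,2}>0$. A tie-breaker that always picks $\hat z_2$ gives $R_{1,1}=R_{1,2}=0$, so row~1 is not the witness. But it gives $R_{2,1}>0$ and $R_{2,2}>0$, so row~2 is non-one-hot. Always picking $\hat z_1$ makes row~1 the witness instead.
\item $D_{\dci}=1$ would require all ties for one target to go to one feature and all ties for the other target to go to the other. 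A fixed feature order never does this, and a random per-node order does so only with vanishing probability.
\end{itemize}
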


\begin{proof}
Since $z_2 = f(z_1)$ exactly, $\hat z_1 = a_1 z_1$ perfectly
determines $z_2$ via $f$, so the nonlinear probe assigns $R_{1,2} > 0$
in addition to $R_{1,1} > 0$.
Symmetrically, when $f$ is invertible (e.g., $f(z) = z^3$),
$\hat z_2 = a_2 f(z_1)$ determines $z_1$ via $f^{-1}$, so
$R_{2,1} > 0$ and $D_2 < 1$.
The remaining $d - 2$ codes are independent and achieve $D_i = 1$,
but the deflated $D_1$ and $D_2$ pull down $D_{\dci}$ through their
nonzero weights $w_1, w_2 > 0$.
\end{proof}

\noindent
With a linear probe (e.g., Lasso), the result can differ.
For $z_1 \sim \cN(0,1)$ and $z_2 = z_1^3$, the population normal
equations yield zero cross-coefficients---$z_1$ and $z_1^3$ are
linearly orthogonal under Gaussian moments---so $\R$ is diagonal and
$D_{\dci} = 1$.
The deflation under \dbeyond{3} is therefore \emph{probe-dependent}:
it arises only when the probe is expressive enough to detect the
functional relationship $f$.

\parless{Implication.} This explains the \dci{}-D dip in the \dbeyond{3} panels of \cref{fig:dropping_d1_d3} (right): even at $m = d$ (dashed line), DCI-D is below~$1.0$ because the functional constraint between $z_1$ and $z_2$ spreads importance across the corresponding codes.

\subsection{MCC false-positive rate under null encoders}
\label{apx:mcc-false-positive}

We derive the expected behaviour of \mcc{}-P when the learned
representation is independent of the ground-truth factors,
explaining the inflation observed in \cref{fig:exp15-null}.

\paragraph{Setup.}
Let $\rvz \in \sR^d$ and $\hat{\rvz} \in \sR^m$ be independent
random vectors (null encoder), and let
$(z^{(1)}, \hat z^{(1)}), \dots, (z^{(n)}, \hat z^{(n)})$
be $n$ i.i.d.\ paired samples.
The sample Pearson correlation between $\hat z_i$ and $z_j$ is
\[
  \hat\rho_{ij}
  = \frac{\sum_{t=1}^{n}
    (\hat z_i^{(t)} - \bar{\hat z}_i)(z_j^{(t)} - \bar z_j)}
    {\sqrt{\sum_{t}(\hat z_i^{(t)} - \bar{\hat z}_i)^2}
     \;\sqrt{\sum_{t}(z_j^{(t)} - \bar z_j)^2}}.
\]
Since $\hat{\rvz} \perp \rvz$, the true correlation is
$\rho_{ij} = 0$ for all $i, j$.

\paragraph{Distribution of sample correlations under the null.}
For bivariate normal data with $\rho = 0$, the sample
correlation satisfies
\begin{equation}
  \frac{\hat\rho\,\sqrt{n-2}}{\sqrt{1 - \hat\rho^2}}
  \sim t_{n-2}
  \label{eq:fisher-t}
\end{equation}
exactly \citep{fisher1922mathematical}.
For non-Gaussian data, the exact $t$-distribution does not
hold, but the asymptotic result
$\sqrt{n}\,\hat\rho \xrightarrow{d} \cN(0, 1)$ follows from
the Central Limit Theorem (CLT) \citep{hoeffding1992class}.
In either case, for large~$n$,
\begin{equation}
  \hat\rho_{ij}
  \;\overset{\text{approx}}{\sim}\;
  \cN\!\left(0,\; \frac{1}{n}\right).
  \label{eq:rho-null}
\end{equation}

\paragraph{Maximum absolute correlation.}
\mcc{}-P computes the $m \times d$ matrix of absolute sample
correlations $|\hat\rho_{ij}|$ and applies Hungarian matching
to find the optimal one-to-one assignment.

Consider a single column~$j$.
The entries $\{|\hat\rho_{ij}|\}_{i=1}^{m}$ are approximately
half-normal with scale $1/\sqrt{n}$.
(They are not exactly independent---they share the $z_j^{(t)}$
samples---but the dependence is weak under the null since the
$\hat z_i$ are independent across rows; \citet{cai2011limiting}
handle this rigorously.)
The maximum of $m$ such entries satisfies, by standard extreme
value theory for Gaussian maxima,
\begin{equation}
  \mathbb{E}\!\left[\max_{i=1}^{m} |\hat\rho_{ij}|\right]
  \;\approx\;
  \sqrt{\frac{2 \log m}{n}}.
  \label{eq:max-corr}
\end{equation}

\paragraph{MCC-P under the null.}
To lower-bound the Hungarian matching, consider a greedy
assignment: assign column~$1$ its best row, remove that row,
assign column~$2$ its best among the remaining $m - 1$ rows,
and so on.
This produces a valid one-to-one assignment, and column~$j$
selects from $m - j + 1$ remaining candidates.
When $m \gg d$, every column still has ${\approx}\, m$
candidates, and the greedy score is close to the average
column-wise maximum.
Since the Hungarian matching is optimal over all one-to-one
assignments, it scores at least as high as the greedy, giving
\begin{equation}
  \mathbb{E}[\mcc\text{-P}]
  \;\gtrsim\;
  \frac{1}{d}\sum_{j=1}^{d}
  \sqrt{\frac{2\log(m - j + 1)}{n}}
  \;\approx\;
  \sqrt{\frac{2 \log m}{n}}
  \qquad\text{when } m \gg d.
  \label{eq:mcc-lb}
\end{equation}
This bound is non-negligible whenever $\log m / n$ is not
small.
In practice this inflation is substantial even at moderate
ratios:

\begin{center}
\small
\begin{tabular}{@{}lcc@{}}
\toprule
$m/n$ & $\sqrt{2\log m / n}$ (bound) &
\mcc{}-P (observed, $m/d{=}1$) \\
\midrule
$0.1$  & $0.21$ & $\sim 0.3$ \\
$0.2$  & $0.30$ & $\sim 0.5$ \\
$0.5$  & $0.48$ & $\sim 0.83$ \\
$1.0$  & $0.68$ & $\sim 0.95$ \\
\bottomrule
\end{tabular}
\end{center}

\noindent
The bound captures the correct scaling: it explains why $m/n$
governs the false-positive rate. But, it underestimates the
magnitude at practical sample sizes for two reasons:
(i)~the extreme value approximation is loose at small~$m$; and
(ii)~at small~$n$, the exact null distribution of $\hat\rho$
follows a scaled $t_{n-2}$ (\cref{eq:fisher-t}), which has
heavier tails than the Gaussian, pushing the maximum
correlation above the asymptotic prediction.

\paragraph{Extension to MCC-S.}
Spearman correlation is the Pearson correlation applied to
ranks.  Under independence, the sample Spearman correlation
also satisfies $\hat\rho^{S}_{ij} \approx \cN(0, 1/n)$
\citep{hotelling1936rank}, so the extreme value argument
above applies verbatim: the $\sqrt{2\log m / n}$ floor
governs both MCC-P and MCC-S. 

\paragraph{Why $m/n$ governs and $m/d$ does not.}
The bound \eqref{eq:max-corr} depends on $m$ (candidates per
column) and $n$ (sample size), but not on~$d$ (number of
columns).
Adding more ground-truth factors adds more columns to the
matching problem but does not change the distribution of each
column's maximum.
The MCC averaging divides by~$d$, but since each column
contributes approximately the same expected maximum, the
average is $\approx \sqrt{2\log m / n}$ regardless of~$d$.
This is consistent with the empirical observation in
\cref{fig:exp15-null}: reading along rows (fixed $m/d$,
varying $m/n$), scores increase; reading along columns (fixed
$m/n$, varying $m/d$), scores are approximately constant.

\paragraph{Comparison with \rsq{} and \dci{}-D.}
\rsq{} uses cross-validated nonlinear regression, which does
not exploit the maximum over candidates: it predicts each
factor independently and averages the explained variance.
Under the null, the cross-validated $R^2_j \approx 0$ for
each factor (overfitting is penalised by the held-out
evaluation), so $\rsq \approx 0$ regardless of $m/n$.

\dci{}-D trains a Lasso probe for each factor.
Under the null, the $\ell_1$ penalty shrinks most coefficients
to zero, but a few features can be spuriously selected---
particularly when $m$ is large relative to $n$
\citep{buhlmann2011statistics}.
The resulting importance matrix has most entries near zero;
the moderate inflation at high $m/n$ and low $m/d$ in
\cref{fig:exp15-null} is consistent with a small number of
spuriously selected features spreading enough importance mass
to inflate the disentanglement score.

\section{Experiments}
\label{apx:expts}

\subsection{Sanity Checks}

\begin{figure}
    \centering
    \includegraphics[width=\linewidth]{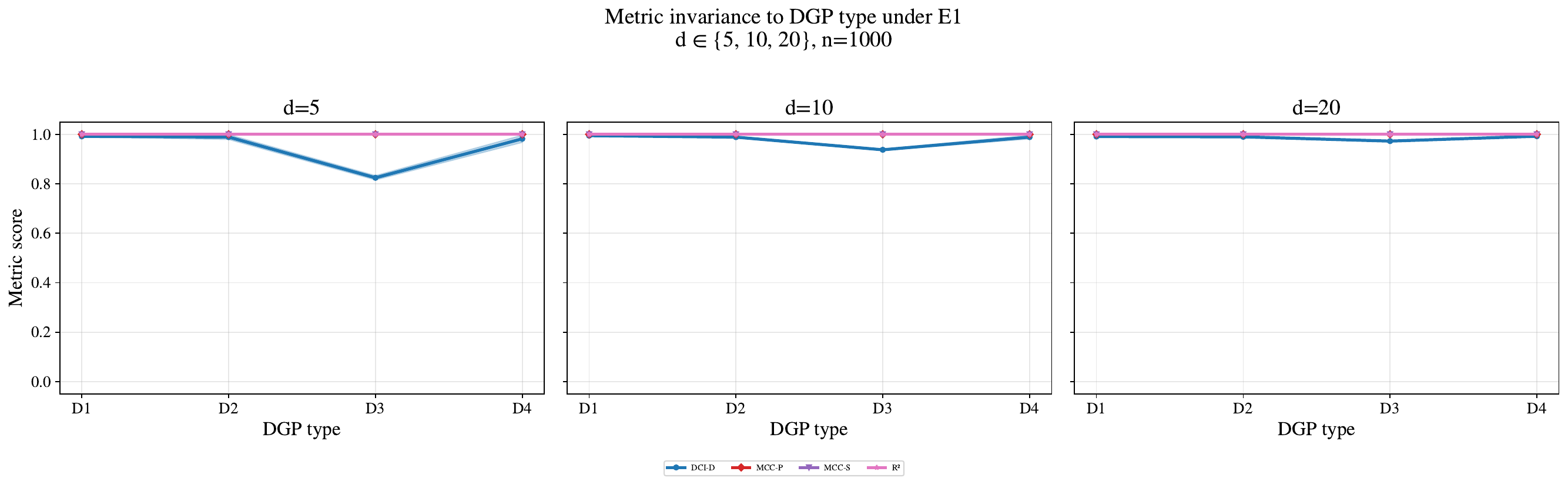}
    \caption{\dci{}-D is not stable for \dcrl{3}.}
    \label{fig:apx-exp01-main}
\end{figure}

\begin{figure}
    \centering
    \includegraphics[width=\linewidth]{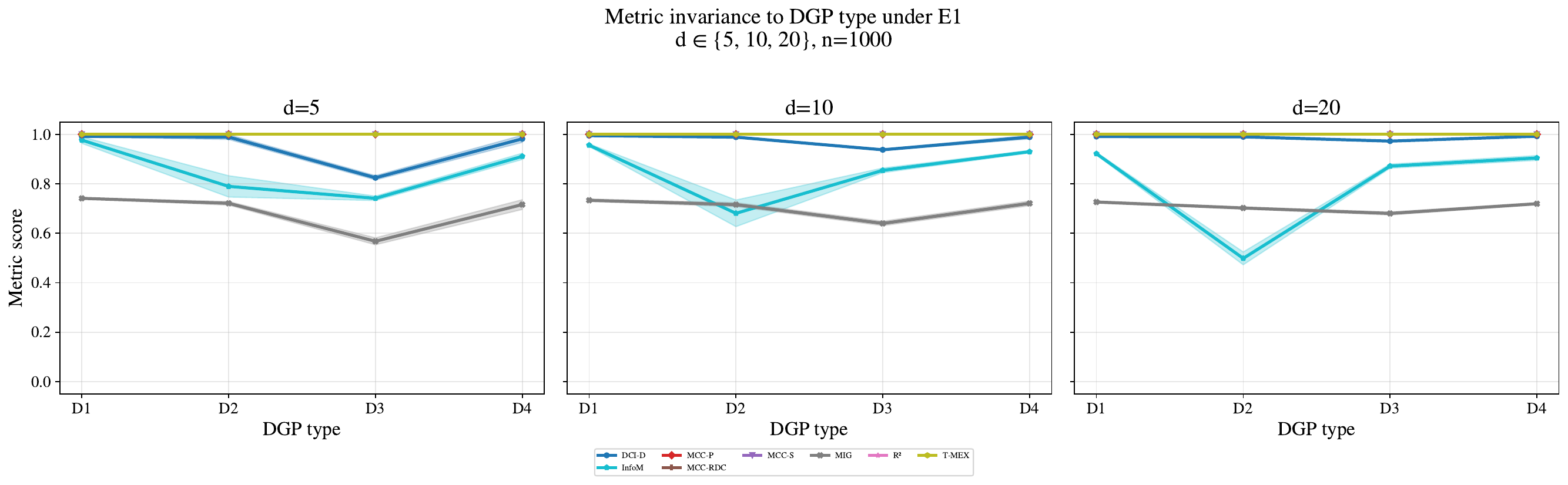}
    \caption{MIG reports $<1$ even for the ideal \dcrl{1}--\ecrl{1}{oo} case.}
    \label{fig:apx-exp01-apx}
\end{figure}

\begin{figure}
    \centering
    \includegraphics[width=0.6\linewidth]{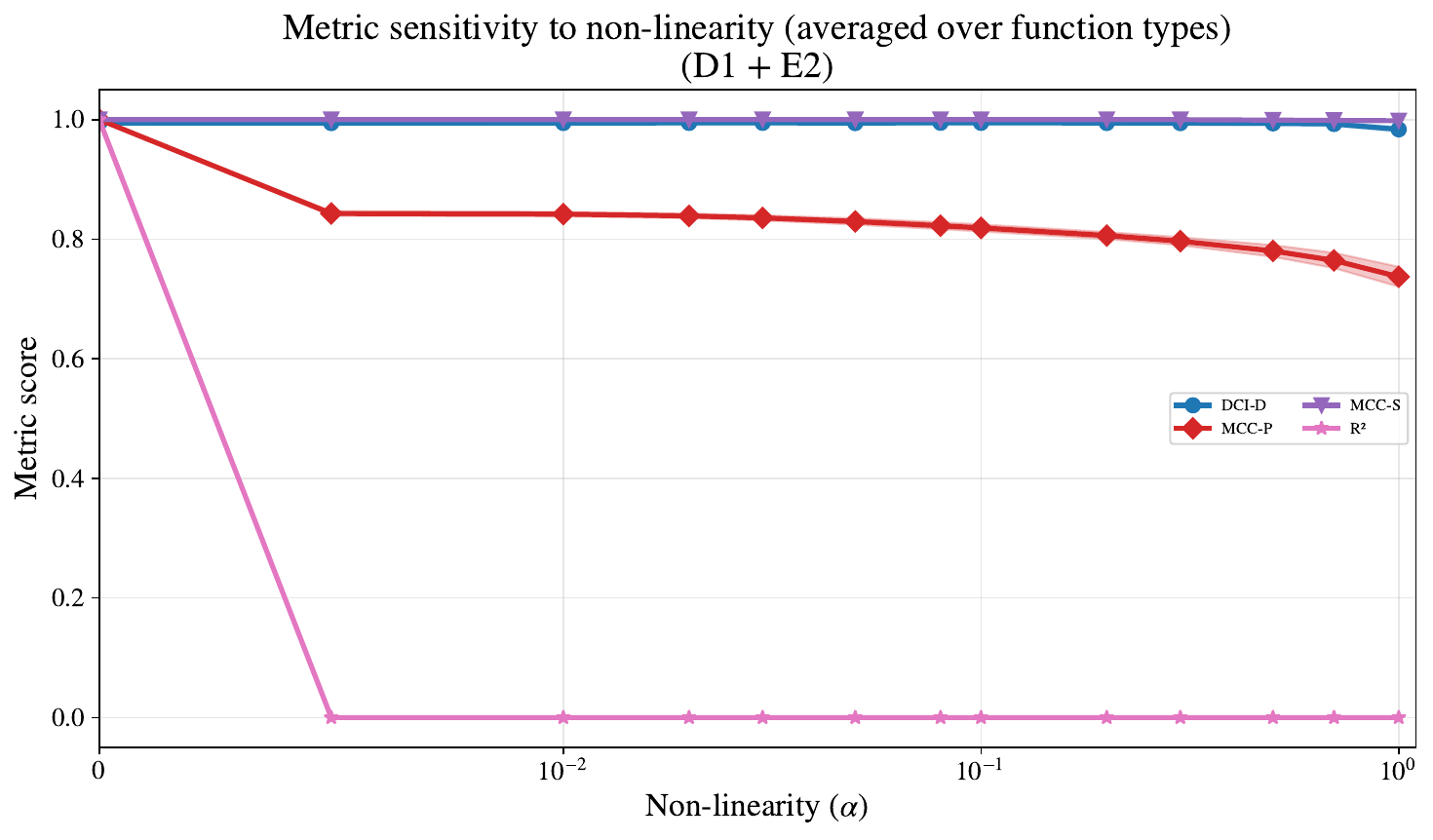}
    \caption{\dci{}-D and \mcc{}-S are quite stable against increasing non-linearity strength, hence reliable for evaluating \ecrl{2}{oo}.}
    \label{fig:apx-exp02-main}
\end{figure}

\begin{figure}
    \centering
    \includegraphics[width=0.6\linewidth]{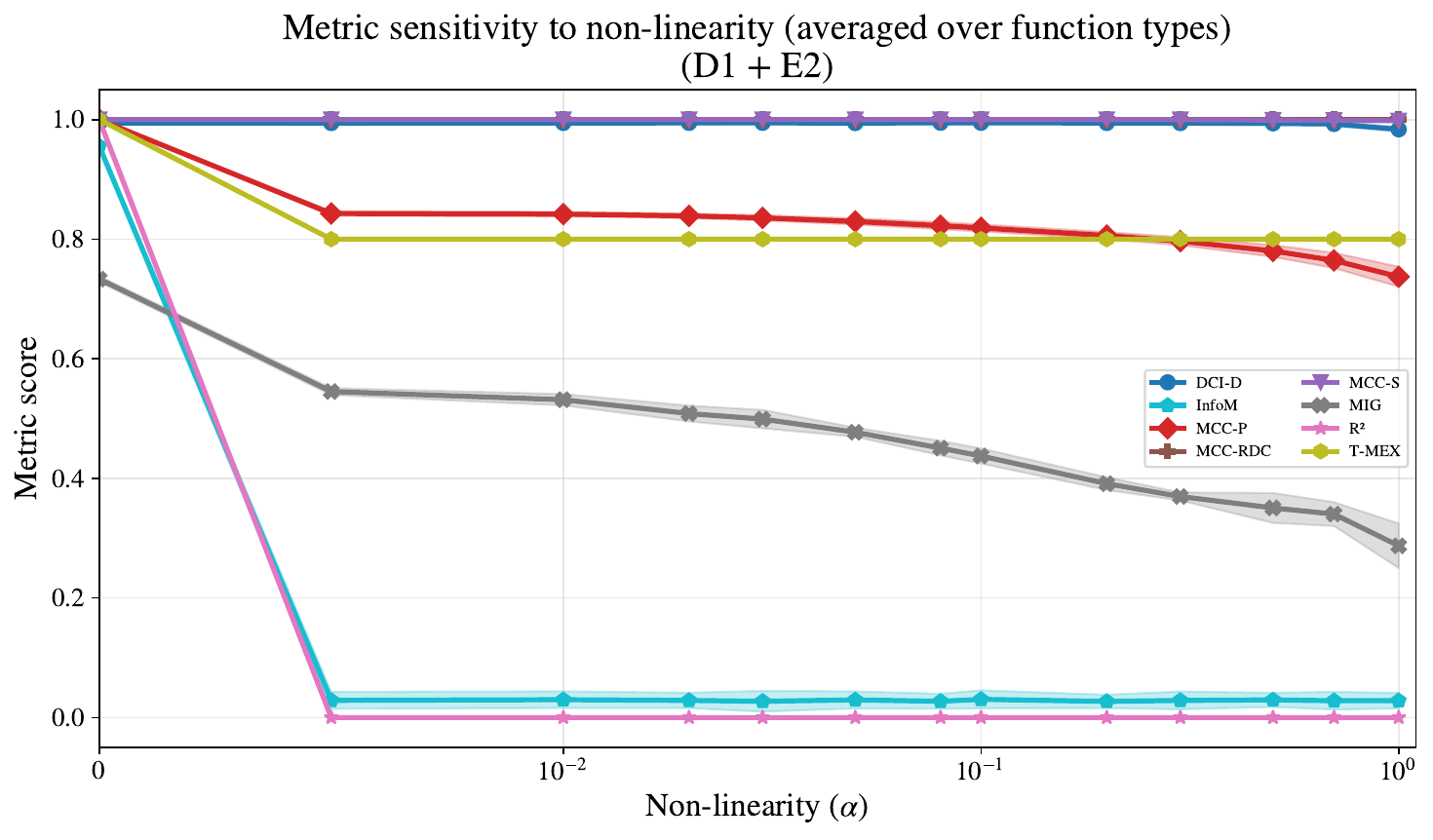}
    \caption{MI-based metrics struggle to evaluate \ecrl{2}{oo}.}
    \label{fig:apx-exp02-apx}
\end{figure}

\begin{figure}
    \centering
    \includegraphics[width=0.75\linewidth]{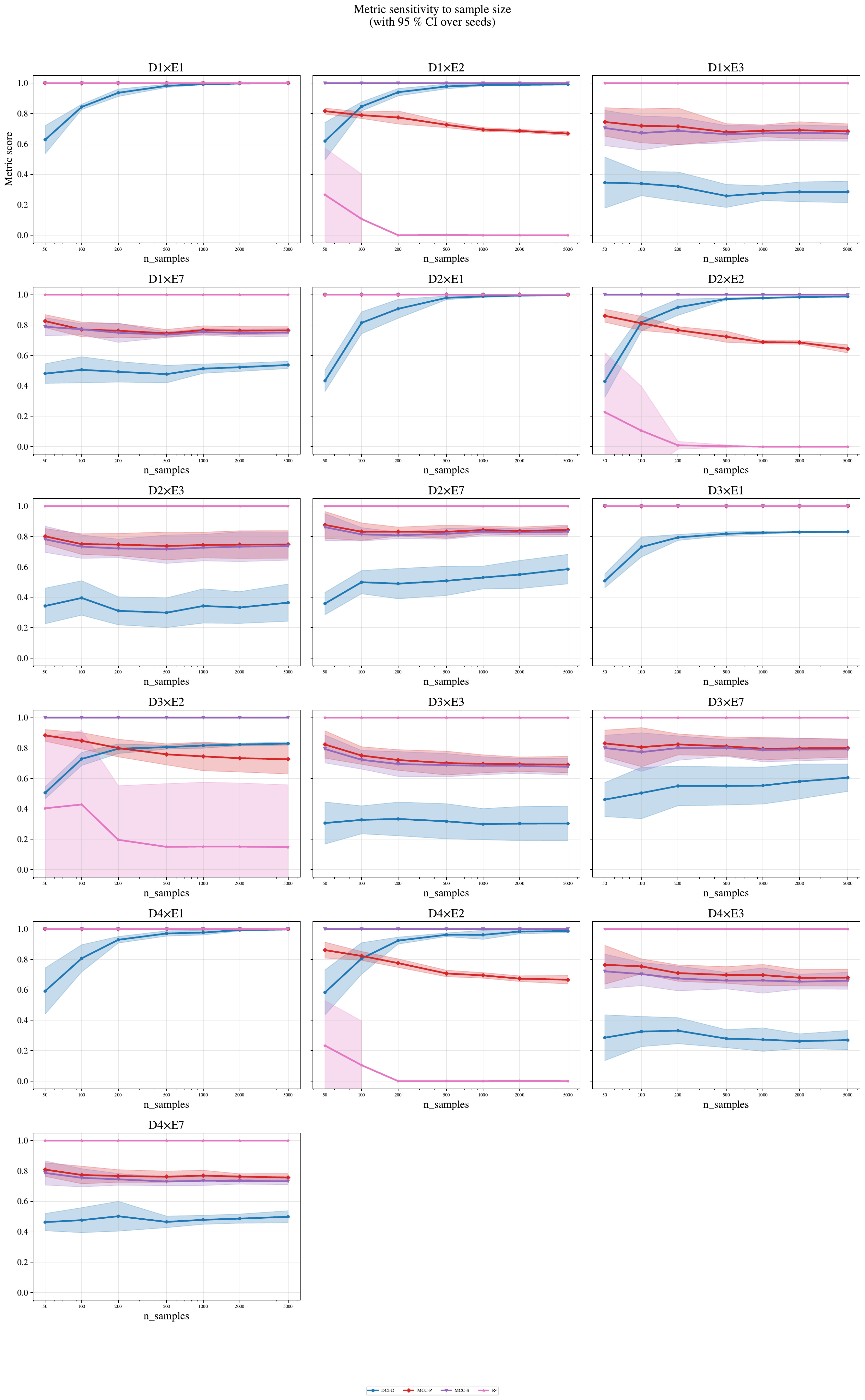}
    \caption{Sample sensitivity of four main metrics (DCI-D, MCC-P, MCC-S, R²) across all DGP ×
  encoder combinations ($n \in {50, \dots, 5000}$, $d{=}5$). MCC-P and MCC-S are
  sample-efficient: they stabilise by $n{=}100$ in most settings. DCI-D requires $n
  \gtrsim 500$ to converge, with wide confidence intervals at $n{<}200$, particularly
  under entangled encoders (E2, E3). The overcomplete encoder E7 ($m{=}2d{=}10$) inflates DCI-D variance at low $n$ because twice as many codes
  must be estimated from the same number of samples.}
    \label{fig:apx-exp10-main}
\end{figure}

\begin{figure}
    \centering
    \includegraphics[width=0.75\linewidth]{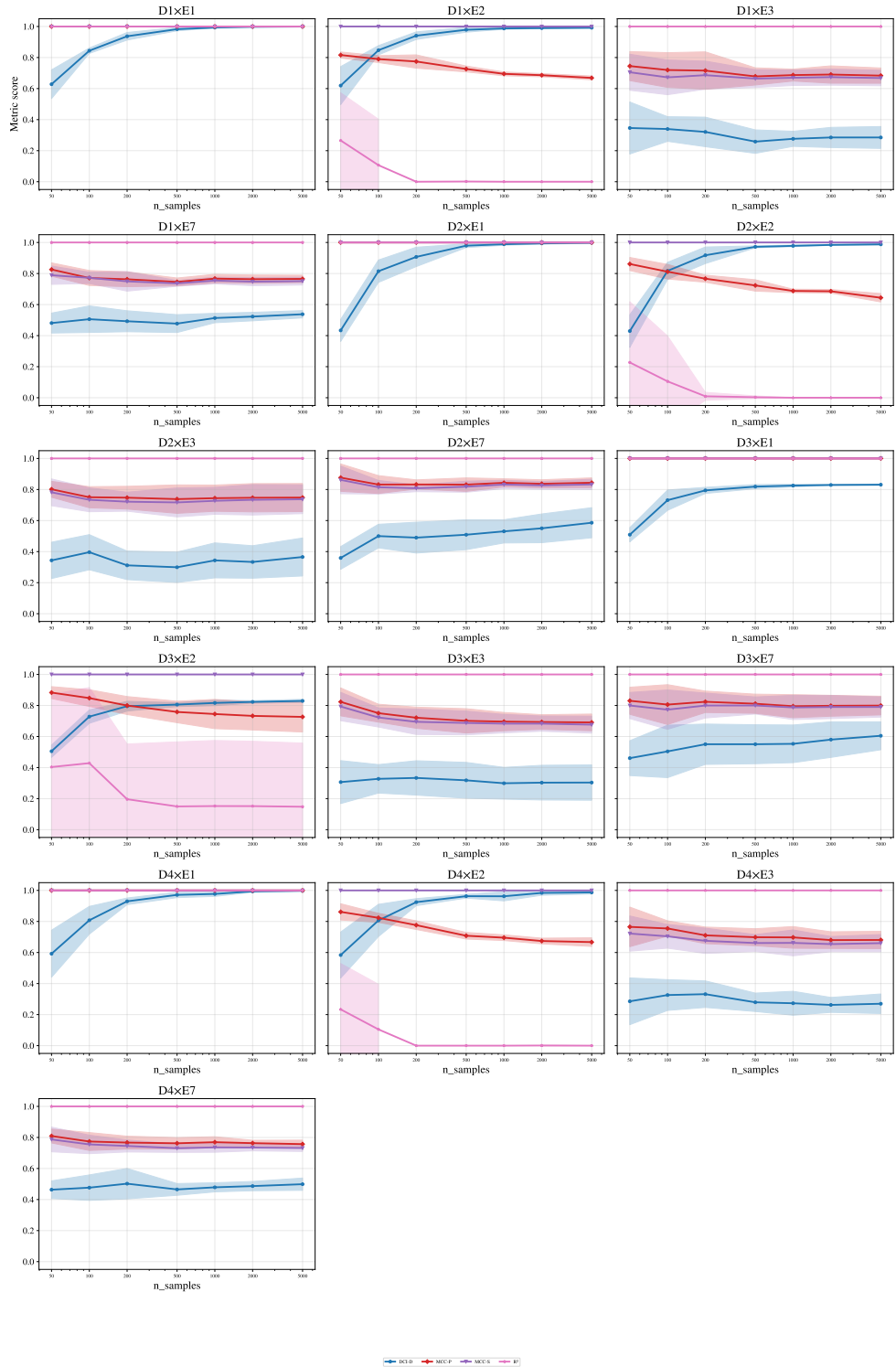}
    \caption{Full metric suite (adding InfoM, MIG, MCC-RDC, T-MEX to the main four). InfoM and MIG
   produce NaN at $n{=}50$ across all DGP × encoder combinations (visible as missing
  lines), making them unusable below $n \approx 100$. MCC-RDC converges slowly under
  nonlinear encoders (E2), lagging behind MCC-P and MCC-S until $n \gtrsim 1000$. T-MEX
   likewise returns NaN universally at small $n$. The variance heatmap confirms that R²
   under E2 is the single worst-case cell (std $\approx 0.35$ at $n{=}50$), while
  correlation-based metrics remain stable (std $< 0.01$). Overall, metrics split into
  two reliability tiers: correlation-based measures (MCC-P, MCC-S) are robust across
  sample sizes, while predictor-based (R², DCI-D) and information-theoretic (InfoM,
  MIG, T-MEX) metrics require $n \gtrsim 500$ for trustworthy estimates.}
    \label{fig:apx-exp10-apx}
\end{figure}

\subsection{Correlation among latent factors}

\begin{figure*}[t]
    \centering
    \includegraphics[width=\linewidth]{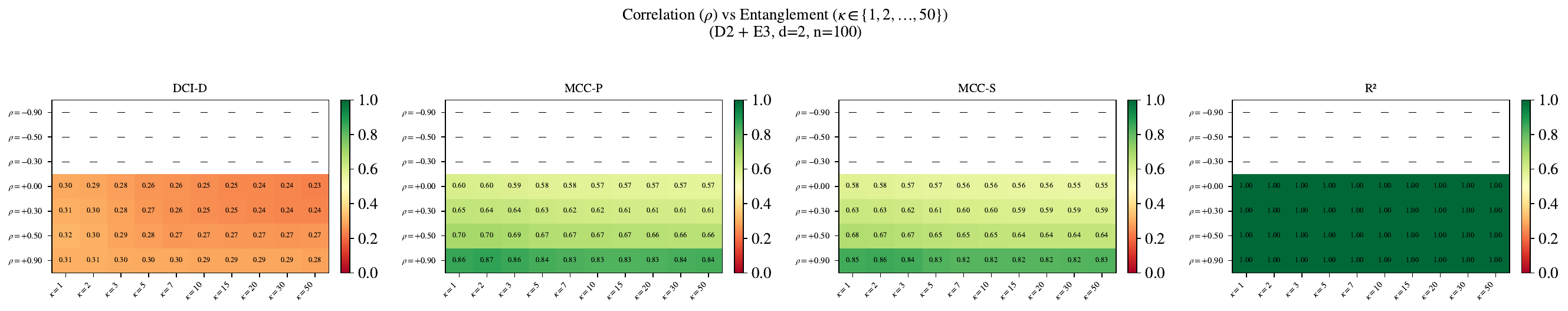}
    \caption{\textbf{Disentangling the effects of correlation~$\rho$
    and entanglement~$\kappa$ ($d{=}10$).}
    An ideal metric would vary only along columns (increasing $\kappa$, i.e.\ worse entanglement) and be constant along rows (changing $\rho$). \mcc{}-P and \mcc{}-S show clear row-wise gradients, confirming violation of \cref{prop:rho-invariance}.
    \dci{}-D is more stable but collapses to near-zero even under moderate entanglement (for all $\kappa > 1$).}
    \label{fig:exp04-rho-kappa}
\end{figure*}

\begin{figure}
    \centering
    \includegraphics[width=\linewidth]{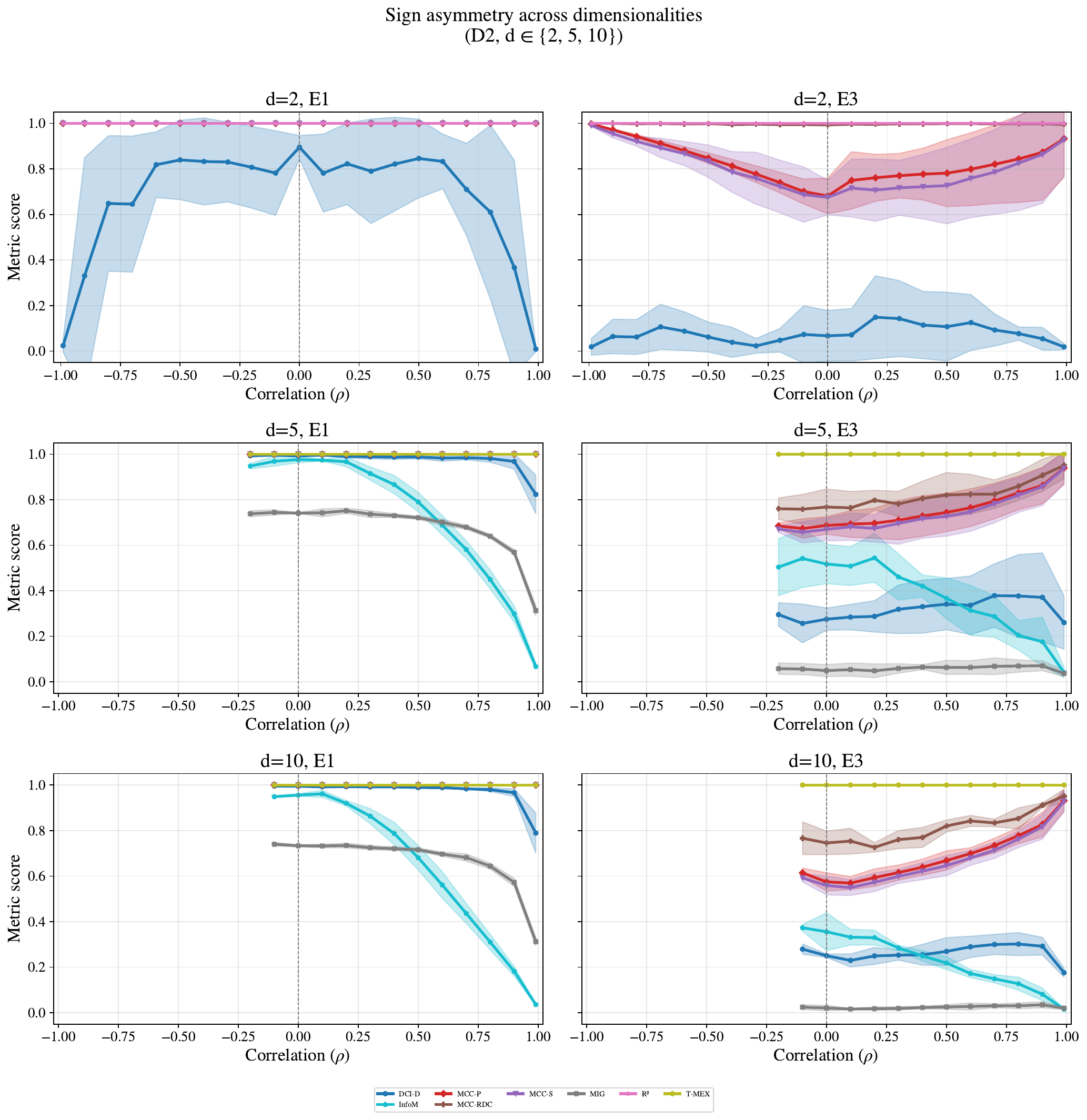}
    \caption{Full metric suite across dimensionalities. The additional metrics reveal two distinct
   behaviours. MIG (grey) and MCC-RDC (cyan) degrade sharply with increasing $|\rho|$
  under E1 at $d \geq 5$: MIG drops from ${\sim}0.8$ at $\rho{=}0$ to ${\sim}0.2$ at
  $\rho{=}0.99$, while MCC-RDC follows a similar decline, indicating that these metrics
   conflate inter-factor correlation with non-identifiability. InfoM and T-MEX are
  absent at $d{=}2$ (NaN) but appear at $d \geq 5$; T-MEX (yellow) remains flat near
  1.0 under E1 regardless of $\rho$, showing complete sign- and correlation-invariance.
   Under E3, the metric spread widens substantially: InfoM (dark blue) tracks DCI-D but
   at lower absolute values, while MCC-RDC drops more steeply than MCC-P, suggesting
  that the RDC kernel is sensitive to the interaction between entanglement and
  inter-factor correlation. The key takeaway is that correlation-based metrics (MCC-P,
  MCC-S) and T-MEX are robust to the sign of $\rho$, while tree-based (DCI-D) and
  MI-based (MIG) metrics are sensitive to it, particularly at low $d$.}
    \label{fig:apx-exp03-apx}
\end{figure}

\begin{figure}
    \centering
    \includegraphics[width=\linewidth]{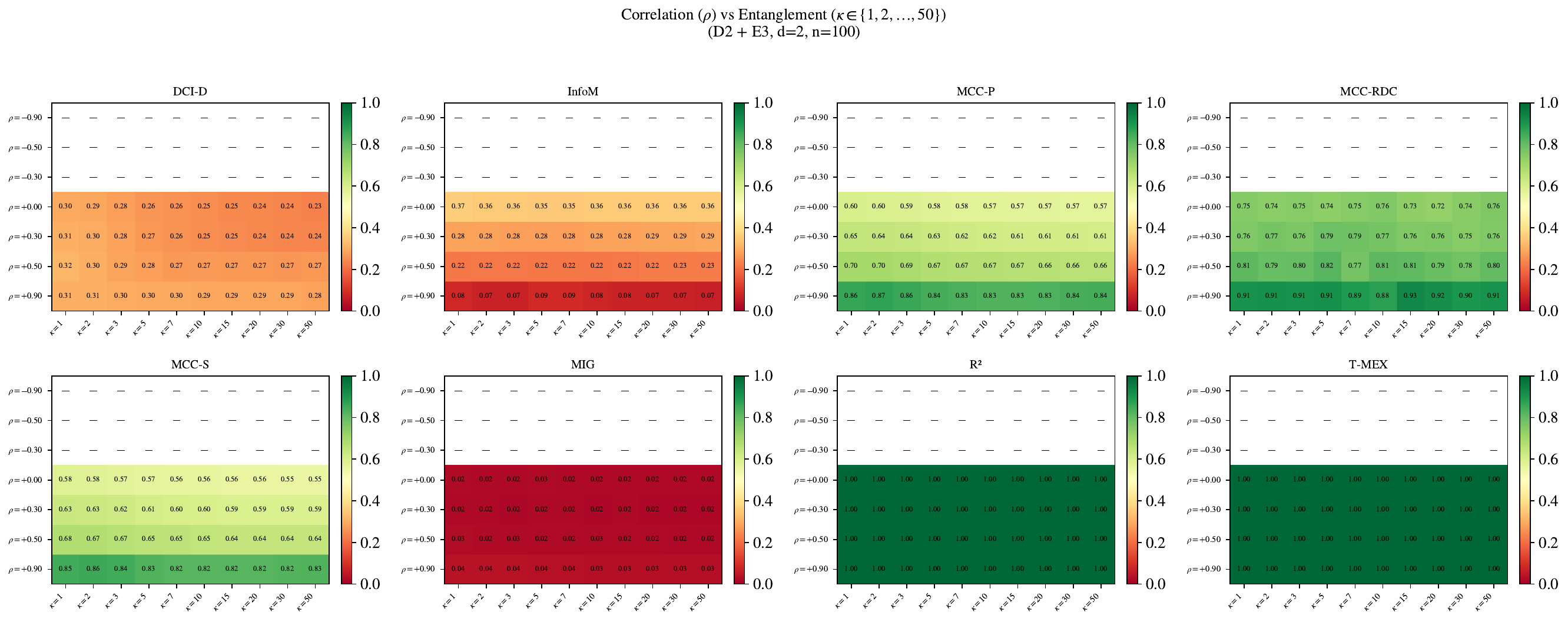}
    \caption{\textbf{Full metric suite: $\rho$--$\kappa$ heatmaps at $d{=}10$.} Extension of \cref{fig:exp04-rho-kappa} to all metrics. An ideal metric is constant along rows (varying~$\rho$ at fixed~$\kappa$); row-wise gradients indicate spurious sensitivity to inter-factor correlation. $d{=}10$, $n{=}1000$.}
    \label{fig:apx-exp04-apx}
\end{figure}

\begin{figure}
    \centering
    \includegraphics[width=\linewidth]{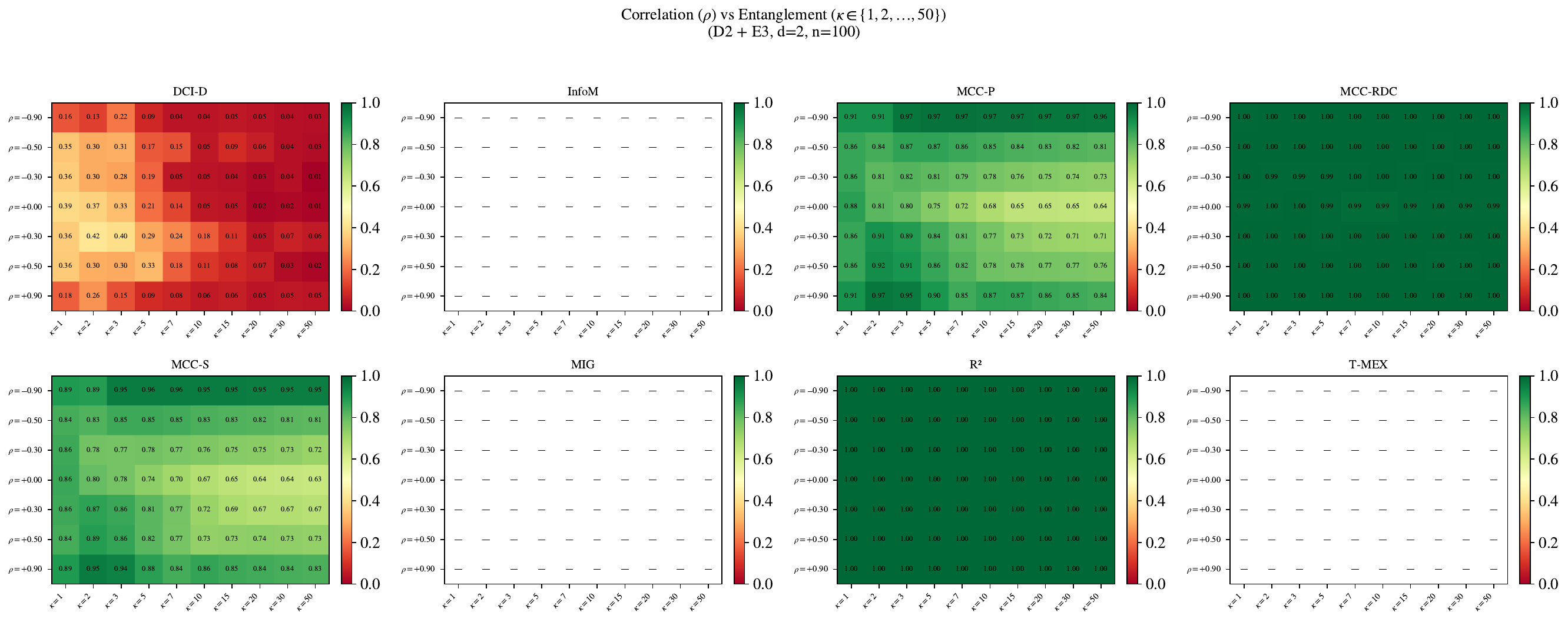}
    \caption{\textbf{Full metric suite: $\rho$--$\kappa$ heatmaps at $d{=}5$.} Extension of \cref{fig:apx-exp04-small} to all metrics. Same layout as \cref{fig:apx-exp04-apx}; qualitative conclusions carry over from $d{=}10$ to $d{=}5$. $n{=}1000$.}
    \label{fig:apx-exp04-small-apx}
\end{figure}

\begin{figure}
    \centering
    \includegraphics[width=\linewidth]{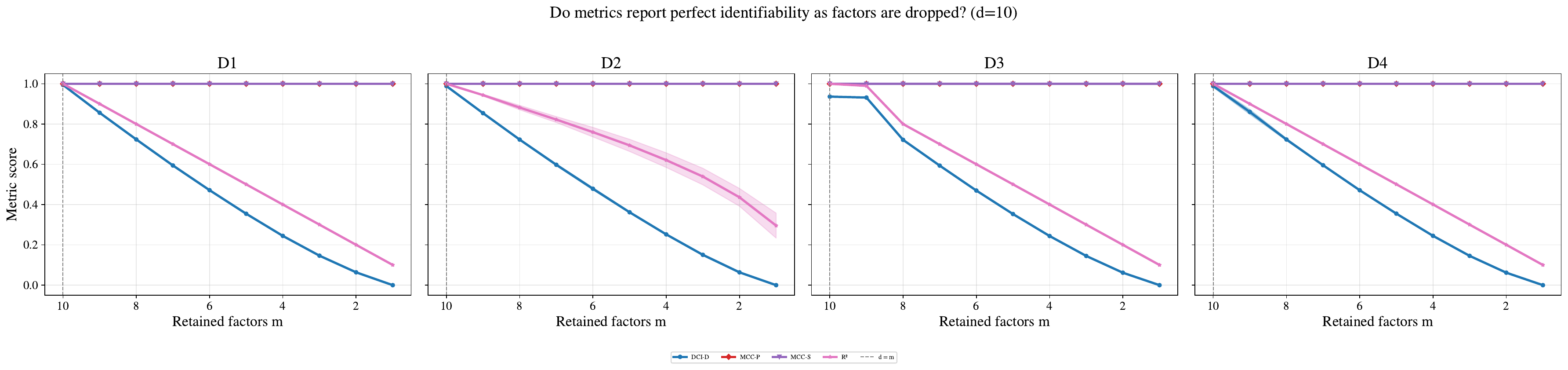}
    \caption{\textbf{Metric scores as a function of the number of retained factors across all DGP types.} Extension of \cref{fig:dropping_d1_d3} to \dcrl{2} and \dbeyond{4}. Under \dcrl{2}, \rsq{} exceeds $m/d$ because the probe partially predicts dropped factors from correlated retained ones. Under \dbeyond{4} ($z_k = g(z_i, z_j)$, $d_{\mathrm{eff}} = d{-}1$), metric behaviour is indistinguishable from \dcrl{1} despite the first omission being lossless: no metric detects the multi-factor redundancy. \mcc{}-P/S remain at $1.0$ across all panels. $d{=}10$, $n{=}1000$.}
    \label{fig:apx-exp06-all}
\end{figure}

\begin{figure}
    \centering
    \includegraphics[width=\linewidth]{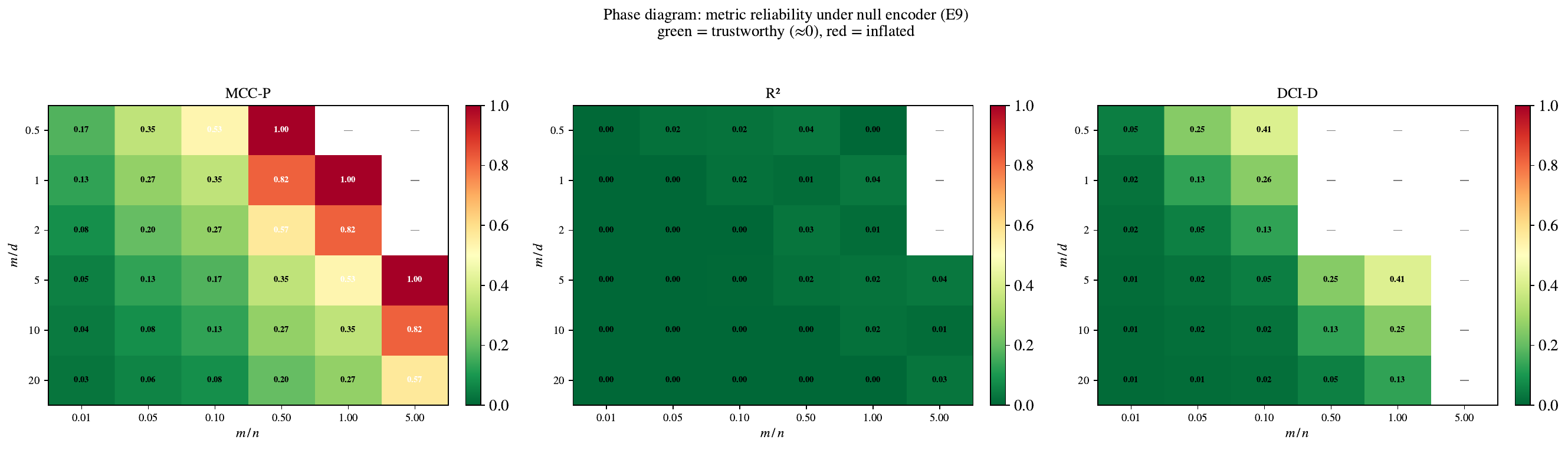}
    \caption{\textbf{False positives under a Gaussian null encoder.} Same layout as \cref{fig:exp15-null} (uniform null) but with $\hat{\rvz} \sim \cN(\mathbf{0}, \rmI_m)$. The false-positive pattern is nearly identical: \mcc{}-P/S scores are governed by $m/n$, not $m/d$, confirming that the inflation is independent of the null distribution.}
    \label{fig:apx-exp15-gauss}
\end{figure}

\begin{figure}[t]
    \centering
    \includegraphics[width=0.5\linewidth]{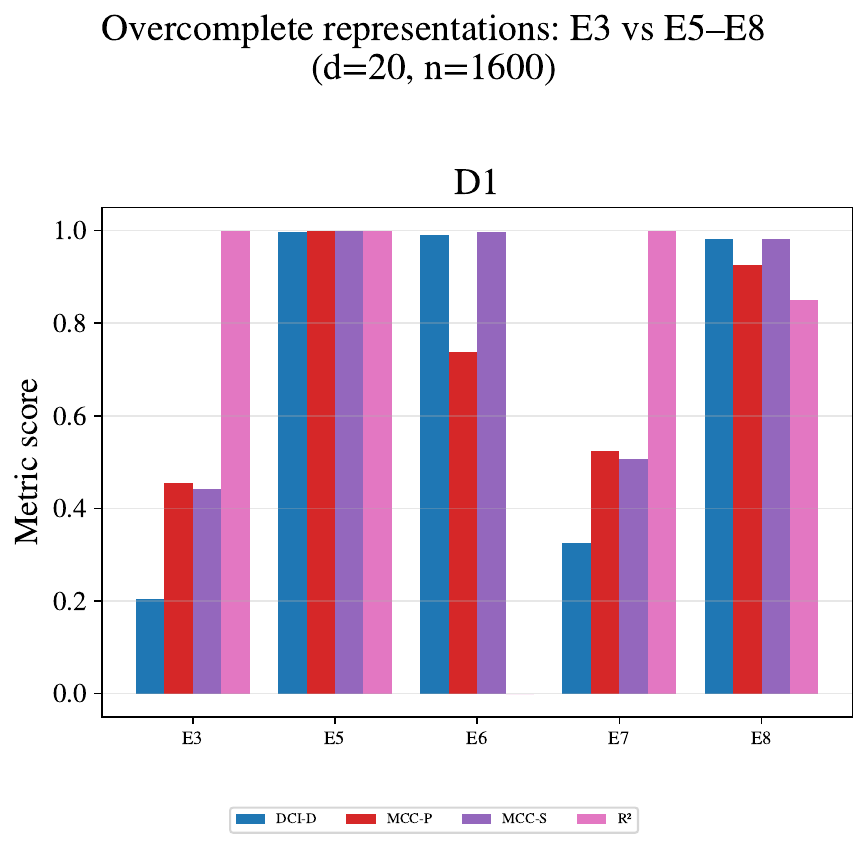}
    \caption{\textbf{At moderate overcompleteness ($m/d{=}2$),a
    metrics distinguish entanglement from redundancy.} Overcomplete disentangled encoders (\ebeyond{5}{mo}--\ebeyond{8}{mo}) score near $1.0$ on
    \dci{}-D, \mcc{}, and \rsq{}, whereas the entangled encoder \ebeyond{7}{mo} is correctly penalised by \dci{}-D and \mcc{}. $d{=}20$, $n{=}1600$.
    See \cref{fig:apx-exp09-d5} for all DGPs.}
    \label{fig:apx-exp09-bars}
\end{figure}

\begin{figure}
    \centering
    \includegraphics[width=\linewidth]{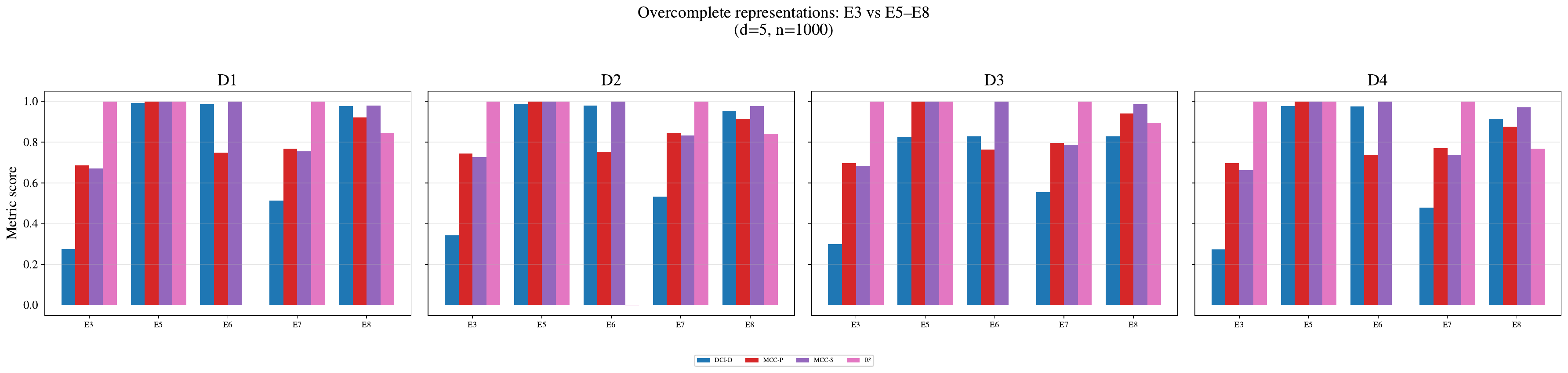}
    \caption{\textbf{Overcomplete encoders across all DGP types ($d{=}5$, $n{=}1000$).} Extension of \cref{fig:apx-exp09-bars} from $d{=}20$ to $d{=}5$. At this smaller~$d$, \dci{}-D and \mcc{} still separate disentangled overcomplete encoders (\ebeyond{5}{mo}--\ebeyond{8}{mo}) from the entangled baseline \ebeyond{7}{om}, though the gap is narrower than at $d{=}20$.}
    \label{fig:apx-exp09-d5}
\end{figure}


\begin{figure}
    \centering
    \includegraphics[width=\linewidth]{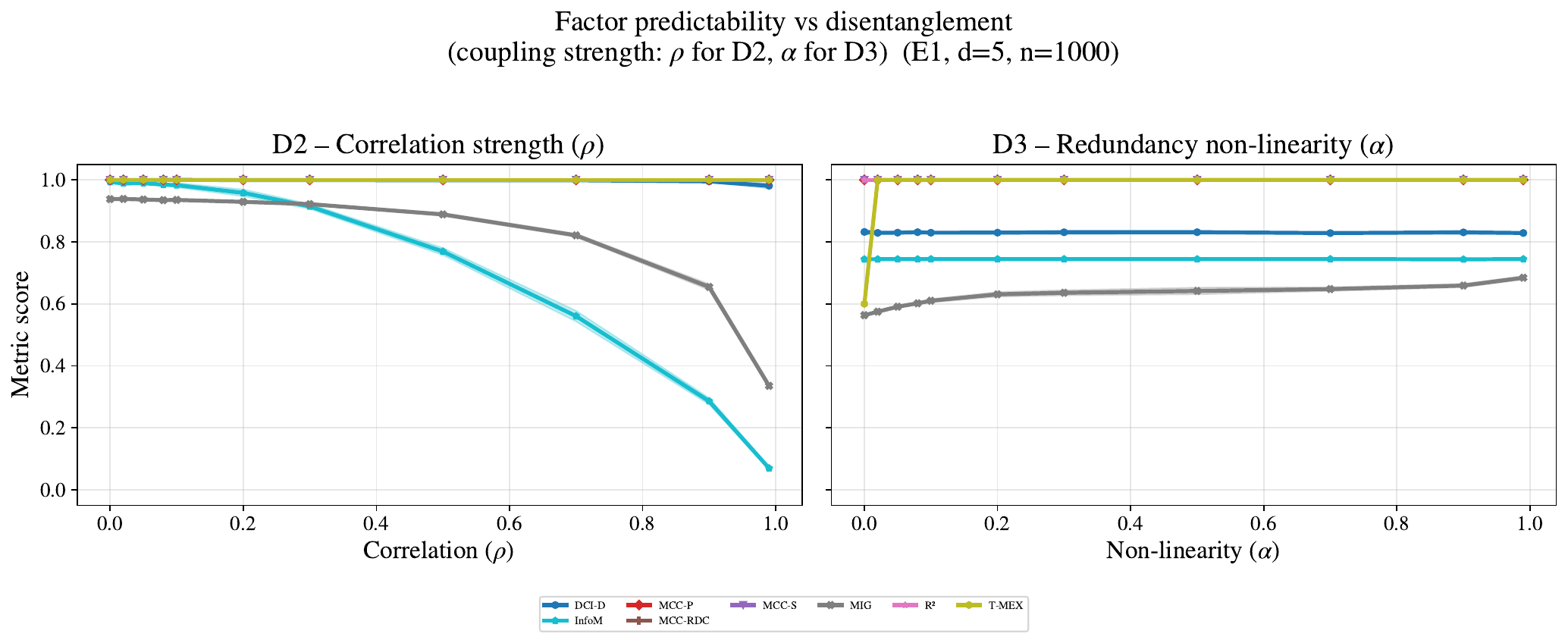}
    \caption{\textbf{Metrics conflate factor predictability with disentanglement under functional dependencies.} Full metric suite under \ecrl{1}{oo} comparing \dcrl{2} (varying~$\rho$) with \dbeyond{3} (deterministic constraint $z_2 = f(z_1)$). Under \dbeyond{3}, regression-based metrics (\rsq{}, \dci{}-D) penalise the encoder when the dependent factor is harder to predict from the retained code, despite perfect elementwise recovery. $d{=}5$, $n{=}1000$. See \cref{fig:apx-exp05-d10} for $d{=}10$.}
    \label{fig:apx-exp05}
\end{figure}
\begin{figure}
    \centering
    \includegraphics[width=\linewidth]{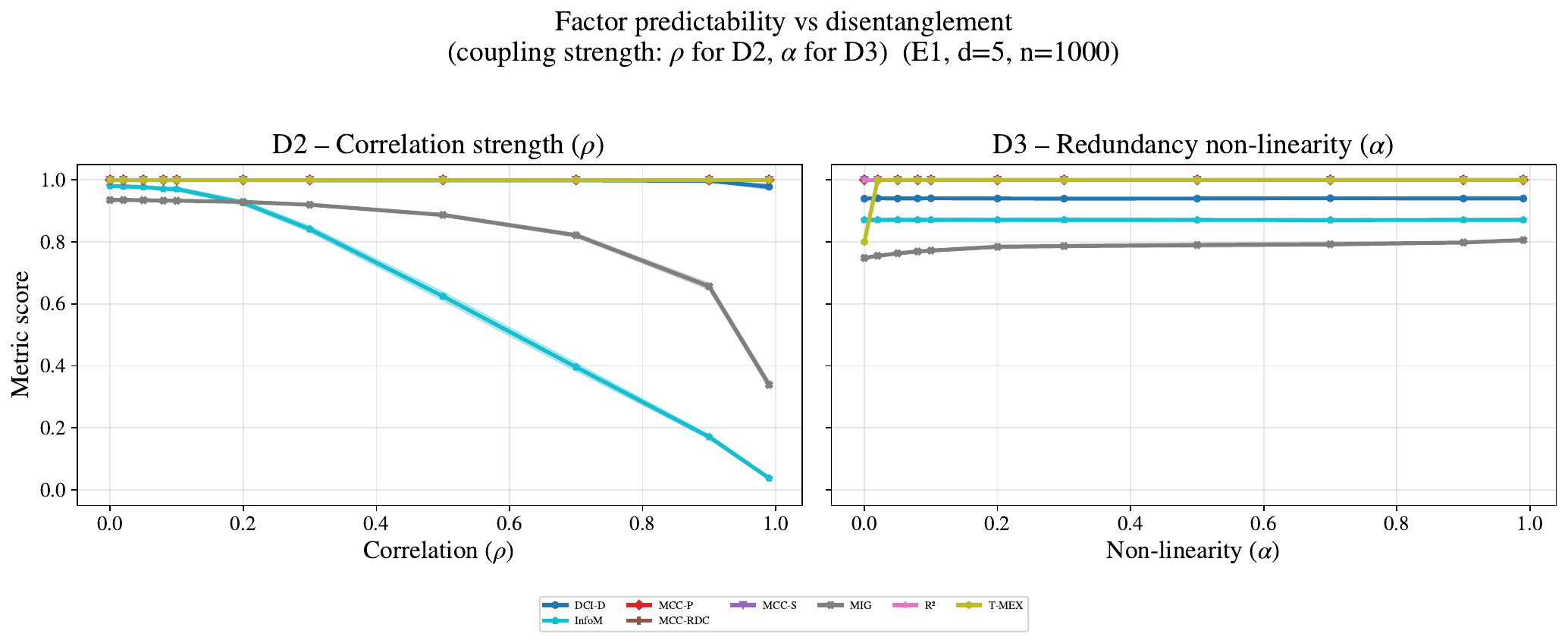}
    \caption{\textbf{Predictability vs.\ disentanglement at $d{=}10$.} Same setup as \cref{fig:apx-exp05} with $d{=}10$ factors. The conflation between factor predictability and measured disentanglement persists at higher dimensionality.}
    \label{fig:apx-exp05-d10}
\end{figure}


\begin{figure}
    \centering
    \includegraphics[width=\linewidth]{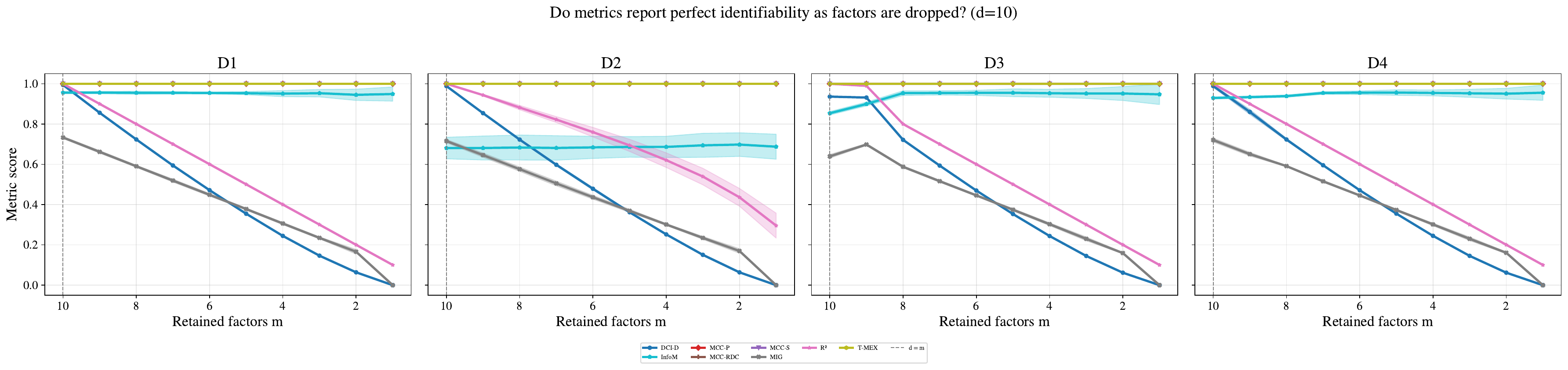}
    \caption{\textbf{Full metric suite for the dropped-factor experiment across all DGP types.} Extension of \cref{fig:apx-exp06-all} with additional metrics (MIG, InfoMEC, \mcc{}-RDC, T-MEX). MI-based metrics decline with fewer retained factors even under \dbeyond{3} at $m = d_{\mathrm{eff}}$, failing to recognise lossless compression of the redundant factor.}
    \label{fig:apx-exp06-drapped-all}
\end{figure}

\begin{figure}
    \centering
    \includegraphics[width=0.5\linewidth]{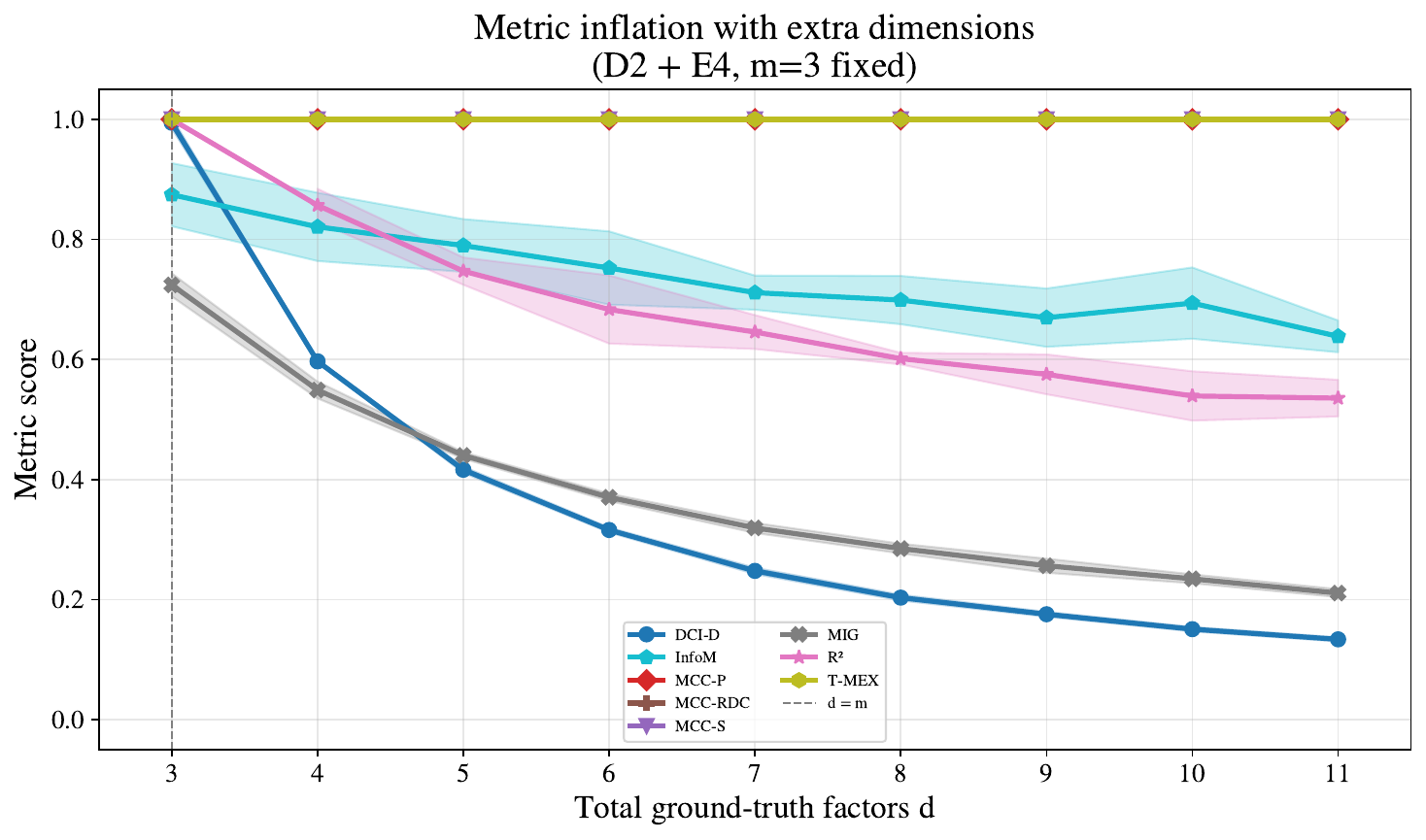}
    \caption{\textbf{Effect of inflating the number of ground-truth factors under \dcrl{2}.} The representation dimension is fixed at $m{=}3$ while $d$ increases by adding duplicated ground-truth factors. \mcc{}-P/S remain constant as they match only $m$ codes; \rsq{} and \dci{}-D decline because the probe must predict an increasing number of factors from the same $m$ codes.}
    \label{fig:apx-exp06-inflate-D2}
\end{figure}

\begin{figure}
    \centering
    \includegraphics[width=\linewidth]{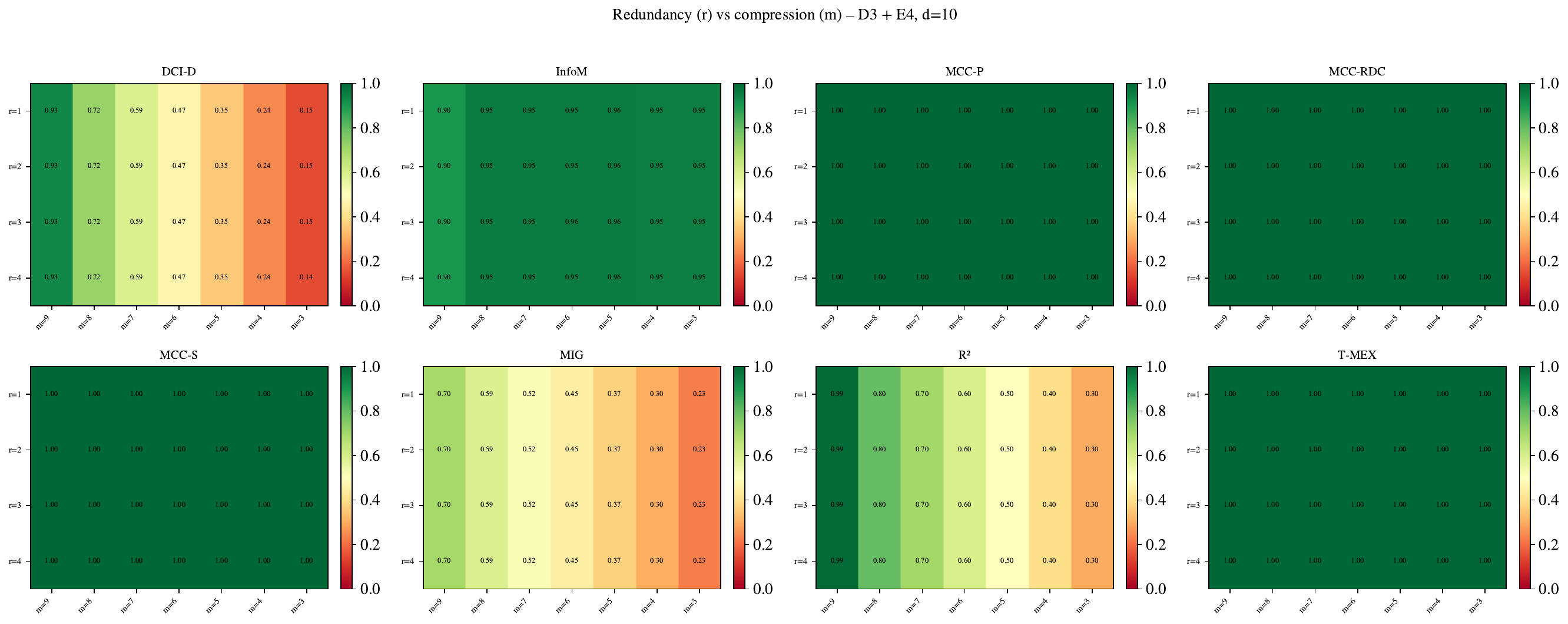}
    \caption{\textbf{Redundancy and compression under \dbeyond{3}.} The encoder compresses $d$ factors (with a single-factor constraint $z_2 = f(z_1)$, $d_{\mathrm{eff}} = d{-}1$) into $m \leq d$ codes. \rsq{} and \dci{}-D plateau near $1.0$ at $m = d_{\mathrm{eff}}$, correctly recognising that the omitted factor carries no independent information. \mcc{}-P/S report $1.0$ at all compression levels, unable to distinguish lossless from lossy omission.}
    \label{fig:apx-exp07-D3}
\end{figure}

\begin{figure}
    \centering
    \includegraphics[width=\linewidth]{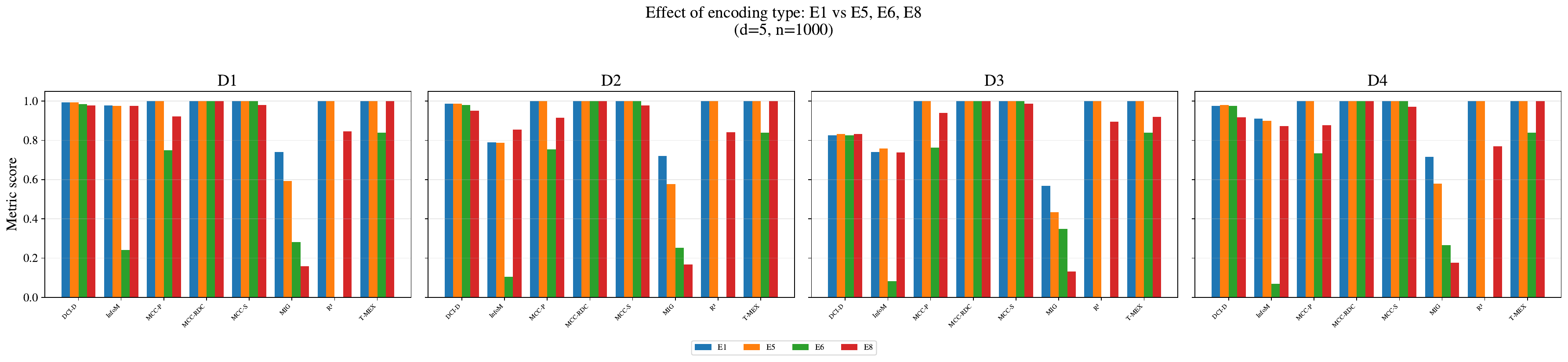}
    \caption{\textbf{Full metric suite: scores across encoding types and DGP types.} Each panel compares matched-dimension encoders (\ecrl{1}{oo}--\ecrl{3}{om}) with overcomplete encoders (\ebeyond{5}{mo}--\ebeyond{8}{mo}) under \dcrl{1}--\dbeyond{4}. MI-based metrics (MIG, InfoMEC) and T-MEX are included alongside the main metrics. $d{=}5$, $n{=}1000$.}
    \label{fig:apx-exp08}
\end{figure}

\begin{figure}
    \centering
    \includegraphics[width=\linewidth]{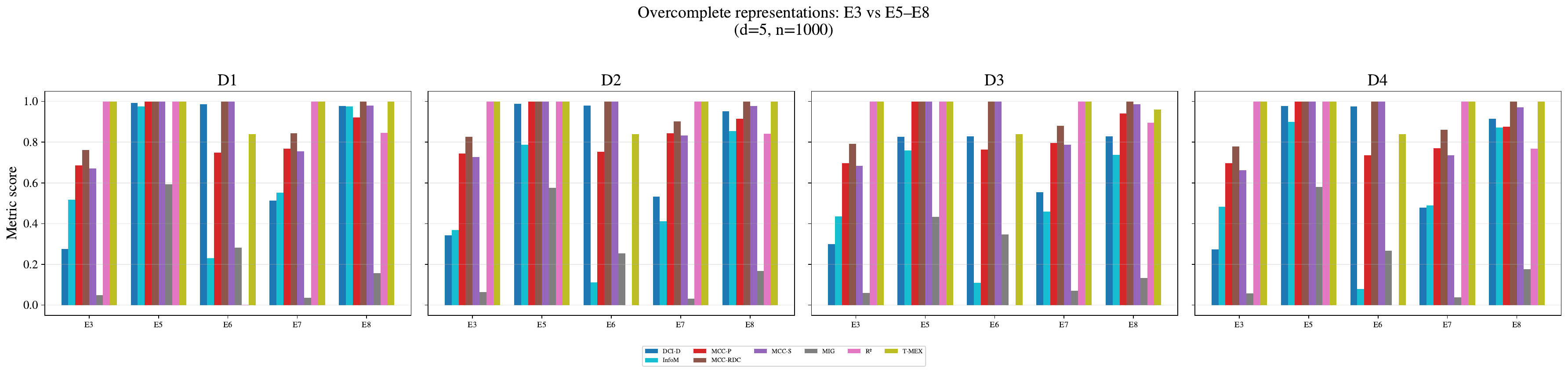}
    \caption{\textbf{Full metric suite for the overcomplete $m/d$ sweep.} Extension of \cref{fig:overcomplete_sweep} to all metrics (MIG, InfoMEC, \mcc{}-RDC, T-MEX). MI-based metrics decline for distributed codes (\ebeyond{8}{mo}) similarly to \mcc{}, while T-MEX is more robust to overcompleteness. $d{=}5$, $n{=}1000$.}
    \label{fig:apx-exp09-overcomplete-all}
\end{figure}
\begin{figure}
    \centering
    \includegraphics[width=\linewidth]{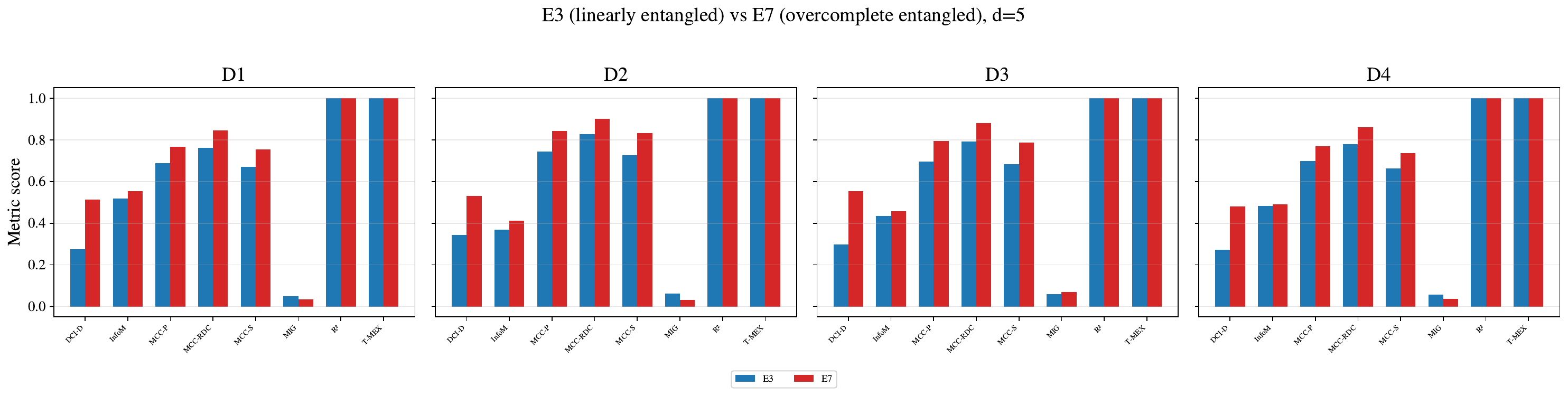}
    \caption{\textbf{Matched-dimension entangled (\ecrl{3}{om}) versus overcomplete entangled (\ebeyond{7}{om}).} Full metric suite comparing the two entangled geometries as $m/d$ increases. \dci{}-D inflates for \ebeyond{7}{om} at high $m/d$, scoring substantially above the matched-dimension baseline \ecrl{3}{om} despite equivalent identifiability status. $d{=}5$, $n{=}1000$.}
    \label{fig:apx-exp09-e3e7-all}
\end{figure}

\begin{figure}
    \centering
    \includegraphics[width=\linewidth]{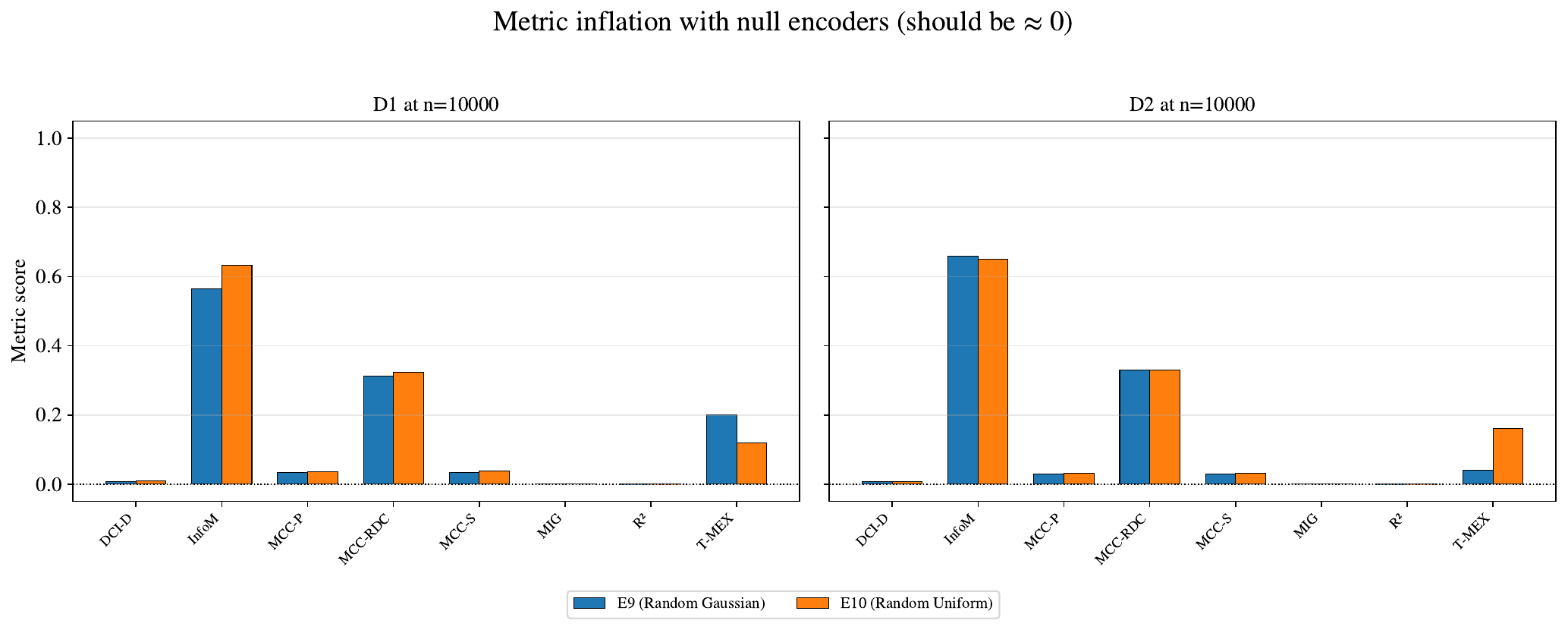}
    \caption{\textbf{Metric inflation under null encoders.} Full metric suite showing scores when the representation is independent of $\rvz$. \mcc{}-RDC exhibits persistent inflation that does not vanish with increasing~$n$. MI-based metrics (MIG, InfoMEC) also return non-trivial scores. \rsq{} remains closest to the expected value of~$0$. $d{=}5$, $n{=}1000$.}
    \label{fig:apx-exp11-inflation-bar}
\end{figure}
\begin{figure}
    \centering
    \includegraphics[width=0.75\linewidth]{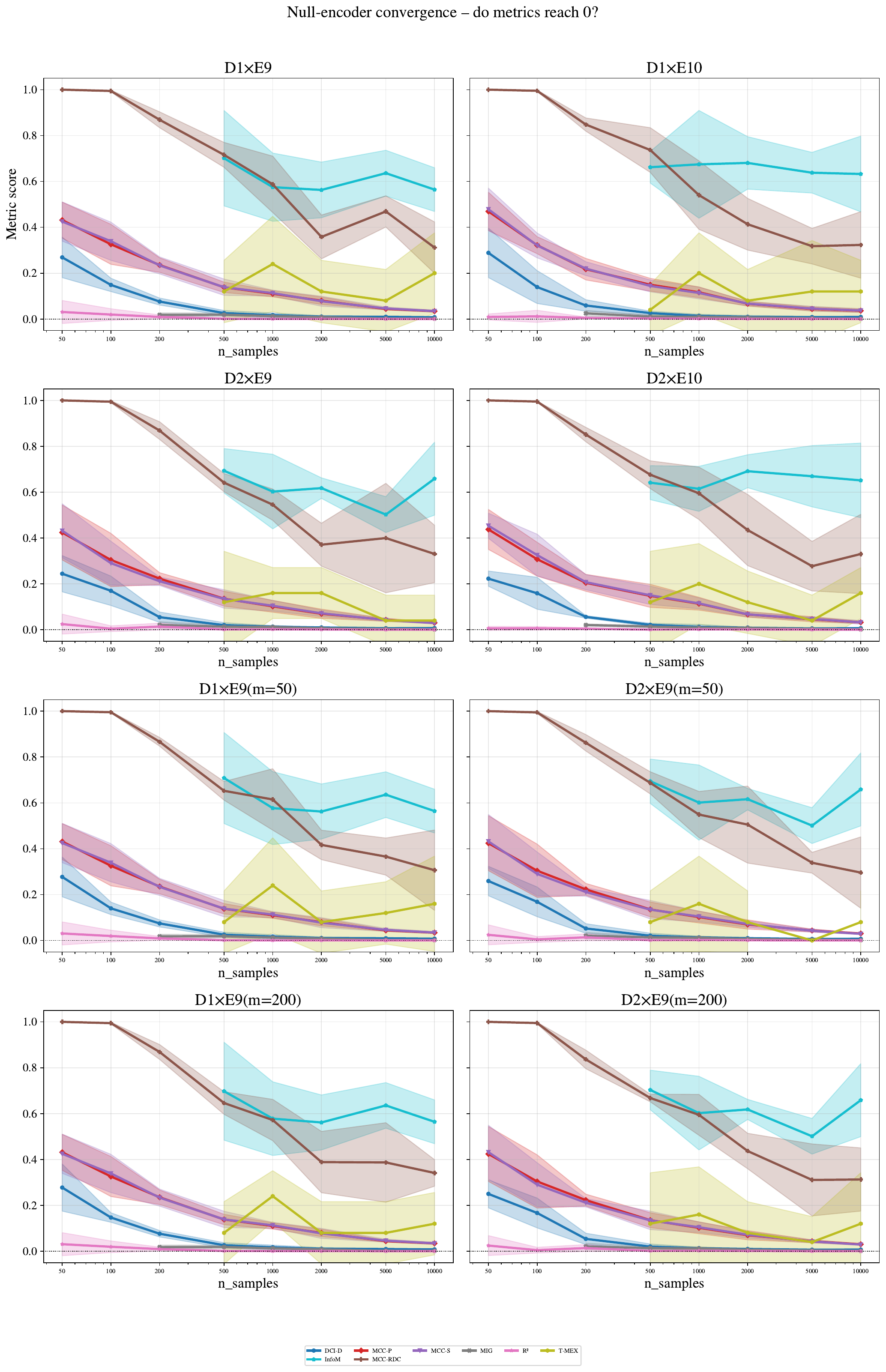}
    \caption{\textbf{Convergence of null-encoder scores with increasing $n$.} Each panel shows one metric under \rand{9}; scores should converge to~$0$ as $n$ grows. \rsq{} converges fastest. \mcc{}-P/S retain elevated scores at large~$n$ when $m$ is large, consistent with the $\sqrt{2\log m / n}$ floor (\cref{apx:mcc-false-positive}). $d{=}5$.}
    \label{fig:apx-exp11-convergence-grid}
\end{figure}

\begin{figure}
    \centering
    \includegraphics[width=\linewidth]{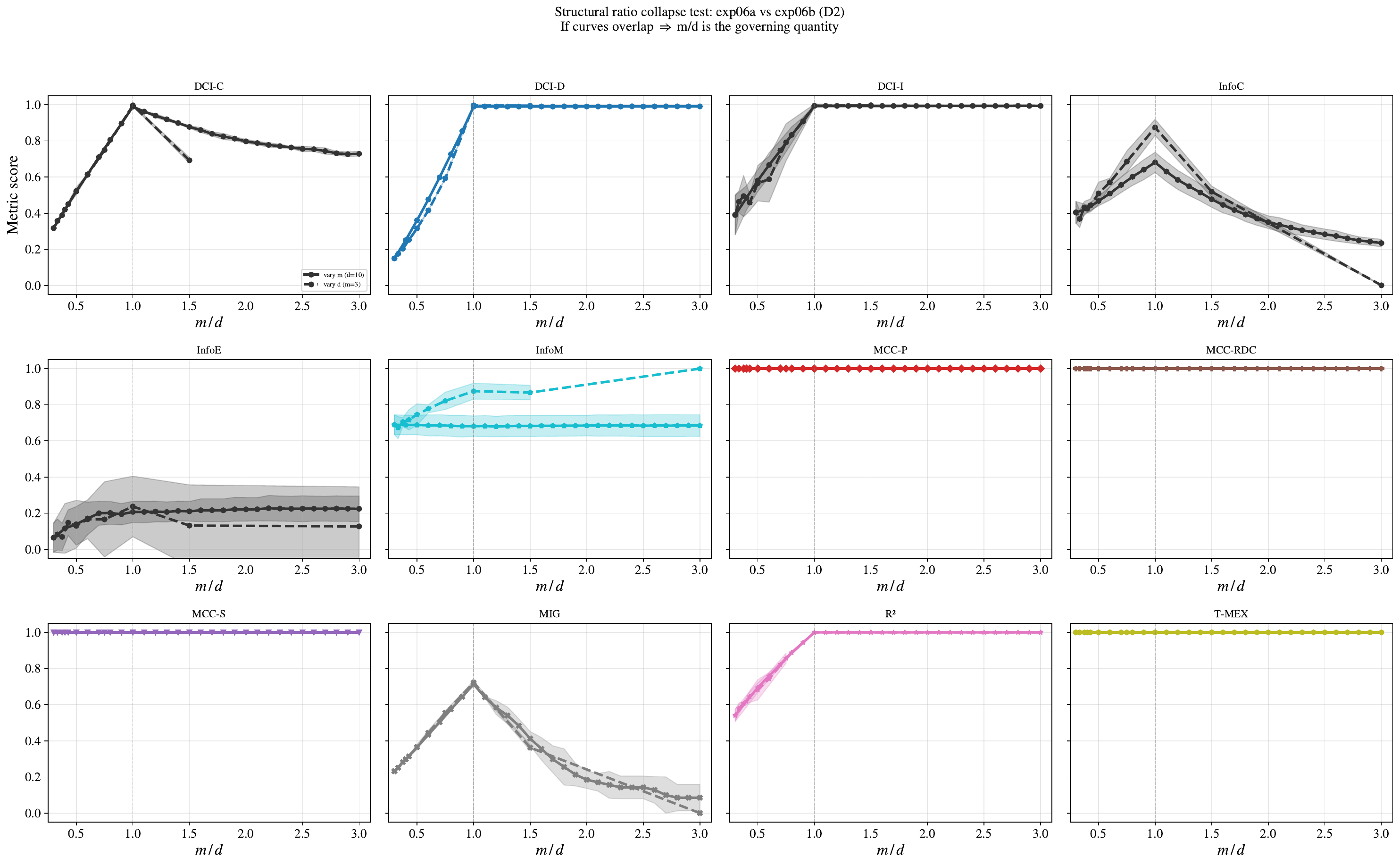}
    \caption{\textbf{Metric scores across encoder types under correlated factors (\dcrl{2}).} Overlay of all metrics for varying $\rho$ under \dcrl{2}. \mcc{}-P/S increase with $|\rho|$ under entangled encoders (\ecrl{3}{om}), while \rsq{} and \dci{}-D are less affected by the correlation structure. $d{=}5$, $n{=}1000$.}
    \label{fig:apx-exp12-overlay}
\end{figure}

\begin{figure}
    \centering
    \includegraphics[width=\linewidth]{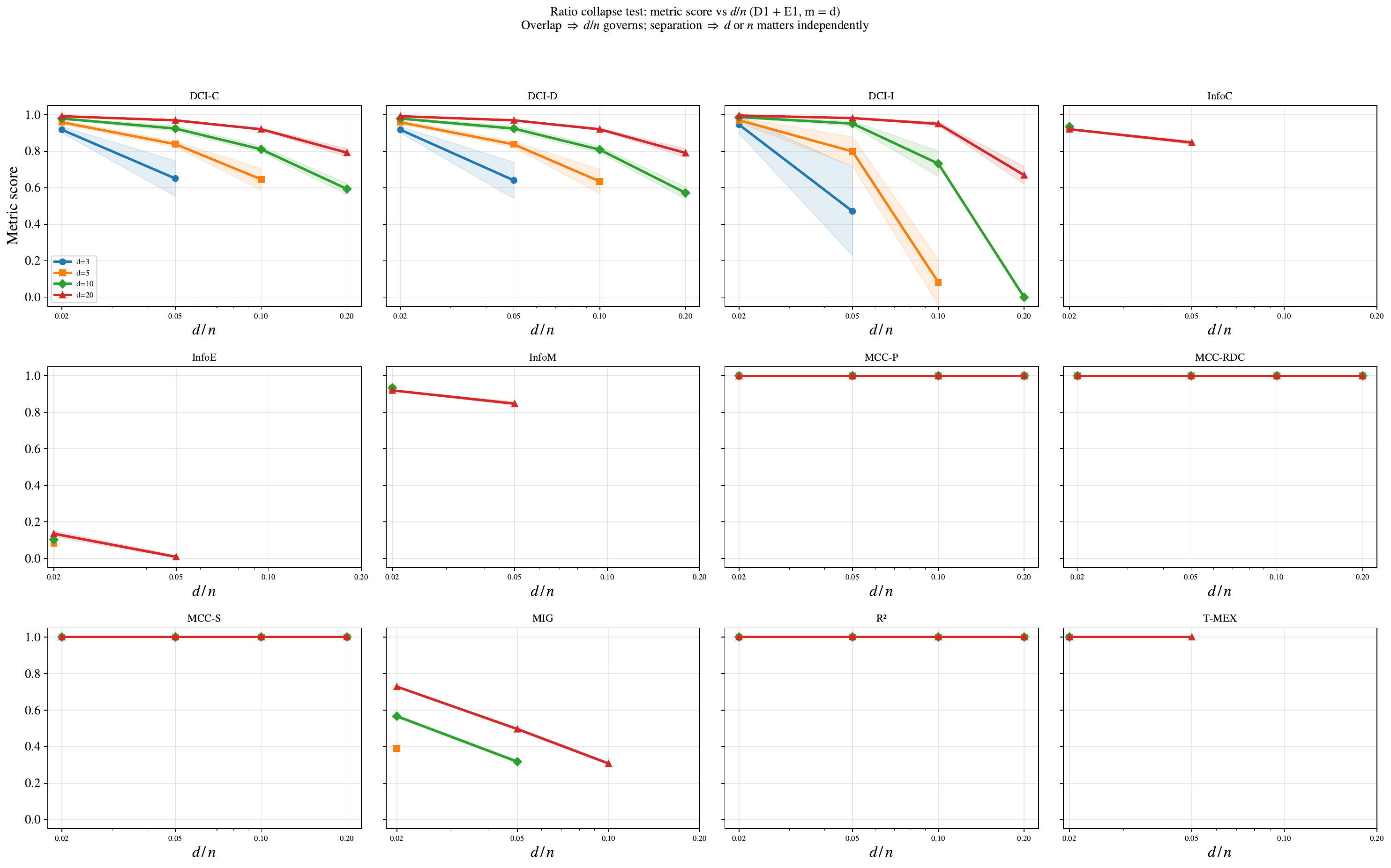}
    \caption{\textbf{Metric scores as a function of $m/d$ under different $(m, d)$ configurations.} Each panel shows one metric; overlapping curves from different $(m, d)$ pairs with the same ratio confirm that $m/d$, not $m$ or $d$ individually, governs metric behaviour in the undercomplete regime. \mcc{}-P/S report~$1.0$ regardless of $m/d$; \rsq{} and \dci{}-D increase approximately linearly. $n{=}1000$.}
    \label{fig:apx-exp13-ratio-collapse-grid}
\end{figure}
\begin{figure}
    \centering
    \includegraphics[width=\linewidth]{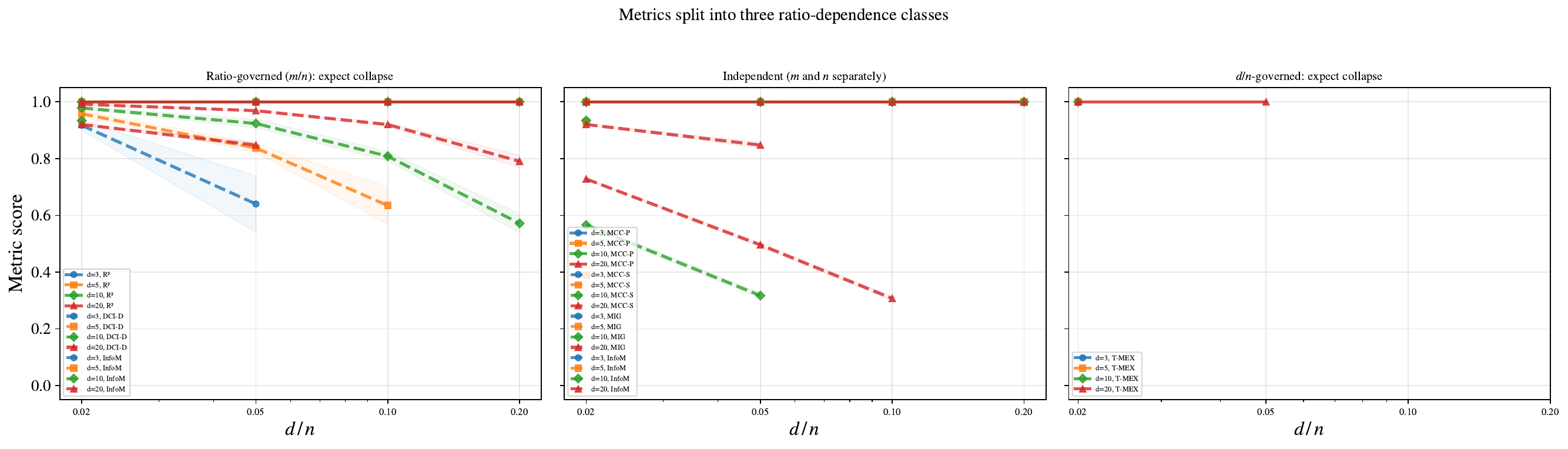}
    \caption{\textbf{Ratio-collapse analysis grouped by encoder type.} Same data as \cref{fig:apx-exp13-ratio-collapse-grid}, reorganised by encoder geometry. Curves from sweeps over $m$ (fixed $d$) and over $d$ (fixed $m$) overlap at matched $m/d$, confirming the ratio as the governing quantity across encoder types.}
    \label{fig:apx-exp13-ratio-collapse-groups}
\end{figure}

\begin{figure}
    \centering
    \includegraphics[width=\linewidth]{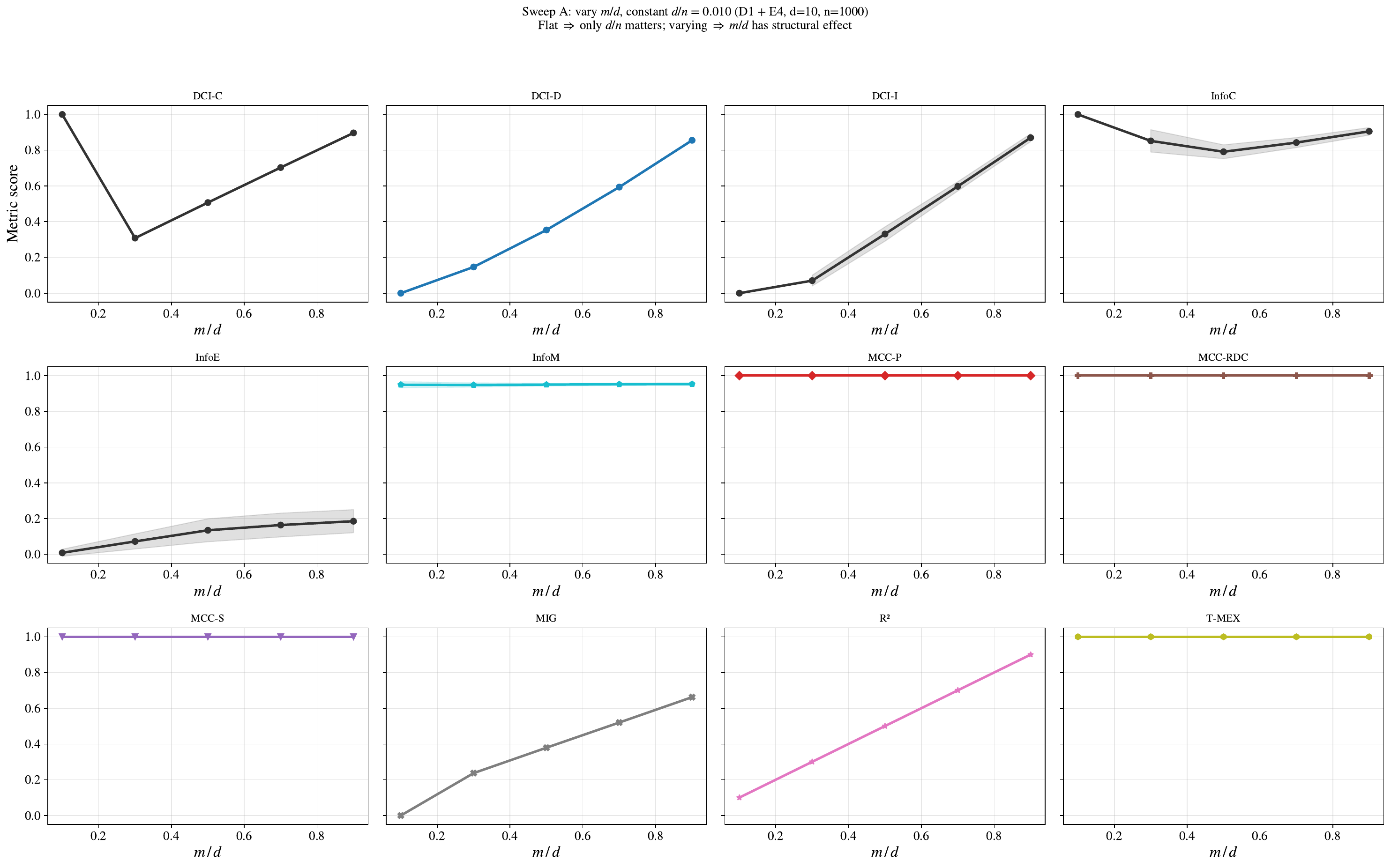}
    \caption{\textbf{Comprehensive parameter sweep (part~A).} Full metric suite across DGP types and encoder geometries, sweeping scaling and complexity parameters. Extends the targeted analyses of \cref{subsec:correlation,subsec:undercomplete,subsec:overcomplete,subsec:false-positive} to a broader parameter range. $d{=}5$, $n{=}1000$.}
    \label{fig:apx-exp14-sweep-a}
\end{figure}
\begin{figure}
    \centering
    \includegraphics[width=\linewidth]{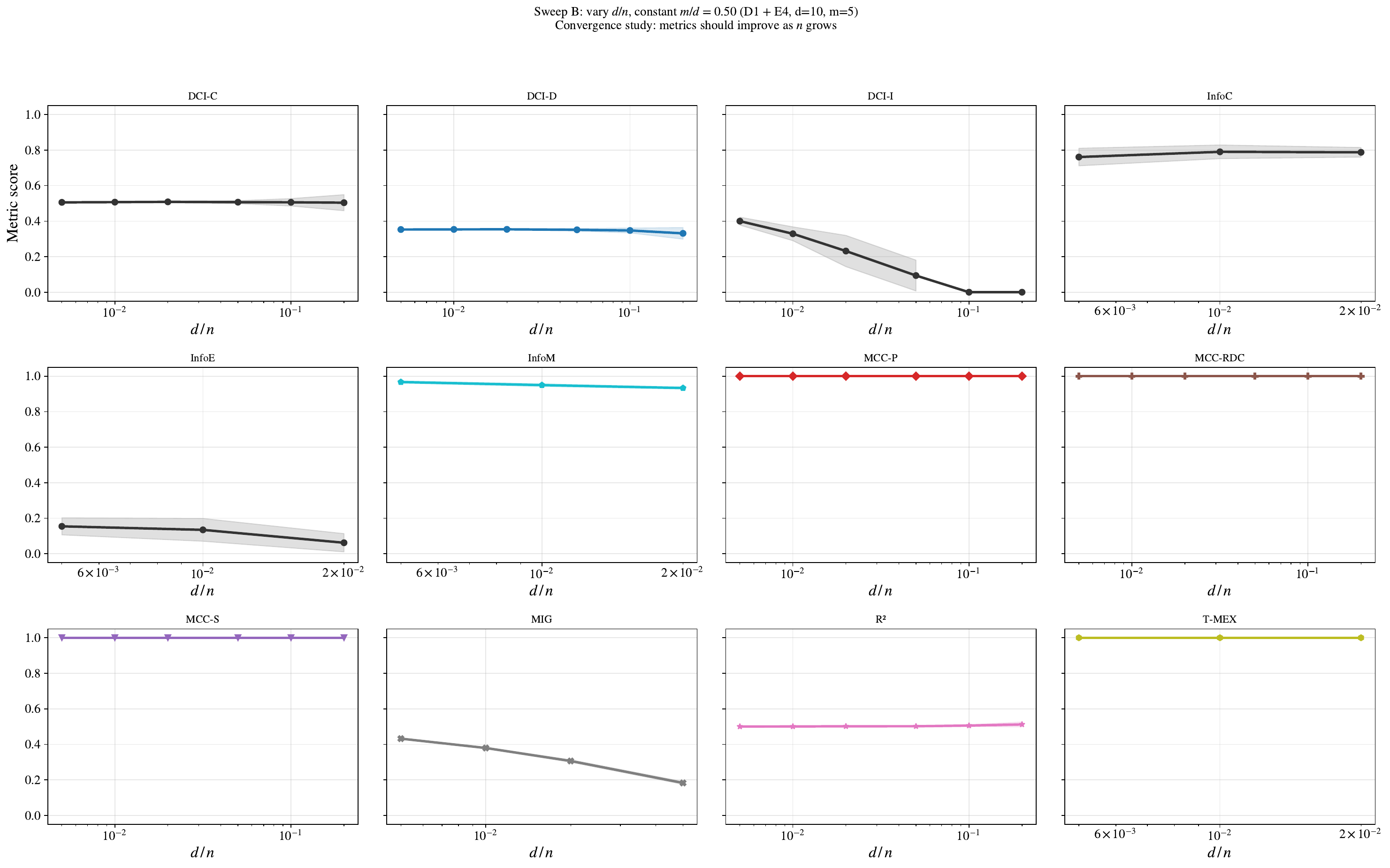}
    \caption{\textbf{Comprehensive parameter sweep (part~B).} Continuation of \cref{fig:apx-exp14-sweep-a} for additional parameter configurations and encoder--DGP combinations.}
    \label{fig:apx-exp14-sweep-b}
\end{figure}

\begin{figure}
    \centering
    \includegraphics[width=\linewidth]{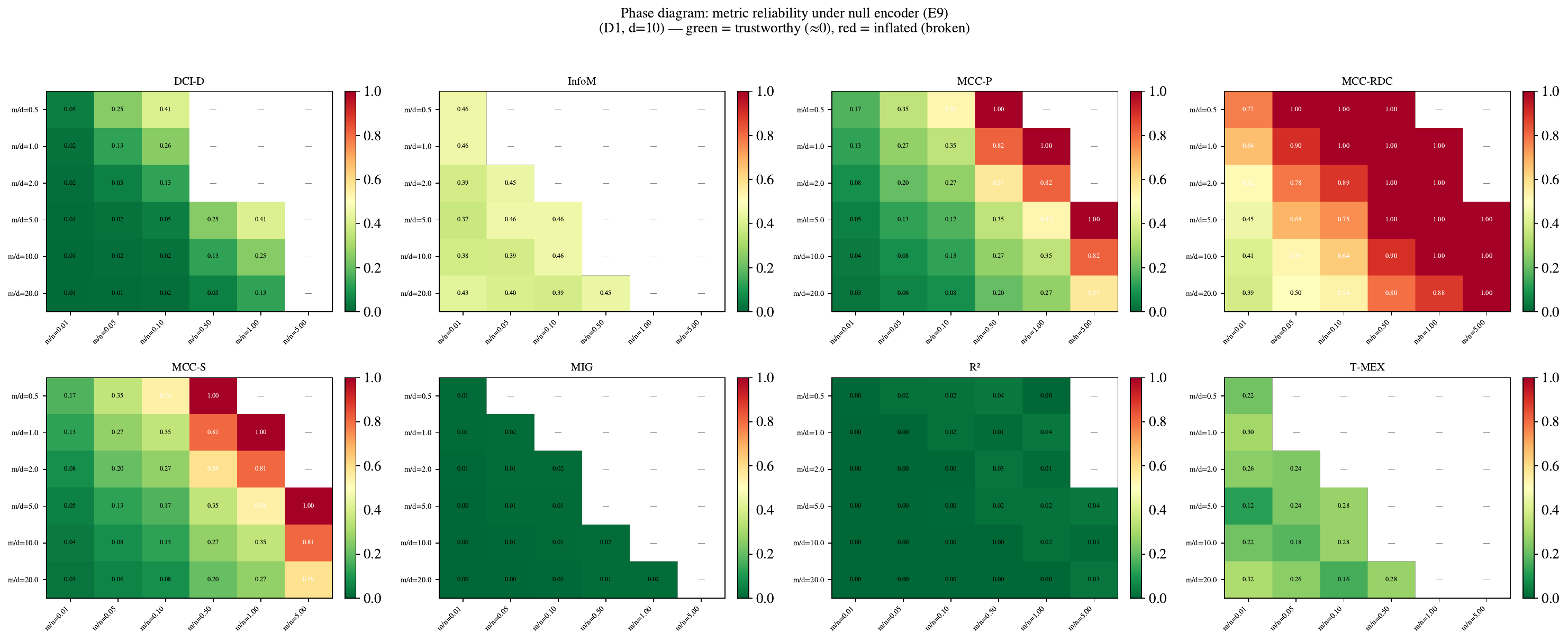}
    \caption{\textbf{Full metric suite: false-positive phase diagram under Gaussian null.} Extension of \cref{fig:apx-exp15-gauss} to all metrics. \mcc{}-RDC shows the highest inflation across the $(m/d,\, m/n)$ grid. MI-based metrics (MIG, InfoMEC) also inflate at moderate $m/n$.}
    \label{fig:apx-exp15-e9}
\end{figure}
\begin{figure}
    \centering
    \includegraphics[width=\linewidth]{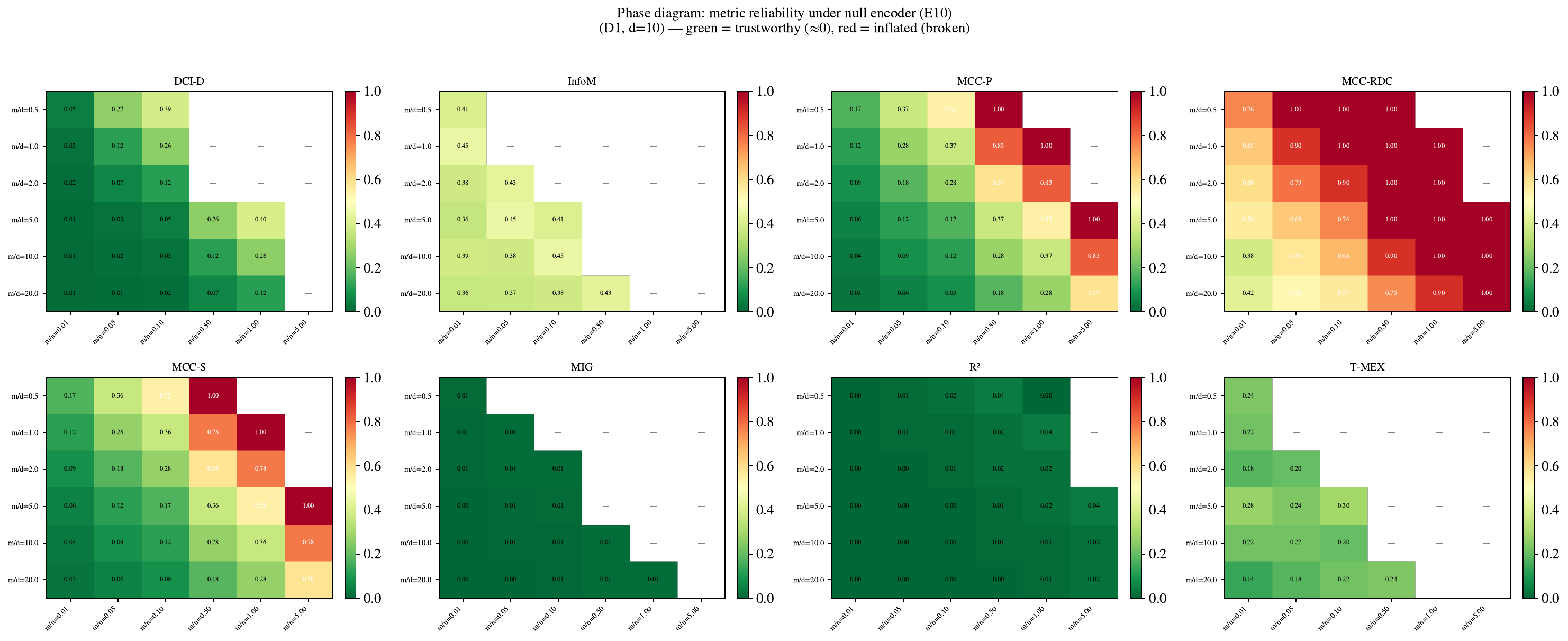}
    \caption{\textbf{Full metric suite: false-positive phase diagram under uniform null.} Extension of \cref{fig:exp15-null} to all metrics. The pattern closely mirrors the Gaussian null (\cref{fig:apx-exp15-e9}), confirming that the false-positive floor is distribution-agnostic and governed by $m/n$.}
    \label{fig:apx-exp15-e10}
\end{figure}

\end{document}